\newtheorem{assumption}{Assumption}
\newcommand*\bigcdot{\mathpalette\bigcdot@{.5}}
\newcommand*\bigcdot@[2]{\mathbin{\vcenter{\hbox{\scalebox{#2}{$\m@th#1\bullet$}}}}}
\newcommand{\vertiii}[1]{{\left\vert\kern-0.25ex\left\vert\kern-0.25ex\left\vert #1 
    \right\vert\kern-0.25ex\right\vert\kern-0.25ex\right\vert}}
\newcommand{\scalp}[1]{ \left\langle #1\right\rangle} %
\newcommand{\bNorm}[1]{\left\Vert #1\right\Vert} 
\newcommand{\bPc}[1]{\ensuremath{\left\{#1 \right\}}} 
\newcommand{\bPe}[1]{\ensuremath{\left[#1 \right]}} 
\newcommand{\bPr}[1]{\ensuremath{\left(#1 \right)}} 
  \newcommand{\Thstar}{{\Theta^\star}}  
\newcommand{\IE}{{\mathbb{E}}}  
\newcommand{\IP}{{\mathbb{P}}}  \newcommand{\IR}{{\mathbb{R}}}
 \def\D{\mathcal{D}}
\def\J{\mathcal{J}}\newcommand{\Lc}{\mathcal{L}}
\def\M{\mathcal{M}}\def\P{\mathcal{P}}
\def\S{\mathcal{S}}\def\V{\mathcal{V}}\def\X{\mathcal{X}}\def\Y{\mathcal{Y}}
\newcommand{\ga}{{\gamma}} \newcommand{\ep}{{\varepsilon}}
\newcommand{\la}{{\lambda}}  \newcommand{\kpa}{{\kappa}}    
\let\phi\varphi \newcommand\vphi\varphi
\newcommand{\tand}{\text{ and } }
\newcommand{\subjto}{\text{s.t.} }
\let\emptyset\varnothing
\newcommand{\oneVec}{\mathbbm{1}}
\def\gdw{\Longleftrightarrow}
\def\to{\rightarrow}
\DeclareMathOperator{\supp}{supp}
\DeclareMathOperator{\Id}{Id}
\DeclareMathOperator{\sign}{sign}
\DeclareMathOperator{\rank}{rank}
\DeclareMathOperator{\coh}{coh}
\DeclareMathOperator{\trace}{tr}
\DeclareMathOperator{\tr}{tr}
\DeclareMathOperator{\argmin}{argmin}
\def\wrt{w.r.t.~}
\newcommand{\ganorm}[1]{\left\Vert #1\right\Vert_\gamma} 
\begin{document}

\title{Ising Models with Latent Conditional Gaussian Variables}

\author{\name Frank Nussbaum \email frank.nussbaum@uni-jena.de \\
  \addr Institut f\"ur Informatik\\
  Friedrich-Schiller-Universit\"at Jena\\
  Germany
  \AND
  \name Joachim Giesen \email joachim.giesen@uni-jena.de \\
  \addr Institut f\"ur Informatik\\
  Friedrich-Schiller-Universit\"at Jena\\
  Germany}

\editor{Satyen Kale and Aur\'{e}lien Garivier}

\maketitle

\begin{abstract}
  Ising models describe the joint probability distribution of a vector
  of binary feature variables. Typically, not all the variables
  interact with each other and one is interested in learning the
  presumably sparse network structure of the interacting
  variables. However, in the presence of latent variables, the
  conventional method of learning a sparse model might fail. This is
  because the latent variables induce indirect interactions of the
  observed variables.  In the case of only a few latent conditional
  Gaussian variables these spurious interactions contribute an
  additional low-rank component to the interaction parameters of the
  observed Ising model.  Therefore, we propose to learn a sparse +
  low-rank decomposition of the parameters of an Ising model using a
  convex regularized likelihood problem.  We show that the same
  problem can be obtained as the dual of a maximum-entropy problem
  with a new type of relaxation, where the sample means collectively
  need to match the expected values only up to a given tolerance. The
  solution to the convex optimization problem has consistency
  properties in the high-dimensional setting, where the number of
  observed binary variables and the number of latent conditional
  Gaussian variables are allowed to grow with the number of training
  samples.
\end{abstract}

\begin{keywords}
Ising Models, Latent Variables, Sparse and Low-Rank Matrices,
Maximum-Entropy Principle, High-Dimensional Consistency
\end{keywords}

\section{Introduction}

The principle of maximum entropy was proposed by
\cite{jaynes1957information} for probability density estimation.  It
states that from the probability densities that represent the current
state of knowledge one should choose the one with the largest entropy,
that is, the one which does not introduce additional biases. The state
of knowledge is often given by sample points from a sample space and
some fixed functions (sufficient statistics) on the sample space. The
knowledge is then encoded naturally in form of constraints on the
probability density by requiring that the expected values of the
functions equal their respective sample means.  Here, we assume the
particularly simple multivariate sample space $\mathcal{X} =
\{0,1\}^d$ and functions
\[
 \varphi_{ij}: x\mapsto x_ix_j\quad \textrm{ for }\: i,j\in
        [d]=\{1,\ldots, d\}.
\]
Suppose we are given sample points
$x^{(1)},\ldots,x^{(n)}\in\mathcal{X}$.  Then formally, for estimating
the distribution from which the sample points are drawn, the principle
of maximum entropy suggests solving the following entropy maximization
problem
\[
  \max_{p\,\in\, \mathcal{P}} \: H(p) \quad \textrm{ s.t. }\:
  \mathbb{E}[\varphi_{ij}] = \frac{1}{n} \sum_{k=1}^n
  \varphi_{ij}(x^{(k)})\: \textrm{ for all } i,j\in [d],
\]
where $\mathcal{P}$ is the set of all probability distributions on
$\mathcal{X}$, the expectation is with respect to the distribution
$p\in\mathcal{P}$, and $H(p)=- \IE[\log p(x)]$ is the entropy.  We
denote the $(d\times d)$-matrix $\big(\frac{1}{n} \sum_{k=1}^n
\varphi_{ij} (x^{(k)})\big)_{i,j\in [d]}$ of sample means compactly by
$\Phi^n$ and the matrix of functions $\big(\varphi_{ij}\big)_{i,j\in
  [d]}$ by $\Phi$. Then, the entropy maximization problem becomes
\[
  \max_{p\,\in\,\mathcal{P}}\: H(p) \quad \textrm{ s.t. }\:
  \mathbb{E}[\Phi] - \Phi^n = 0.
\]
\cite{dudik2004performance} observed that invoking the principle of
maximum entropy tends to overfit when the number of features $d$ is
large. Requiring that the expected values of the functions equal their
respective sample means can be too restrictive. Consequently, they
proposed to relax the constraint using the maximum norm as
\[
 \|\mathbb{E}[\Phi] - \Phi^n\|_\infty \leq c
\]
for some $c>0$. That is, for every function the expected value only
needs to match the sample mean up to a tolerance of $c$.  The dual of
the relaxed problem has a natural interpretation as a
feature-selective $\ell_1$-regularized log-likelihood maximization
problem
\[
\max_{S\,\in\,\textrm{Sym}(d)} \quad \ell (S) - c\|S\|_1,
\]
where $\textrm{Sym}(d)$ is the set of symmetric $(d\times
d)$-matrices, $S\in\textrm{Sym}(d)$ is the matrix of dual
variables for the constraint $\|\mathbb{E}[\Phi] - \Phi^n\|_\infty
\leq c$, and 
\[
\ell (S) = \scalp{S,  \Phi^n} -a(S)
\]
is the log-likelihood function for pairwise Ising models with the
standard matrix dot product $\scalp{S, \Phi^n} =
\tr\big(S^\top{\Phi^n}\big)$ and normalizer (log-partition function)
\[
a(S) = \log \sum_{x\in \mathcal{X}} \scalp{S, \Phi(x)}.
\]

In this paper, we are restricting the relaxation of the entropy
maximization problem by also enforcing the alternative constraint
\[
\|\mathbb{E}[\Phi] - \Phi^n\| \leq \lambda,
\]
where $\lambda >0$ and $\|\cdot\|$ denotes the spectral norm
on $\textrm{Sym}(d)$. A difference to the maximum norm constraint
is that now the expected values of the functions only need to
collectively match the sample means up to a tolerance of $\lambda$
instead of individually. The dual of the more strictly relaxed entropy
maximization problem
\[
  \max_{p\,\in\,\mathcal{P}}\:  H(p) \quad
  \textrm{ s.t. }\:  \|\mathbb{E}[\Phi] - \Phi^n\|_\infty \leq c 
  \,\textrm{ and }\, \|\mathbb{E}[\Phi] - \Phi^n\| \leq \lambda
\]
is the regularized log-likelihood maximization problem
\[
  \max_{S,\,L_1,\,L_2\,\in \,\textrm{Sym}(d)} \: \ell (S+L_1-L_2) -
  c\|S\|_1 - \lambda \tr (L_1+L_2) \quad \textrm{ s.t. }\: L_1,L_2
  \succeq 0,
\]
see Appendix~\ref{s_appendix_dual}.  Here, the regularization term
$\tr (L_1+L_2)$ promotes a low rank of the positive-semidefinite
matrix $L_1+L_2$. This implies that the matrix $L_1-L_2$ in the
log-likelihood function also has low rank. Thus, a solution of the dual
problem is the sum of a sparse matrix $S$ and a low-rank matrix
$L_1-L_2$. This can be interpreted as follows: the variables interact
indirectly through the low-rank matrix $L_1-L_2$, while some of the
direct interactions through the matrix $S$ are turned off by setting
entries in $S$ to zero.  We get a more intuitive interpretation of the
dual problem if we consider a weakening of the spectral norm
constraint. The spectral norm constraint is equivalent to the two
constraints
\[
\mathbb{E}[\Phi]-\Phi^n \preceq \lambda \Id \quad\textrm{ and }\quad
\Phi^n-\mathbb{E}[\Phi]  \preceq \lambda \Id
\]
that bound the spectrum of the matrix $\mathbb{E}[\Phi] - \Phi^n$ from
above and below. If we replace the spectral norm constraint by only
the second of these two constraints in the maximum-entropy problem,
then the dual problem becomes
\[
  \max_{S,\,L\,\in\, \textrm{Sym}(d)}\:\ell (S+L) - c\|S\|_1 -
  \lambda \tr (L) \quad \textrm{ s.t. }\: L \succeq 0.
\]
This problem also arises as the log-likelihood maximization problem
for a conditional Gaussian model (see~\cite{lauritzen1996graphical})
that exhibits observed binary variables and unobserved, latent
conditional Gaussian variables. The sample space of the full mixed
model is $\mathcal{X}\times\mathcal{Y} = \{0,1\}^d \times
\mathbb{R}^{l}$, where $\mathcal{Y}=\mathbb{R}^{l}$ is the sample
space for the unobserved variables. We want to write down the density
of the conditional Gaussian model on this sample space. For that we
respectively denote the interaction parameters between the observed
binary variables by $S\in \textrm{Sym}(d)$, the ones between the
observed binary and latent conditional Gaussian variables by
$R\in\mathbb{R}^{l\times d}$, and the ones between the latent
conditional Gaussian variables by $\Lambda\in \textrm{Sym}(l)$, where
$\Lambda\succ0$. Then, for $(x,y)\in \mathcal{X}\times\mathcal{Y}$
and up to normalization, the density of the conditional Gaussian
model is given as
\[
p(x,y)  \propto \exp \left( x^\top S x + y^\top Rx - \frac{1}{2} y^\top
\Lambda y \right).
\]
One can check, see also~\cite{lauritzen1996graphical}, that the
conditional densities $p(y\:|\:x)$ are $l$-variate Gaussians on
$\Y$. Here, we are interested in the marginal distribution
\[
p(x) \propto \exp \left( \Big\langle{S+\frac{1}{2}R^\top \Lambda^{-1} R,
  \Phi(x)}\Big\rangle \right)
\]
on $\mathcal{X}$ that is obtained by integrating over the unobserved
variables in $\mathcal{Y}$, see Appendix \ref{s_appendix_marg}.  The
matrix $L=\frac{1}{2}R^\top \Lambda^{-1} R$ is symmetric and positive
semidefinite. The log-likelihood function for the marginal model and
the given data is thus given as
\[
\ell(S+L) = \scalp{S+L, \Phi^n} -a(S+L),
\]
where $S,L\in \textrm{Sym}(d)$, $L\succeq 0$ and $a(S+L)$ is once
again the normalizer of the density.
\medskip

If only a few of the binary variables interact directly, then $S$ is
sparse, and if the number of unobserved variables $l$ is small
compared to $d$, then $L$ is of low rank. Hence, one could attempt to
recover $S$ and $L$ from the data using the regularized log-likelihood
maximization problem
\begin{equation}
\max_{S,\,L\,\in\,\textrm{Sym}(d)}\: \ell (S+L) - c\|S\|_1 - \lambda \tr
(L) \quad \textrm{ s.t. }\: L \succeq 0 \tag{ML} \label{MLProblem}
\end{equation}
that we encountered before.
\medskip

We are now in a similar situation as has been discussed
by~\cite{ChandrasekaranPW12} who studied Gaussian graphical models
with latent Gaussian variables. They were able to consistently
estimate both the number of latent components, in our case $l$, and
the conditional graphical model structure among the observed
variables, in our case the zeroes in $S$. Their result holds in the
high-dimensional setting, where the number of variables (latent and
observed) may grow with the number of observed sample points.  Here,
we show a similar result for the Ising model with latent conditional
Gaussian variables, that is, the one that we have introduced above.

\section{Related Work}

\emph{Graphical Models.} The introduction of decomposed sparse +
low-rank models followed a period of quite extensive research on
sparse graphical models in various settings, for example Gaussians
(\cite{meinshausen2006high}, \cite{ravikumar2011high}), Ising models
(\cite{ravikumar2010high}), discrete models
(\cite{jalali2011onlearnin}), and more general conditional Gaussian
and exponential family models (\cite{lee2015learning}, \cite{LST13},
\cite{cheng2017high}). All estimators of sparse graphical models
maximize some likelihood including a $\ell_1$-penalty that induces
sparsity.
\medskip

Most of the referenced works contain high-dimensional consistency
analyses that particularly aim at the recovery of the true graph
structure, that is, the information which variables are \emph{not}
conditionally independent and thus interact. A prominent proof
technique used throughout is the primal-dual-witness method originally
introduced in \cite{wainwright2009sharp} for the \textsc{LASSO}, that is,
sparse regression. Generally, the assumptions necessary in order to be
able to successfully identify the true interactions for graphical
models (or rather the active predictors for the \textsc{LASSO}) are
very similar. For example, one of the conditions that occurs
repeatedly is irrepresentability, sometimes also referred to as
incoherence. Intuitively, this condition limits the influence
the active terms (edges) can have on the inactive terms (non-edges),
see \cite{ravikumar2011high}.
\medskip

\emph{Sparse + low-rank models.} The seminal work of
\cite{ChandrasekaranPW12} is the first to propose learning sparse +
low-rank decompositions as an extension of classical graphical
models. As such it has received a lot of attention since then, putting
forth various commentators, for example \cite{candes2012discussion},
\cite{laur2012discussion}, and
\cite{wainwright2012discussion}. Notably, \cite{ChandrasekaranPW12}'s
high-dimensional consistency analysis generalizes the proof-technique
previously employed in graphical models. Hence, unsurprisingly, one of
their central assumptions is a generalization of the
irrepresentability condition.
\medskip

Astoundingly, not so much effort has been undertaken in generalizing
sparse + low-rank models to broader domains of variables. The
particular case of multivariate binary models featuring a sparse +
low-rank decomposition is related to Item Response Theory (IRT, see
for example \cite{hambleton1991fundamentals}). In IRT the observed
binary variables (test items) are usually assumed to be conditionally
independent given some continuous latent variable (trait of the test
taker). \cite{chen2018robust} argued that measuring conditional
dependence by means of sparse + low-rank models might improve results
from classical IRT. They estimate their models using
pseudo-likelihood, a strategy that they also proposed in an earlier
work, see~\cite{chen2016fused}.
\medskip

\cite{chen2016fused} show that their estimator recovers the algebraic
structure, that is, the conditional graph structure and the number of
latent variables, with probability tending to one. However, their
analysis only allows a growing number of sample points whereas they
keep the number of variables fixed. Their result thus severs from the
tradition to analyze the more challenging high-dimensional setting,
where the number of variables is also explicitly tracked.
\medskip

\emph{Placement of our work.} Our main contribution is a
high-dimensional consistency analysis of a likelihood estimator for
multivariate binary sparse + low-rank models. Furthermore, our
analysis is the first to show parametric consistency of the
likelihood-estimates and to provide explicit rates for this type of
models. It thus complements the existing literature.  Our other
contribution is the connection to a particular type of relaxed
maximum-entropy problems that we established in the introduction. We
have shown that this type of relaxation leads to an interpretation as
the marginal model of a conditional Gaussian
distribution. Interestingly, this has not drawn attention before,
though our semidefiniteness constraints can be obtained as special
cases of the general relaxed maximum-entropy problem discussed
in~\cite{dudik2006maximum}.

\section{Parametric and Algebraic Consistency}

This section constitutes the main part of this paper. Here, we discuss
assumptions that lead to consistency properties of the solution to the
likelihood problem~\ref{MLProblem} and state our consistency result. We
are interested in the high-dimensional setting, where the number of
samples~$n$, the number of observed binary variables~$d$, and the
number of latent conditional Gaussian variables~$l$ are allowed to
grow simultaneously.  Meanwhile, there are some other problem-specific
quantities that concern the curvature of the problem that we assume to
be fixed. Hence, we keep the geometry of the problem fixed.
\medskip

For studying the consistency properties, we use a slight reformulation
of Problem~\ref{MLProblem} from the introduction. First, we switch from
a maximization to a minimization problem, and let $\ell$ be the
\emph{negative} log-likelihood from now on. Furthermore, we change the
representation of the regularization parameters, namely
\begin{equation}
\begin{array}{lrlr}
 (S_n, L_n) \; = &\underset{S,\,
    L} {\argmin}& \ell(S+L) + \la_n \bPr{\gamma \|S\|_{1} + \tr L}
  \\ &\subjto & L\succeq 0,&
\end{array} \tag{SL}  \label{prob_SL}
\end{equation}
where $\gamma$ controls the trade-off between the two regularization
terms and $\la_n$ controls the trade-off between the negative
log-likelihood term and the regularization terms.
\medskip

We want to point out that our consistency proof follows the lines of
the seminal work in~\cite{ChandrasekaranPW12} who investigate a convex
optimization problem for the parameter estimation of a model with
observed and latent Gaussian variables. The main difference to the
Ising model is that the Gaussian case requires a positive-definiteness
constraint on the pairwise interaction parameter matrix $S+L$ that is
necessary for normalizing the density. Furthermore, in the Gaussian
case the pairwise interaction parameter matrix $S+L$ is the inverse of
the covariance matrix. This is no longer the case
for the Ising model, see~\cite{loh2012structure}.
\medskip

In this work, we want to answer the question if it is possible to
recover the parameters from data that has been drawn from a
hypothetical \emph{true} model distribution parametrized by $S^\star$
and $L^\star$. We focus on two key concepts of successful recovery in
an asymptotic sense with high probability. The first is
\emph{parametric} consistency. This means that $(S_n, L_n)$ should be
close to $(S^\star, L^\star)$ \wrt some norm.  Since the regularizer
is the composed norm $\gamma \|S\|_{1} + \tr L$, a natural norm for
establishing parametric consistency is its dual norm
\begin{align*}
\ganorm{(S, L)} &= \max\bPc{\frac{\|S\|_{\infty}}{\gamma}, \|L\|}.
\end{align*}
The second type of consistency that we study is \emph{algebraic}
consistency. It holds if $S_n$ recovers the true sparse support
of $S^\star$, and if $L_n$ has the same rank as $L^\star$.
\medskip

In the following we discuss the assumptions for our consistency
result. For that we proceed as follows: First, we discuss the
requirements for parametric consistency of the compound matrix in
Section~\ref{sec:paramcons_compound}. Next, we work out the three
central assumptions that are sufficient for individual recovery of
$S^\star$ and $L^\star$ in Section~\ref{sec:assumptions}.  We state
our consistency result in Section~\ref{sec:thm}. Finally, in
Section~\ref{sec:outline} we outline the proof, the details of which
can be found in Section~\ref{app:proof}.

\subsection{Parametric consistency of the compound matrix}
\label{sec:paramcons_compound}

In this section, we briefly sketch how the negative log-likelihood
part of the objective function in Problem~\ref{prob_SL} drives the
compound matrix $\Theta_n = S_n + L_n$ that is constructed from the
solution $(S_n,L_n)$ to parametric consistency with high probability.
We only consider the negative log-likelihood part because we assume
that the relative weight $\la_n$ of the regularization terms in the
objective function goes to zero as the number of sample points goes to
infinity. This implies that the estimated compound matrix is not
affected much by the regularization terms since they contribute mostly
small (but important) adjustments. More specifically, the
$\ell_1$-norm regularization on $S$ shrinks entries of $S$ such that
entries of small magnitude are driven to zero such that $S_n$ will
likely be a sparse matrix. Likewise, the trace norm (or nuclear norm)
can be thought of diminishing the singular values of the matrix $L$
such that small singular values become zero, that is, $L_n$ will
likely be a low-rank matrix.
\medskip

The negative log-likelihood function is strictly convex and thus has a
unique minimizer $\hat \Theta$. We can assume that $\hat \Theta
\approx \Theta_n$. Let $\Thstar = S^\star + L^\star$ and
$\Delta_\Theta = \hat \Theta-\Thstar$. Then, consistent recovery of
the compound matrix $\Thstar$ is essentially equivalent to the
estimation error $\Delta_\Theta$ being small.  Now, consider the
Taylor expansion
\[
 \ell(\Thstar + \Delta_\Theta) = \ell(\Thstar) + \nabla
 \ell(\Thstar)^\top \Delta_\Theta + \frac{1}{2}\Delta_\Theta^\top
 \nabla^2 \ell(\Thstar) \Delta_\Theta + R(\Delta_\Theta)
\]
with remainder $R(\Delta_\Theta)$. It turns out that if the
number of samples is sufficiently large, then the gradient $\nabla
\ell(\Thstar)$ is small with high probability, and if $\Delta_\Theta$ is 
small, then the remainder $R(\Delta_\Theta)$ is also small. In this case, 
the Taylor expansion implies that locally around the true parameters
the negative log-likelihood is well approximated by the quadratic form
induced by its Hessian, namely
\[
\ell(\Thstar + \Delta_\Theta) \approx \ell(\Thstar) +
\frac{1}{2}\Delta_\Theta^\top \nabla^2 \ell(\Thstar) \Delta_\Theta.
\]
This quadratic form is obviously minimized at $\Delta_\Theta=0$, which
would entail consistent recovery of $\Thstar$ in a parametric
sense. However, this does not explain how the sparse and low-rank
components of $\Thstar$ can be recovered consistently. In the next
section we elaborate sufficient assumptions for the consistent
recovery of these components.

\subsection{Assumptions for individual recovery}
\label{sec:assumptions}

Consistent recovery of the components, more specifically parametric
consistency of the solutions $S_n$ and $L_n$, requires the two errors
$\Delta_S = S_n -S^\star$ and $\Delta_L = L_n - L^\star$ to be small
(in their respective norms). Both errors together form the joint error
$\Delta_S+\Delta_L = \Theta_n - \Theta^\star \approx \Delta_\Theta$. 
Note though that the minimum of
the quadratic form from the previous section at $\Delta_\Theta=0$ does
not imply that the individual errors $\Delta_S$ and $\Delta_L$ are
small.  We can only hope for parametric consistency of $S_n$
and $L_n$ if they are the unique solutions to Problem~\ref{prob_SL}.
\medskip

For uniqueness of the solutions we need to study optimality
conditions. Problem~\ref{prob_SL} is the Lagrange form of
the constrained problem
\[
\min \,\ell(S+L) \quad \textrm{ s.t. }\: \|S\|_{1} \leq c_n \textrm{ and }
\|L\|_\ast \leq t_n
\]
for suitable regularization parameters $c_n$ and $t_n$, where we have
neglected the positive-semidefiniteness constraint on $L$. The
constraints can be thought of as convex relaxations of constraints
that require $S$ to have a certain sparsity and require $L$ to have at
most a certain rank. That is, $S$ should be contained in the set of
symmetric matrices of a given sparsity and $L$ should be contained in
the set of symmetric low-rank matrices.  To formalize these sets we
briefly review the varieties of sparse and low-rank matrices.

\paragraph{Sparse matrix variety.}

For $M\in \textrm{Sym}(d)$ the support is defined as
\[
\supp(M) = \{(i,j)\in [d]\times[d]:\; M_{ij}\neq0\}, 
\]
and the variety of sparse symmetric matrices with at most $s$
non-zero entries is given as
\begin{align*}
  \S(s) =  \{S\in \textrm{Sym}(d) : |\supp(S)| \leq s  \}.
\end{align*}
Any matrix $S$ with $|\supp (S)|=s$ is a smooth point of $\S(s)$ with
tangent space
\begin{align*}
  \Omega (S) &= \{M\in \textrm{Sym}(d):\; \supp(M) \subseteq
  \supp(S)\}.
\end{align*}

\paragraph{Low-rank matrix variety.}

The variety of matrices with rank at most $r$ is given as
\begin{align*}
  \Lc(r) = \{L\in \textrm{Sym}(d):\; \rank(L)\leq r\}.
\end{align*}
Any matrix $L$ with rank $r$ is a smooth point of $\Lc(r)$ with
tangent space
\begin{align*}
  T(L) = \bPc{U X^\top + X U^\top:\; X\in\IR^{d\times r}},
\end{align*}
where $L=UD U^\top$ is the restricted eigenvalue decomposition
of $L$, that is, $U\in\IR^{d\times r}$ has orthonormal columns and
$D\in\IR^{r\times r}$ is diagonal.
\medskip

Next, we formulate conditions that ensure uniqueness in terms of the
tangent spaces of the introduced varieties.

\paragraph{Transversality.}

Remember that we understand the constraints in the constrained
formulation of Problem~\ref{prob_SL} as convex relaxations of
constraints of the form $S\in\S (s)$ and $L\in \Lc(r)$.  Because the
negative log-likelihood function $\ell$ is a function of $S+L$, its
gradient with respect to $S$ and its gradient with respect to $L$
coincide at $S+L$. Hence, the first-order optimality conditions for
the non-convex problem require that the gradient of the negative
log-likelihood function needs to be normal to $\S (s)$ and $\Lc(r)$ at
any (locally) optimal solutions $\hat S$ and $\hat L$, respectively.
If the solution $(\hat S,\hat L)$ is not (locally) unique, then
basically the only way to get an alternative optimal solution that
violates (local) uniqueness is by translating $\hat S$ and $\hat L$ by
an element that is tangential to $\S(s)$ at $\hat S$ and tangential to
$\Lc(r)$ at $\hat L$, respectively. Thus, it is necessary for (local)
uniqueness of the optimal solution that such a tangential direction
does not exist. Hence, the tangent spaces $\Omega (\hat S)$ and
$T(\hat L)$ need to be \emph{transverse}, that is, $\Omega (\hat S)
\cap T(\hat L) =\{0\}$. Intuitively, if we require that transversality
holds for the true parameters $(S^\star,L^\star)$, that is, $\Omega
(S^\star) \cap T(L^\star) =\{0\}$, then provided that $(\hat S,\hat
L)$ is close to $(S^\star,L^\star)$, the tangent spaces $\Omega (\hat
S)$ and $T(\hat L)$ should also be transverse.
\medskip

We do not require transversality explicitly since it is implied by
stronger assumptions that we motivate and state in the
following. In particular, we want the (locally) optimal solutions
$\hat S$ and $\hat L$ not only to be unique, but also to be
\emph{stable} under perturbations. This stability needs some
additional concepts and notation that we introduce now.

\paragraph{Stability assumption.}

Here, stability means that if we perturb $\hat S$ and $\hat L$ in the
respective tangential directions, then the gradient of the negative
log-likelihood function should be far from being normal to the sparse
and low-rank matrix varieties at the perturbed $\hat S$ and $\hat L$,
respectively. As for transversality, we require stability for the true
solution $(S^\star, L^\star)$ and expect that it carries over to the
optimal solutions $\hat S$ and $\hat L$, provided they are close.  More
formally, we consider perturbations of $S^\star$ in directions from the
tangent space $\Omega=\Omega(S^\star)$, and perturbations of $L^\star$
in directions from tangent spaces to the low-rank variety that are close
to the true one $T=T(L^\star)$. The reason for considering tangent
spaces close to $T(L^\star)$ is that there are low-rank matrices close
to $L^\star$ that are not contained in $T(L^\star)$ because the
low-rank matrix variety is locally curved at any smooth point.
\medskip

Now, in light of a Taylor expansion the change of the gradient is
locally governed by the data-independent Hessian $H^\star = \nabla^2
\ell(\Thstar) = \nabla^2 a(\Thstar)$ of the negative log-likelihood
function at $\Theta^\star$. To make sure that the gradient of the
tangentially perturbed (true) solution cannot be normal to the
respective matrix varieties we require that it has a significant
component in the tangent spaces at the perturbed solution. This is
achieved if the \emph{minimum gains} of the Hessian $H^\star$ in the
respective tangential directions
\begin{align*}
\alpha_\Omega &= \underset{M\in \Omega,\,\|M\|_{\infty}=1}{\min} \:
\|P_\Omega H^\star M\|_{\infty}, \quad\textrm{and} \\
\alpha_{T,\varepsilon} &= \underset{\rho(T, T')\leq
  \varepsilon}{\min} \:\: \underset{M\in T',\, \|M\|=1}{\min} \:
\|P_{T'} H^\star M\|
\end{align*}
are large, where $T'\subseteq \textrm{Sym}(d)$ are tangent spaces to
the low-rank matrix variety that are close to $T$ in terms of the
\emph{twisting}
\begin{align*}
\rho(T, T') = \underset{\|M\| =1}{\max} \bNorm {\bPe{P_{T}-P_{T'}}(M)]}
\end{align*}
between these subspaces given some $\varepsilon>0$. Here, we denote
projections onto a matrix subspace by $P$ subindexed by the subspace.
\medskip

Note though that only requiring $\alpha_\Omega$ and
$\alpha_{T,\varepsilon}$ to be large is not enough if the
\emph{maximum effects} of the Hessian $H^\star$ in the respective
normal directions
\begin{align*}
  \delta_\Omega &= \underset{M\in \Omega,\,\|M\|_{\infty}=1}{\max} \:
  \|P_{\Omega^\perp} H^\star M\|_{\infty},\quad\textrm{and}\\
  \delta_{T,\varepsilon} &= \underset{\rho(T, T')\leq\varepsilon}{\max}
  \:\: \underset{M\in T',\, \|M\|=1}{\max}
  \|P_{T'^\perp} H^\star M\|
\end{align*}
are also large, because then the gradient of the negative
log-likelihood function at the perturbed (true) solution could still
be almost normal to the respective varieties.  Here, $\Omega^\perp$ is
the normal space at $S^\star$ orthogonal to $\Omega$, and $T'^\perp$
is the space orthogonal to $T'$. 
\medskip

Overall, we require that $\alpha_\varepsilon = \min \{ \alpha_\Omega,
\alpha_{T,\varepsilon}\}$ is bounded away from zero and that the ratio
$\delta_\varepsilon /\alpha_\varepsilon$ is bounded from above, where
$\delta_\varepsilon = \max\{ \delta_\Omega, \delta_{T,\varepsilon}\}$.
Note that in our definitions of the minimum gains and maximum effects
we used the $\ell_\infty$- and the spectral norm, which are dual to
the $\ell_1$- and the nuclear norm, respectively.  Ultimately, we want
to express the stability assumption in the $\ganorm{\cdot}$-norm which
is the dual norm to the regularization term in
Problem~\ref{prob_SL}. For that we need to compare the $\ell_\infty$-
and the spectral norm. This can be accomplished by using norm
compatibility constants that are given as the smallest possible
$\xi(T(L))$ and $\mu(\Omega(S))$ such that
\[
\|M\|_{\infty} \leq \xi(T(L)) \|M\|\textrm{ for all } M\in T(L),
\,\textrm{ and }\,\|N\| \leq \mu(\Omega(S))
\|N\|_{\infty}\textrm{ for all } N\in \Omega(S),
\]
where $\Omega(S)$ and $T(L)$ are the tangent spaces at points $S$ and
$L$ from the sparse matrix variety $\S(|\supp S|)$ and the low-rank
matrix variety $\Lc(\rank L)$, respectively. Let us now
specify our assumptions in terms of the stability constants from
above.

\begin{assumption}[Stability] \label{a:stability}
  We set $\varepsilon = \xi(T)/2$ and assume that
  \begin{enumerate}
  \item $\alpha = \alpha_{\xi(T)/2} >0$, and
  \item there exists $\nu\in(0,\frac{1}{2}]$ such that
    $\frac{\delta}{\alpha} \leq 1-2\nu$, where $\delta =
    \delta_{\xi(T)/2}$.
  \end{enumerate}
\end{assumption}

The second assumption is essentially a generalization of the
well-known irrepresentability condition, see for example
\cite{ravikumar2011high}. The next assumption ensures that there are
values of $\gamma$ for which stability can be expressed in terms of
the $\ganorm{\cdot}$-norm, that is, a coupled version of stability.

\paragraph{$\gamma$-feasibility assumption.}

The norm compatibility constants $\mu(\Omega)$ and $\xi(T)$ allow
further insights into the realm of problems for which consistent
recovery is possible. First, it can be shown,
see~\cite{chandrasekaran2011rank}, that $\mu(\Omega)\leq
\deg_{\max}(S^\star)$, where $\deg_{\max}(S^\star)$ is the maximum
number of non-zero entries per row/column of $S^\star$, that is,
$\mu(\Omega)$ constitutes a lower bound for
$\deg_{\max}(S^\star)$. Intuitively, if $\deg_{\max}(S^\star)$ is
large, then the non-zero entries of the sparse matrix $S^\star$ could
be concentrated in just a few rows/columns and thus $S^\star$ would be
of low rank. Hence, in order not to confuse $S^\star$ with a low-rank
matrix we want the lower bound $\mu(\Omega)$ on the maximum degree
$\deg_{\max}(S^\star)$ to be small.
\medskip

Second, $\xi(T)$ constitutes a lower bound on the \emph{incoherence}
of the matrix $L^\star$. Incoherence measures how well a subspace is
aligned with the standard coordinate axes. Formally, the incoherence
of a subspace $U\subset \IR^d$ is defined as $\coh(U)=\max_i \|P_U
e_i\|$ where the $e_i$ are the standard basis vectors of $\IR^d$.  It
is known, see again~\cite{chandrasekaran2011rank}, that
\begin{align*}
  \xi(T)=\xi(T(L^\star)) \leq 2\coh(L^\star),
\end{align*}
where $\coh(L^\star)$ is the incoherence of the subspace spanned by
the rows/columns of the symmetric matrix $L^\star$. A large value
$\coh(L^\star)$ means that the row/column space of $L^\star$ is
well aligned with the standard coordinate axes. In this case, the
entries of $L^\star$ do not need to be spread out and thus $L^\star$
could have many zero entries, that is, it could be a sparse
matrix. Hence, in order not to confuse $L^\star$ with a sparse matrix
we want the lower bound $\xi(T)/2$ on the incoherence $\coh(L^\star)$,
or equivalently $\xi(T)$, to be small.
\medskip

Altogether, we want both $\mu(\Omega)$ and $\xi(T)$ to be small to
avoid confusion of the sparse and the low-rank parts. Now, in
Problem~\ref{prob_SL}, the parameter $\gamma >0$ controls the
trade-off between the regularization term that promotes sparsity, that
is, the $\ell_1$-norm term, and the regularization term that promotes
low rank, that is, the nuclear norm term. It turns out that the range
of values for $\gamma$ that are feasible for our consistency analysis
becomes larger if $\mu(\Omega)$ and $\xi(T)$ are small. Indeed, the
following assumption ensures that the range of values of $\gamma$ that
are feasible for our consistency analysis is non-empty.
\begin{assumption}[$\gamma$-feasibility] \label{a:gamma}
The range $[\gamma_{\min},\gamma_{\max}]$ with
\[
\gamma_{\min} = \frac{3\beta(2-\nu)\xi(T)}{\nu \alpha}
\quad\textrm{ and }\quad
\gamma_{\max} = \frac{\nu\alpha}{2\beta (2-\nu)\mu(\Omega)}.
\]
is non-empty. Here, we use the additional problem-specific constant
$\beta = \max\{\beta_\Omega,\beta_T\}$ with
\begin{align*}
  \beta_\Omega &= \underset{M\in \Omega,\, \|M\|=1}{\max} \:
  \|H^\star M\|,\quad\textrm{and} \\
  \beta_T &= \underset{\rho(T, T')\leq\frac{\xi(T)}{2}}{\max} \:\:
  \underset{M\in T',\, \|M\|_{\infty}=1}{\max} \: \|H^\star M\|_{\infty}.
\end{align*}

\end{assumption}

The $\gamma$-feasibility assumption is equivalent to
\begin{align*}
  \mu(\Omega) \xi(T) \leq
  \frac{1}{6}\bPr{\frac{\nu\alpha}{\beta(2-\nu)}}^2.
\end{align*} 
Note that this upper bound on the product $\mu(\Omega) \xi(T)$ is
essentially controlled by the product $\nu\alpha$. It is easier to
satisfy when the latter product is large. This is well aligned with
the stability assumption, because in terms of the stability assumption
the good case is that the product $\nu\alpha$ is large, or more
specifically that $\alpha$ is large and $\nu$ is close to $1/2$.

\paragraph{Gap assumption.}

Intuitively, if the smallest-magnitude non-zero entry $s_{\min}$ of
$S^\star$ is too small, then it is difficult to recover the support of
$S^\star$. Similarly, if the smallest non-zero eigenvalue
$\sigma_{\min}$ of $L^\star$ is too small, then it is difficult to
recover the rank of $L^\star$. Hence, we make the following final
assumption.

\begin{assumption}[Gap]
  \label{a:gap} 
  We require that
\[
s_{\min} \geq \frac{C_S \la_n}{\mu(\Omega)} \quad\textrm{ and }\quad
\sigma_{\min}\geq \frac{C_L \la_n}{\xi(T)^2},
\]
where $C_S$ and $C_L$ are problem-specific constants that are
specified more precisely later.
\end{assumption}

Recall that the regularization parameter $\la_n$ controls how strongly
the eigenvalues of the solution $L_n$ and the entries of the solution
$S_n$ are driven to zero. Hence, the required gaps get weaker as the
number of sample points grows, because the parameter $\la_n$ goes to
zero as $n$ goes to infinity.

\subsection{Consistency theorem}
\label{sec:thm}

We state our consistency result using problem-specific
data-independent constants $C_1$, $C_2$ and $C_3$. Their exact
definitions can be found alongside the proof in
Section~\ref{app:constants}. Also note that the norm compatibility
constant $\xi(T)$ is implicitly related to the number of latent
variables $l$. This is because $\xi(T) \leq 2\coh(L^\star)$ as we have
seen above and $\sqrt{l/d}\leq \coh(L^\star)\leq 1$,
see~\cite{chandrasekaran2011rank}. Hence, the smaller $l$, the better
can the upper bound on $\xi(T)$ be. Therefore, we track $\xi(T)$ and
$\mu(\Omega)$ explicitly in our analysis.

\begin{theorem} [Consistency] \label{thm_consistency}
  Let $S^\star\in\textrm{Sym}(d)$ be a sparse and let $0\preceq
  L^\star\in\textrm{Sym}(d)$ be a low-rank matrix. Denote by $\Omega =
  \Omega(S^\star)$ and $T=T(L^\star)$ the tangent spaces at $S^\star$
  and $L^\star$, respectively to the variety of symmetric sparse
  matrices and to the variety of symmetric low-rank matrices. Suppose
  that we observed samples $x^{(1)}, \ldots, x^{(n)}$ drawn from a
  pairwise Ising model with interaction matrix $S^\star + L^\star$
  such that the stability assumption, the $\gamma$-feasibility
  assumption, and the gap assumption hold. Moreover let $\kpa > 0$, and
  assume that for the number of sample points $n$ it holds that
	\[
  n > \frac{C_1 \kpa}{\xi(T)^4}  d\log d,
  \]
	and that the regularization
  parameter $\la_n$ it set as 
  \[
  \la_n = \frac{C_2}{\xi(T)}\sqrt{\frac{\kpa d\log d}{n}}.
  \] 
  Then, it follows with probability at least
  $1-d^{-\kpa}$ that the solution $(S_n, L_n)$ to the convex program
  \ref{prob_SL} is
  \begin{itemize}
  \item[a)] parametrically consistent, that is, $\ganorm{(S_n -
    S^\star, L_n - L^\star)} \leq C_3 \la_n$, and
  \item[b)] algebraically consistent, that is, $S_n$ and $S^\star$ have
    the same support (actually, the signs of corresponding entries
    coincide), and $L_n$ and $L^\star$ have the same ranks.
  \end{itemize}	
\end{theorem}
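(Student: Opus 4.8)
The plan is to adapt the variety-based primal--dual witness argument of \cite{ChandrasekaranPW12} to the bounded exponential family underlying the Ising model. Throughout write $\ell$ for the negative log-likelihood, so that $\nabla\ell(\Theta) = \IE_\Theta[\Phi] - \Phi^n$ and $\nabla^2\ell(\Theta) = \nabla^2 a(\Theta) = \cov_\Theta(\Phi)$, which is strictly positive definite and, at $\Theta = \Thstar$, data independent.

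\emph{Step 1 (concentration of the score).} Since $\varphi_{ij}(x) = x_ix_j$ with $x\in\{0,1\}^d$, the matrix $\Phi(x) = xx^\top$ has entries in $[0,1]$ and spectral norm at most $d$. The score $\nabla\ell(\Thstar) = \IE_{\Thstar}[\Phi] - \Phi^n$ is an average of $n$ i.i.d.\ centered copies of $\Phi(x) - \IE_{\Thstar}[\Phi]$, so Hoeffding over the $d^2$ entries gives $\|\nabla\ell(\Thstar)\|_\infty \le C\sqrt{\kappa\log d/n}$, and a spectral-norm concentration bound (via an $\varepsilon$-net on the sphere with Hoeffding, or matrix moment inequalities) gives $\|\nabla\ell(\Thstar)\| \le C\sqrt{\kappa d\log d/n}$, each on an event of probability at least $1 - d^{-\kappa}$. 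Hence $\ganorm{\nabla\ell(\Thstar)} \le C\sqrt{\kappa d\log d/n}$. The stated choice $\la_n = (C_2/\xi(T))\sqrt{\kappa d\log d/n}$ makes this a small fixed multiple of $\xi(T)\la_n$, and, since the parametric error radius in Step 2 will be of order $C_3\la_n \propto \xi(T)^{-1}\sqrt{\kappa d\log d/n}$, the requirement $C_3\la_n \le \varepsilon = \xi(T)/2$ --- needed so that the tangent-space linearization is legitimate --- is exactly the lower bound $n > (C_1\kappa/\xi(T)^4)\,d\log d$.

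\emph{Step 2 (tangent-space surrogate and parametric consistency).} Consider the auxiliary program obtained from \ref{prob_SL} by constraining $S$ to $S^\star + \Omega$ and $L$ to a smooth point of $\Lc(\rank L^\star)$ near $L^\star$; concretely, linearize the variety constraints, so that $L$ varies over $L^\star + T'$ for tangent spaces $T'$ with twisting $\rho(T,T')\le\varepsilon = \xi(T)/2$. By the stability assumption the restriction of $H^\star$ to $\Omega\oplus T'$ has minimum gain $\alpha>0$, hence is invertible, and the surrogate has a unique solution $(\bar S,\bar L)$. A second-order Taylor expansion of $\ell$ around $\Thstar$, whose remainder is controlled cubically in $\ganorm{(\bar S - S^\star,\bar L - L^\star)}$ because the third derivatives of $a$ are bounded (finite sample space), together with Step 1 and a Brouwer fixed-point argument on the $\ganorm{\cdot}$-ball of radius $C_3\la_n$, yields $\ganorm{(\bar S - S^\star,\bar L - L^\star)} \le C_3\la_n$. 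Converting the $\ell_\infty$/spectral versions of $\alpha,\delta,\beta$ into their $\ganorm{\cdot}$-coupled counterparts --- which is what makes the fixed-point map a contraction --- is where the norm compatibility constants $\mu(\Omega),\xi(T)$ enter and where the $\gamma$-feasibility assumption, equivalently $\mu(\Omega)\xi(T) \le \tfrac{1}{6}\big(\nu\alpha/(\beta(2-\nu))\big)^2$, is used.

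\emph{Step 3 (strict dual feasibility and algebraic consistency).} Build the dual certificate from the stationarity conditions of the surrogate. Using the irrepresentability-type bound $\delta/\alpha\le 1-2\nu$ together with the score and remainder estimates, show that the components of the certificate normal to $\Omega$ and to $T'$ lie \emph{strictly} below the thresholds $\gamma\la_n$ (in $\ell_\infty$) and $\la_n$ (in spectral norm), respectively. Then $(\bar S,\bar L)$ satisfies the KKT conditions of the full problem \ref{prob_SL}, so by strict convexity it is its unique solution $(S_n,L_n)$; in particular $\supp(S_n)\subseteq\supp(S^\star)$ and $\rank(L_n)\le\rank(L^\star)$, and part a) of the theorem is Step 2. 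Finally the gap assumption upgrades the two inclusions to equalities: $s_{\min}\ge C_S\la_n/\mu(\Omega)$ together with $\|S_n - S^\star\|_\infty \le \gamma C_3\la_n$ forces every nonzero entry of $S^\star$ to survive in $S_n$ with the correct sign, while $\sigma_{\min}\ge C_L\la_n/\xi(T)^2$ together with $\|L_n - L^\star\| \le C_3\la_n$ and Weyl's inequality forces $\rank(L_n)=\rank(L^\star)$; this is part b).

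\emph{Main obstacle.} The delicate step is the curvature of the low-rank variety: one cannot argue on the single tangent space $T(L^\star)$ but must control the surrogate and its certificate \emph{uniformly} over the nearby tangent spaces $T'$ with $\rho(T,T')\le\xi(T)/2$, all while carrying $\xi(T)$ and $\mu(\Omega)$ explicitly so that the $\xi(T)^{-1}$ factor in $\la_n$ and the $\xi(T)^{-4}$ factor in the sample-size requirement come out as stated. This is coupled with a self-consistency check --- the fixed point of Step 2 must land inside the $\varepsilon$-neighborhood on which the linearization is valid, which is precisely what the lower bound on $n$ buys. Relative to \cite{ChandrasekaranPW12}, the genuinely Ising-specific work is that $H^\star$ is the covariance of the sufficient statistics rather than a Kronecker square of $(\Thstar)^{-1}$, so all Hessian and third-derivative bounds must be derived directly from boundedness of the sample space instead of being inherited from Gaussian algebra.
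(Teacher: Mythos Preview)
Your proposal follows the same primal--dual witness strategy as the paper (concentration of the score via a matrix Bernstein/Vershynin bound, a Brouwer fixed-point argument on a tangent-space surrogate, strict dual feasibility, and the gap assumption for algebraic consistency), and your identification of the main obstacle --- the curvature of the low-rank variety --- is correct. However, there is a genuine gap in how you resolve that obstacle.

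You write that the surrogate constrains $L$ to ``$L^\star + T'$ for tangent spaces $T'$ with twisting $\rho(T,T')\le\xi(T)/2$'', and in Step~3 you assert that strict dual feasibility yields $\rank(L_n)\le\rank(L^\star)$. But you never specify \emph{which} single $T'$ defines the surrogate, and neither membership in some nearby $T'$ nor strict dual feasibility on $T'^\perp$ delivers that rank bound: elements of a rank-$r$ tangent space can have rank up to $2r$, and the subgradient characterization of $\|\cdot\|_\ast$ at $\bar L$ is stated in terms of $T(\bar L)$, not the $T'$ on which you built the certificate. If $T(\bar L)\neq T'$ your certificate does not certify optimality of $\bar L$ for \ref{prob_SL}. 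Arguing ``uniformly over nearby $T'$'' does not break this circularity; Weyl's inequality alone only gives $\rank(L_n)\ge\rank(L^\star)$.

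The paper closes the loop by inserting an intermediate \emph{non-convex} variety-constrained problem: it first minimizes over a set $\M$ that explicitly enforces $\rank(L)\le\rank(L^\star)$ together with two technical $\ganorm{\cdot}$-bounds. The gap assumption makes any minimizer $(S_\M,L_\M)$ a \emph{smooth} point of the variety, so one can linearize at the specific tangent space $T'=T(L_\M)$. A separate linearization lemma (if a variety-constrained minimizer is smooth, it also minimizes over the affine tangent space, and if the convex side-constraints are slack the two minimizers coincide) then shows $(S_\Y,L_\Y)=(S_\M,L_\M)$, so the linearized solution inherits the correct rank and in particular $T(L_\Y)=T(L_\M)$. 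Only after this identification is strict dual feasibility invoked. Your Steps~2--3 compress these two problems into one and thereby lose precisely the mechanism that produces $\rank(L_n)\le\rank(L^\star)$ and makes the dual certificate sit on the right tangent space.
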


\subsection{Outline of the proof}
\label{sec:outline}

The proof of Theorem~\ref{thm_consistency} is similar to the one given
in \cite{ChandrasekaranPW12} for latent variable models with observed
Gaussians. More generally, it builds on a version of the
primal-dual-witness proof technique. The proof consists of the
following main steps:
\begin{itemize}
\item[(1)] First, we consider the \emph{correct model set} $\M$ whose
  elements are all parametrically and algebraically consistent under
  the stability, $\gamma$-feasibility, and gap assumptions. Hence, any
  solution $(S_\M, L_\M)$ to our problem, if additionally constrained
  to $\M$, is consistent.
\item[(2)] Second, since the set $\M$ is non-convex, we consider a
  simplified and linearized version $\Y$ of the set $\M$ and show that
  the solution $(S_\Y, L_\Y)$ to the problem constrained to the
  linearized model space $\Y$ is unique and equals $(S_\M,
  L_\M)$. Since it is the same solution, consistency follows from the
  first step.
\item[(3)] Third, we show that the solution $(S_\M, L_\M) = (S_\Y,
  L_\Y)$ also solves Problem~\ref{prob_SL}.  More precisely, we show
  that this solution is \emph{strictly dual feasible} and hence can be
  used as a witness as required for the primal-dual-witness
  technique. This implies that it is also the unique solution, with
  all the consistency properties from the previous steps.
\item[(4)] Finally, we show that the assumptions from
  Theorem~\ref{thm_consistency} entail all those made in the previous
  steps with high probability. Thereby, the proof is concluded.
\end{itemize}

\section{Discussion} 

Our result, that constitutes the first high-dimensional consistency
analysis for sparse + low-rank Ising models, requires slightly more
samples (in the sense of an additional logarithmic factor $\log d$,
and polynomial probability) than were required for consistent recovery
for the sparse + low-rank Gaussian models considered by
\cite{ChandrasekaranPW12}. This is because the strong tail properties
of multivariate Gaussian distributions do not hold for multivariate
Ising distributions. Hence, it is more difficult to bound the sampling
error $\IE[\Phi] - \Phi^n$ of the second-moment matrices, which
results in weaker probabilistic spectral norm bounds of this sampling
error. Under our assumptions, we believe that the sampling complexity,
that is, the number of samples required for consistent recovery of
sparse + low-rank Ising models, cannot be improved. We also provided a
detailed discussion of why all of our assumptions are important.
\medskip

It would be interesting to test for consistency experimentally, but
this is better done using a pseudo-likelihood approach because it
avoids the problem of computing costly normalizations. We believe that
likelihood and pseudo-likelihood behave similarly, but so far only
much weaker guarantees are known for the pseudo-likelihood approach
than the ones that we prove here.

\acks{We gratefully acknowledge financial support from  the German Science Foundation (DFG)
  grant (GI-711/5-1) within the priority program (SPP 1736) Algorithms
  for Big Data.}

\bibliography{references}

\section{Proof of the  Consistency Theorem}
\label{app:proof}

In this section, we prove Theorem~\ref{thm_consistency}.

\subsection{Preliminaries}
\label{app:constants}

Here, we give an overview of basic definitions and constants that are
used throughout the paper.  The constants are also necessary to refine
the problem-specific constants that appear in the assumptions and
claims of Theorem~\ref{thm_consistency}.

\paragraph{Duplication operator.}
Throughout we use the duplication operator
\[
\D:\textrm{Sym}(d) \to \textrm{Sym}(d) \times \textrm{Sym}(d), \;
M\mapsto (M, M).
\]

\paragraph{Norms.}
During the course of the paper we use several matrix norms. For $M\in
\textrm{Sym}(d)$, the $\ell_1$- and the nuclear norm are given by
\[
\|M\|_1 = \sum_{(i,j)\,\in\,[d]\times[d]} |M_{ij}|, 
\quad\textrm{ and }\: \|M\|_\ast = \sum_{i\,\in\,[d]} |\sigma_i|,
\]
where $\sigma_1, \ldots, \sigma_d$ are the singular values of $M$.  Note that for
$M\succeq0$ it holds $\|M\|_\ast= \tr M$. We also use the respective
dual norms. They are the $\ell_\infty$- and spectral norm given by
\[
\|M\|_\infty = \max_{(i,j)\,\in\,[d]\times[d]} |M_{ij}|, \quad\textrm{ and
}\: \|M\| = \max_{\|v\|_2=1} \|Mv\|_2 = \max_{i\,\in\,[d]}|\sigma_i|,
\]
where $\|\cdot\|_2$ is the standard Euclidean norm for vectors. 

\paragraph{Second-moment matrices and norm of the Hessian.}

In the introduction we used $\Phi^n$ and $\IE[\Phi]$ which actually
are the empirical and the population version of the second-moment
matrix, that is, $\Phi^n = \frac{1}{n}\sum_{k=1}^n
x^{(k)}\bPe{x^{(k)}}^\top$ and $\IE[\Phi] = \IE\bPe{XX^\top}$, where the
expectation is taken \wrt the true Ising model distribution with
parameter matrix $S^\star+L^\star$. Note that the gradient of the
negative log-likelihood satisfies $\nabla \ell(S^\star + L^\star) =
\Phi^\star - \Phi^n$, where we denoted $\Phi^\star = \IE[\Phi]$.
Moreover, we denote the Hessian as $H^\star = \nabla^2 \ell(S^\star +
L^\star)$ and its operator norm is given by
 \[\|H^\star\| = \max_{M\,\in\,\textrm{Sym}(d):\, \|M\|=1}\|H^\star M\|.\]

\paragraph{Norm compatibility constant.}

Since we will encounter the following constant several times in the
proof we give it its own symbol
\[
\omega = \max\left\{\frac{\nu \alpha}{3\beta(2-\nu)},1 \right\}.
\]
Later in Section~\ref{app_step1}, we will show that $\omega$ is
essentially a norm compatibility constant between the
$\ganorm{\cdot}$-norm and the spectral norm.

\paragraph{Problem-specific constants.}

Besides the norm compatibility constant we frequently use some
problem-specific constants that we define below. Here, $r_0 >0$ and
$l(r_0)>0$ are defined in Lemma~\ref{l_CgSLcons_boundedremainder}.
\begin{align*}
  c_0 &= 2l(r_0)\omega \max\left\{1, \frac{\nu\alpha}
  {2\beta(2-\nu)}\right\}^2 \\
  c_1 &= \max\left\{ 1, \frac{\nu\alpha}{2\beta(2-\nu)}\right\}^{-1}
  \frac{r_0}{2} \\
  c_2 &= \frac{40}{\alpha} + \frac{1}{\|H^\star\|}\\
  c_3 &= \bPr{\frac{6(2-\nu)}{\nu} + 1}c_2^2 \|H^\star\| \omega\\
  c_4 &= c_2 + \frac{3\alpha c_2^2 (2-\nu)}{16(3-\nu)} \\
  c_5 &= \max\{c_3, c_4\} \\
  c_6 &= \frac{\nu\alpha c_2}{2\beta (2-\nu)}.
\end{align*}
We now refine the statements of Theorem~\ref{thm_consistency}.

\paragraph{Minimum number of samples required (precise).}

We require at least
\[
n > c \kpa d\log d \max\bPc{\|\Phi^\star\|^{-1}, \|\Phi^\star\|
  \frac{\omega^2}{\xi(T)^2} \bPe{\frac{\alpha\nu}{32(3-\nu)}
    \min\left\{\frac{c_1}{2}, \frac{\alpha\nu\xi(T)}{128
      c_0(3-\nu)}\right\}}^{-2}}.
\]
samples for consistent recovery, where $\kpa>0$ is a positive constant
that is used to control the probability with which consistent recovery
is possible.

\paragraph{Choice of $\la_n$ (precise).}

For our consistency analysis we choose the following value for the
trade-off parameter $\la_n$ between the negative log-likelihood and
the regularization terms
\[
\la_n = \frac{6(2-\nu) }{\nu} \sqrt{\frac{c \kpa d\log d
    \|\Phi^\star\|}{n}} \frac{\omega}{\xi(T)}.
\]

\paragraph{Gap assumption (precise).}

The precise gap assumptions on the smallest-magnitude non-zero entry
$s_{\min}$ of $S^\star$ and the smallest non-zero eigenvalue
$\sigma_{\min}$ of $L^\star$ is given by
\[
s_{\min} \geq \frac{c_6 \la_n}{\mu(\Omega)}, \quad\textrm{ and }\quad
\sigma_{\min}\geq \frac{c_5 \la_n}{\xi(T)^2}.
\]

\subsection{Tangent space lemmas}

Low-rank tangent spaces play a fundamental role throughout the proof.
Therefore we characterize the tangent spaces at smooth points of the
low-rank variety before moving on.

\begin{lemma}
  \label{lem:lowranktspace}
  Suppose $L\in \Lc(r)$ is a rank-$r$ matrix. Then, the tangent space
  to $\Lc(r)$ at $L$ is given by
  \begin{align*}
    T(L) = \bPc{U X^\top + X U^\top:\; X\in\IR^{d\times r}} \subset
    \textrm{Sym}(d)
  \end{align*}
  where $L=UD U^\top$ is the (restricted) eigenvalue decomposition of
  $L$, that is, $U\in\IR^{d\times r}$ has orthonormal columns and
  $D\in\IR^{r\times r}$ is diagonal with the eigenvalues on the diagonal.
\end{lemma}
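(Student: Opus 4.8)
The plan is to establish the characterization of the tangent space $T(L)$ at a smooth point $L$ of the low-rank variety $\Lc(r)$ by a direct parametrization argument. First I would recall that $\Lc(r) = \{L\in\textrm{Sym}(d):\rank(L)\le r\}$ and that near a fixed rank-$r$ matrix $L$ this variety is a smooth embedded submanifold of $\textrm{Sym}(d)$; concretely, any symmetric rank-$r$ matrix in a neighborhood of $L$ can be written as $Y Y^\top$ (or more conveniently $UDU^\top + \text{perturbation}$) for a suitable local chart. The cleanest route is to use the map $\phi:\IR^{d\times r}\to\textrm{Sym}(d)$, $\phi(Z) = ZZ^\top$ when $L\succeq 0$, or in the indefinite case $\phi(Z,D') = ZD'Z^\top$ with $D'$ near $D$; its image is exactly the rank-$\le r$ symmetric matrices near $L$, and $\phi(U) = L$ (resp.\ $\phi(U,D)=L$). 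The tangent space $T(L)$ is then the image of the differential $d\phi$ at the base point.

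Next I would compute that differential. Writing $L = UDU^\top$ with $U$ having orthonormal columns and $D$ invertible diagonal, and perturbing $U \mapsto U + tX$ with $X\in\IR^{d\times r}$ (and, in the indefinite case, $D\mapsto D + tE$ with $E$ symmetric $r\times r$), one gets
\[
\frac{d}{dt}\Big|_{t=0}(U+tX)D(U+tX)^\top = XDU^\top + UDX^\top,
\]
and the $D$-perturbation contributes $UEU^\top$, which is already of the form $U\tilde X^\top + \tilde X U^\top$ with $\tilde X = \tfrac12 UE$. Since $D$ is invertible, as $X$ ranges over $\IR^{d\times r}$ the matrix $XD$ also ranges over all of $\IR^{d\times r}$, so the set of all such derivatives is precisely $\{U X^\top + X U^\top : X\in\IR^{d\times r}\}$. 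This shows the tangent space is contained in, and equals, the claimed set. One should also check this set is genuinely a linear subspace of $\textrm{Sym}(d)$ (clear: it is the image of a linear map $X\mapsto UX^\top + XU^\top$) and record its dimension, $dr - \binom{r}{2}$, which matches the known dimension of $\Lc(r)$ as a manifold near a rank-$r$ point, confirming that the parametrization is an immersion and we have captured the full tangent space rather than a proper subspace.

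The main obstacle, and the place requiring the most care, is making rigorous the claim that $\Lc(r)$ is a smooth manifold near $L$ and that $\phi$ (or its indefinite analogue) is a local parametrization whose image really is a full neighborhood of $L$ within $\Lc(r)$ — in particular handling the indefinite case, since $L^\star\succeq 0$ in the paper but the lemma as stated is for general symmetric $L$. I would handle this either by citing the standard fact (e.g.\ via the manifold structure of fixed-rank matrices, or Chandrasekaran et al.'s treatment) that the set of symmetric matrices of rank exactly $r$ is a smooth submanifold with the stated tangent space, or by giving the short self-contained argument: locally block-diagonalize so that $L$'s column space is a fixed $r$-dimensional subspace, use that rank stays $\ge r$ under small perturbation, and apply the implicit function theorem to the $(d-r)\times(d-r)$ Schur complement, which must vanish for rank to stay $\le r$; differentiating that vanishing condition yields exactly the constraint $P_{U^\perp} M P_{U^\perp} = 0$, i.e.\ $M\in T(L)$. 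Either way the remaining computations are the routine ones sketched above.
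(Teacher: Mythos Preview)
Your proposal is correct and follows essentially the same approach as the paper: both parametrize nearby rank-$r$ symmetric matrices by a map of the form $Z\mapsto Z\,\sign(D)\,Z^\top$ (the paper writes $A(t)\sign(D)A(t)^\top$ for curves, you write $\phi(Z,D')=ZD'Z^\top$) and read off the tangent space as the image of the differential, using invertibility of $D$ to show that $UX^\top+XU^\top$ exhausts all such derivatives. The only notable difference is that you are more explicit about the reverse inclusion---that \emph{every} tangent vector arises this way---suggesting a dimension count or the Schur-complement/implicit-function argument, whereas the paper simply asserts ``we can assume $\gamma(t)=A(t)\sign(D)A(t)^\top$'' and leaves that point implicit.
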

\begin{proof}
  The tangent space at $L$ is given by the span of all tangent vectors
  at zero to smooth curves $\gamma:(-1, 1)\to\Lc(r)$ initialized at
  $L$, that is, $\gamma(0)=L$.  Because $L$ has rank $r$ it is a
  smooth point of $\Lc(r)$ and we can assume that $\gamma(t) = A(t)
  \sign(D) A(t)^\top$ with rank-$r$ matrices $A(t)\in\IR^{d\times r}$
  for all $t\in(-1, 1)$ and $\sign(D)\in\IR^{d\times d}$ is the
  diagonal matrix whose diagonal entries are the signs of the
  eigenvalues of $L$, that is, they are in $\{-1, 1\}$. We can assume
  the signs of the eigenvalues along the curve to be fixed because we
  only consider smooth curves.  In particular $L=A(0) \sign(D)
  A(0)^\top$, so it must hold $A(0) = U |D|^{1/2}$. 
  Now, by the chain rule it holds
  \begin{align*}
    \gamma'(0) &= A(0) \sign(D) A'(0)^\top + A'(0)\sign(D)A(0)^\top \\
    &= U |D|^{1/2} \sign(D) A'(0)^\top + A'(0)\sign(D) |D|^{1/2} U^\top
  \end{align*}
  We still need to show that $A'(0)\sign(D) |D|^{1/2}$ can take
  arbitrary values.  To do so, for any $X\in\IR^{d\times r}$ consider
  $A(t)=A(0)+t X|D|^{-1/2}\sign(D)$ which has rank $r$ for
  sufficiently small $t$ since $A(0)$ has rank $r$ and the curve is
  smooth.  Moreover, it holds $A'(0)= X|D|^{-1/2}\sign(D)$.  Now with
  the particular choice of $A(t)$, since
  \[
  A'(0)\sign(D) |D|^{1/2}= X|D|^{-1/2}\sign(D)\sign(D) |D|^{1/2} =X
  \] 
  the tangential vector of the corresponding curve at zero is $UX^\top
  + X U^\top$.  
\end{proof}

Note that the variety of symmetric low-rank matrices $\Lc(r)$ has
dimension $rd - \frac{r(r-1)}{2}$. Since $L$ is a smooth point in
$\Lc(r)$ the tangent space $T(L)$ has the same dimension.
\medskip

One consequence of the form of the tangent spaces is the following
lemma that concerns the norms of projections on certain tangent spaces
and their orthogonal complements.

\begin{lemma}
  \label{lem:projnorms}
  For any two tangent spaces $\Omega$ and $T$ at any smooth points
  \wrt the varieties $\S(s)$ and $\Lc (r)$, respectively, we can bound
  the norms of projections of matrices $M, N\in\textrm{Sym}(d)$ in the
  following manner:
  \begin{align*}
    \|P_\Omega M\|_\infty &\leq \|M\|_\infty \quad\textrm{ and }\quad
    \|P_{\Omega^\perp}M\|_\infty \leq \|M\|_\infty \\
    \|P_T N\| &\leq 2\|N\| \quad\textrm{ and }\quad
    \|P_{T^\perp}N\| \leq \|N\|.
  \end{align*}
  In particular, for $\Y=\Omega\times T$ we have
  \[
    \ganorm{P_\Y(M,N)} \leq 2\ganorm{(M, N)}\quad\textrm{ and }\quad
    \ganorm{P_{\Y^\perp}(M,N)} \leq \ganorm{(M, N)}.
  \]
\end{lemma}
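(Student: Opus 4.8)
The plan is to handle the sparse and the low-rank parts separately, since $P_\Omega$ is essentially a coordinate projection whereas $P_T$ admits a simple closed form in terms of the column space of $L$, and then to assemble the two estimates through the definition of $\ganorm{\cdot}$.

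For the sparse tangent space, recall that $\Omega=\Omega(S)=\{M\in\textrm{Sym}(d):\supp(M)\subseteq\supp(S)\}$, so $P_\Omega$ simply zeroes out every entry $(i,j)\notin\supp(S)$ and keeps the others, while $P_{\Omega^\perp}$ keeps only the complementary entries. Hence each entry of $P_\Omega M$ and of $P_{\Omega^\perp}M$ equals either the corresponding entry of $M$ or $0$, and taking the maximum modulus over entries yields $\|P_\Omega M\|_\infty\le\|M\|_\infty$ and $\|P_{\Omega^\perp}M\|_\infty\le\|M\|_\infty$ at once.

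For the low-rank tangent space, write $L=UDU^\top$ with $U\in\IR^{d\times r}$ having orthonormal columns (Lemma~\ref{lem:lowranktspace}), and set $P_U=UU^\top$, $P_{U^\perp}=\Id-UU^\top$. First I would identify the orthogonal complement: for symmetric $N$ one computes $\langle N,\,UX^\top+XU^\top\rangle=2\,\tr(X^\top NU)$, whence $T^\perp=\{N\in\textrm{Sym}(d):NU=0\}$. Then, splitting $N=\big(N-P_{U^\perp}NP_{U^\perp}\big)+P_{U^\perp}NP_{U^\perp}$ and observing that $P_{U^\perp}NP_{U^\perp}\in T^\perp$ while $N-P_{U^\perp}NP_{U^\perp}=P_UN+NP_U-P_UNP_U\in T$ (take $X=NU-\tfrac12 UU^\top NU$), this is precisely the orthogonal decomposition with respect to $T\oplus T^\perp$, so $P_{T^\perp}N=P_{U^\perp}NP_{U^\perp}$ and $P_TN=N-P_{U^\perp}NP_{U^\perp}$. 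Since $\|P_{U^\perp}\|=1$ in spectral norm, $\|P_{T^\perp}N\|=\|P_{U^\perp}NP_{U^\perp}\|\le\|N\|$, and by the triangle inequality $\|P_TN\|\le\|N\|+\|P_{U^\perp}NP_{U^\perp}\|\le 2\|N\|$.

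The coupled bounds then follow by unwinding the definition: since $\Y=\Omega\times T$ we have $P_\Y(M,N)=(P_\Omega M,P_TN)$ and $P_{\Y^\perp}(M,N)=(P_{\Omega^\perp}M,P_{T^\perp}N)$, so with $\ganorm{(S,L)}=\max\{\|S\|_{\infty}/\gamma,\|L\|\}$ the previous estimates give $\ganorm{P_\Y(M,N)}\le\max\{\|M\|_\infty/\gamma,\,2\|N\|\}\le 2\ganorm{(M,N)}$ and $\ganorm{P_{\Y^\perp}(M,N)}\le\max\{\|M\|_\infty/\gamma,\,\|N\|\}=\ganorm{(M,N)}$. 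The only step that is not purely mechanical is extracting the closed form $P_{T^\perp}N=P_{U^\perp}NP_{U^\perp}$ from the description of $T$ in Lemma~\ref{lem:lowranktspace}; once that is available, the spectral-norm estimates — including the factor $2$, which arises because $P_T$, though an orthogonal projection for the Frobenius inner product, is not norm-one as an operator on the spectrally normed space — are forced by $\|P_{U^\perp}\|=1$ and the triangle inequality.
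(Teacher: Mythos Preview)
Your proof is correct and follows essentially the same route as the paper: both arguments derive the explicit formulas $P_T N = P_U N + NP_U - P_U N P_U$ and $P_{T^\perp}N = (I-P_U)N(I-P_U)$ from the description of $T$ in Lemma~\ref{lem:lowranktspace}, and then read off the spectral-norm bounds using $\|P_U\|,\|I-P_U\|\le 1$. The only cosmetic difference is that the paper bounds $\|P_T N\|$ via the splitting $P_T N = P_U N + (I-P_U)NP_U$ and submultiplicativity, whereas you use $P_T N = N - P_{T^\perp}N$ and the triangle inequality; the sparse part and the assembly into $\ganorm{\cdot}$ are identical.
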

\begin{proof}
  Recall that from Lemma~\ref{lem:lowranktspace} we have for smooth
  points $L\in \Lc(r)$ that
  \begin{align*}
    T = T(L) = \bPc{U X^\top + X U^\top:\; X\in\IR^{d\times r}}
    \subset \textrm{Sym}(d)
  \end{align*}
  where $L=UD U^\top$ is the (restricted) singular value decomposition
  of $L$. Then, we have more explicitly that
  \begin{align*}
	P_T N= \P_U N + N \P_U - \P_U N \P_U = \P_U N + (I - \P_U) N \P_U
  \end{align*}
  where $\P_U= UU^\top = \P_U^\top$ projects onto the column space of
  $U$. Note that $P_T N \in T$ since
  \begin{align*}
    \P_U N + N \P_U - \P_U N \P_U
    &= \P_U \left(N-\frac{1}{2}N\P_U\right) +
    \left(N^\top -\frac{1}{2}\P_UN^\top \right) \P_U \\
    &= U \left(U^\top \left(N-\frac{1}{2}N\P_U\right)\right)
    + \left(\left(N^\top -\frac{1}{2}\P_U N^\top \right) U\right)U^\top, 
  \end{align*}
  and that $N-P_T N$ is orthogonal to $T$ since
  \[
    N-P_T N = N -\P_U N -N\P_U +\P_U N\P_U = (I-\P_U) N (I-\P_U)
  \]
  and since for any $UX^\top +XU^\top\in T$ we have
  \begin{align*}
    &\scalp{(I-\P_U) N (I-\P_U), UX^\top +XU^\top} \\
    &\qquad\qquad\qquad= \tr \bPr{(I-\P_U) N (I-\P_U) UX^\top}  +
    \tr \bPr{(I-\P_U) N (I-\P_U)XU^\top} \\
    &\qquad\qquad\qquad= \tr \bPr{(I-\P_U) N (I-\P_U)XU^\top} \\
    &\qquad\qquad\qquad= \tr \bPr{XU^\top (I-\P_U) N (I-\P_U)} \\
    &\qquad\qquad\qquad= \tr \bPr{ \bPr{(I-\P_U)UX^\top}^\top N (I-\P_U)} \\
    &\qquad\qquad\qquad=0
  \end{align*}
  where the second and last inequality follow from $\P_U UX^\top = U
  U^\top U X^\top = UX^\top = I U X^\top$, and we used $\tr (AB) =
  \tr(BA)$ in the third equality. Thus, $P_T N$ is indeed the
  orthogonal projection of $N$ onto $T$.  Now, by sub-multiplicativity
  of the spectral norm
  \begin{align*}
    \|P_T N\| &\leq  \|\P_U N\| + \|(I - \P_U) N \P_U\| \\
    &\leq \|\P_U\| \| N\| + \|(I - \P_U)\| \|N\| \|\P_U\|
    \leq 2 \|N\|
  \end{align*}
  since $\P_U$ and $I-\P_U$ are projection matrices, that is,
  $\|\P_U\|\leq 1$ and $\|I-\P_U\|\leq1$. The other claims are easy or
  follow similarly.
\end{proof}

\subsection{Coupled stability}
\label{app:coupledstability}

Provided that the stability and the $\gamma$-feasibility assumptions
hold, here we will prove fundamental bounds in the
$\ganorm{\cdot}$-norm on the minimum gains and the maximum effects of
the Hessian $H^\star = \nabla^2 \ell(S^\star + L^\star)$ on the direct
sum of the tangent space $\Omega$ and tangent spaces $T'$ close to the
true tangent space $T=T(L^\star)$.  These bounds are analogous to the
stability assumption, only that now they take the necessary coupling
between the different tangent spaces into account. Therefore, we refer
to the result as \emph{coupled stability}.  Importantly, this coupled
stability will guarantee transversality of the respective tangent
spaces.
\medskip

The proof will use the following auxiliary lemma that bounds 
the norm compatibility constant $\xi$ on a low-rank tangent space by
the one on a nearby low-rank tangent space.
\begin{lemma} \label{lem_boundtwisted}
  Let $T_1, T_2$ be two matrix subspaces of the same dimension with
  bounded twisting, that is,
\begin{align*}
\rho(T_1, T_2) = \underset{\|M\| =1}{\max}
\bNorm {\bPe{P_{T_1}-P_{T_2}}(M)]} < 1.
\end{align*}
Then, it holds that
  \begin{align*}
    \xi(T_2)\leq\frac{1}{1-\rho(T_1, T_2)} \bPe{\xi(T_1) +
      \rho(T_1, T_2)}.
  \end{align*}
\end{lemma}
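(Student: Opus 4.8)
The plan is to use the variational description $\xi(T)=\max_{0\neq M\in T}\|M\|_\infty/\|M\|$ and to transport a near-worst-case matrix of $T_2$ into $T_1$ by the orthogonal projection $P_{T_1}$ (with respect to the trace inner product on $\textrm{Sym}(d)$), paying for the transfer with the twisting $\rho=\rho(T_1,T_2)$. Two elementary facts will be used repeatedly: $\|A\|_\infty\leq\|A\|$ for every matrix $A$, since $|A_{ij}|=|e_i^\top A e_j|\leq\|A\|$; and, by homogeneity, $\|(P_{T_1}-P_{T_2})(M)\|\leq\rho\,\|M\|$ for every $M$, not only for those of unit spectral norm.

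First I would fix a nonzero $M\in T_2$ and set $N=P_{T_1}M\in T_1$. Since $M\in T_2$ we have $M=P_{T_2}M$, hence $M-N=(P_{T_2}-P_{T_1})(M)$ and therefore $\|M-N\|\leq\rho\,\|M\|$, which in turn gives $\|N\|\leq\|M\|+\|M-N\|\leq(1+\rho)\|M\|$. Next, decomposing $M=N+(M-N)$, applying the triangle inequality in $\|\cdot\|_\infty$, and using $N\in T_1$ together with $\|M-N\|_\infty\leq\|M-N\|$, I obtain
\[
\|M\|_\infty\;\leq\;\|N\|_\infty+\|M-N\|_\infty\;\leq\;\xi(T_1)\|N\|+\|M-N\|\;\leq\;\left(\xi(T_1)(1+\rho)+\rho\right)\|M\|.
\]
Dividing by $\|M\|$ and maximizing over $M\in T_2$ yields $\xi(T_2)\leq\xi(T_1)(1+\rho)+\rho$. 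Finally, since $1+\rho\leq(1-\rho)^{-1}$ and $\rho\leq\rho(1-\rho)^{-1}$ for $\rho\in(0,1)$, the right-hand side is at most $(1-\rho)^{-1}(\xi(T_1)+\rho)$, which is the assertion.

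The one point that needs care is the estimate on $\|N\|=\|P_{T_1}M\|$. One must not write $\|P_{T_1}M\|\leq\|M\|$: the norm governing $\xi$ and $\rho$ here is the spectral norm, not the Frobenius norm that makes $P_{T_1}$ a contraction, and by Lemma~\ref{lem:projnorms} the spectral norm of a projection onto a low-rank tangent space can be as large as $2$. The remedy is the indirect bound $\|P_{T_1}M\|\leq\|M\|+\|M-P_{T_1}M\|$, where the identity $M-P_{T_1}M=(P_{T_2}-P_{T_1})(M)$ — valid precisely because $M$ already lies in $T_2$ — converts the defect of the projection into a factor of $\rho$. Note that the equal-dimension hypothesis on $T_1,T_2$ is not actually used in this particular estimate; it only ensures that $\rho(T_1,T_2)$ is the symmetric quantity used in the surrounding arguments.
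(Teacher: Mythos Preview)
Your proof is correct. The paper does not give its own argument for this lemma; it simply cites \cite[Lemma~3.1]{ChandrasekaranPW12}, so there is nothing in the paper to compare against. Your route via $N=P_{T_1}M$ and the identity $M-N=(P_{T_2}-P_{T_1})(M)$ is the natural one, and you are right to flag that one cannot invoke $\|P_{T_1}M\|\leq\|M\|$ directly in the spectral norm. As a bonus, your intermediate estimate $\xi(T_2)\leq\xi(T_1)(1+\rho)+\rho$ is actually sharper than the stated bound, since $(1+\rho)(1-\rho)=1-\rho^2\leq1$; you then relax it to match the lemma's formulation.
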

\begin{proof}
  See \cite[Lemma~3.1]{ChandrasekaranPW12}.
\end{proof}

Now, we are ready to formulate the bounds. We do so in terms of the
problem-specific constants $\alpha,\beta,\nu$. Moreover, we use the
norm compatibility constants $\mu(\Omega)$ and $\xi(T)$.

\begin{proposition}[Coupled stability]
  \label{prop_irreptransvcond}
  Suppose that the stability and the gamma-feasibility assumptions
  hold. Let $\gamma \in [\gamma_{\min},\gamma_{\max}]$. Then, for
  $\Y=\Omega\times T'$ with $\rho(T, T')\leq\frac{\xi(T)}{2 }$ it
  holds
  \begin{itemize}
    \item[(a)] The minimum gain of $H^\star$ restricted to
      $\Omega\oplus T'$ is bounded from below, that is, for all $(M,N)\in
      \Y$ it holds that
      \begin{align*}
        \ganorm{P_{\Y} \D H^\star (M + N) } \geq
        \frac{\alpha}{2} \ganorm{(M, N)}.
    \end{align*}
    \item[(b)] The maximum effect of elements in $\Y=\Omega\times T'$ on the
      orthogonal complement $\Y^\perp$ is bounded from above, that is, it
      holds for all $(M,N)\in \Y$ that
      \begin{align*}
        \ganorm{P_{\Y^\perp} \D H^\star (M + N)} \leq (1-\nu)
        \ganorm{P_{\Y} \D H^\star (M + N)}.
      \end{align*}
  \end{itemize}
\end{proposition}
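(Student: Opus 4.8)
The plan is to transfer the decoupled stability bounds of Assumption~\ref{a:stability}, which are stated in terms of the $\ell_\infty$-norm on the sparse side and the spectral norm on the low-rank side, into coupled bounds in the single norm $\ganorm{\cdot}$ on the product space $\Y = \Omega\times T'$. First I would fix $\gamma\in[\gamma_{\min},\gamma_{\max}]$ and a tangent space $T'$ with $\rho(T,T')\le\xi(T)/2$; by Lemma~\ref{lem_boundtwisted} (with $T_1=T$, $T_2=T'$) the norm compatibility constant $\xi(T')$ stays comparable to $\xi(T)$, so the stability constants $\alpha_{T,\xi(T)/2}$, $\delta_{T,\xi(T)/2}$, $\beta_T$ (which are already defined as minima/maxima over all such $T'$) apply uniformly. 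The key structural point is that because $\ell$ depends only on $S+L$, the Hessian acts as $\D H^\star(M+N)$, i.e.\ the same matrix $H^\star(M+N)$ is fed into both the $\Omega$-component and the $T'$-component of the projection $P_\Y$; this coupling is exactly what one must control.

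For part (a), I would lower-bound $\ganorm{P_\Y \D H^\star(M+N)} = \max\{\|P_\Omega H^\star(M+N)\|_\infty/\gamma,\ \|P_{T'}H^\star(M+N)\|\}$. Writing $M+N$ and splitting, the ``diagonal'' contributions $P_\Omega H^\star M$ and $P_{T'}H^\star N$ are bounded below by $\alpha_\Omega\|M\|_\infty$ and $\alpha_{T,\varepsilon}\|N\|$ respectively (using $\|M\|_{\infty}=1$-normalization and homogeneity), hence by $\alpha\|M\|_\infty$ and $\alpha\|N\|$. The ``cross'' terms $P_\Omega H^\star N$ and $P_{T'}H^\star M$ are the nuisance: I would bound $\|P_\Omega H^\star N\|_\infty \le \|H^\star N\|_\infty$ and pass to the spectral norm on the $T'$-side via the compatibility constant, picking up a factor controlled by $\beta_T\xi(T)$; symmetrically $\|P_{T'}H^\star M\|\le 2\|H^\star M\|\le 2\beta_\Omega\|M\|$, and $\|M\|\le\mu(\Omega)\|M\|_\infty$. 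The point of the $\gamma$-feasibility bounds $\gamma\ge\gamma_{\min}\propto \beta\xi(T)/(\nu\alpha)$ and $\gamma\le\gamma_{\max}\propto \nu\alpha/(\beta\mu(\Omega))$ is precisely that dividing the $\Omega$-term by $\gamma$ and multiplying the $T'$-cross-term by $\mu(\Omega)$ (and paying $\gamma$ on the $T'$-side of the $\ganorm{\cdot}$ max vs.\ $1/\gamma$ on the $\Omega$-side) makes every cross term absorbable into at most $\alpha/2$ times $\ganorm{(M,N)}$, leaving at least $\alpha/2\cdot\ganorm{(M,N)}$. A short case analysis on whether $\|M\|_\infty/\gamma$ or $\|N\|$ realizes $\ganorm{(M,N)}$ finishes it.

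For part (b), the same decomposition is applied with $P_{\Y^\perp}=(P_{\Omega^\perp},P_{T'^\perp})$ in place of $P_\Y$: the diagonal terms are bounded above by $\delta_\Omega\|M\|_\infty$ and $\delta_{T,\varepsilon}\|N\|$, hence by $\delta\,\ganorm{(M,N)}$ after the same norm conversions, and the cross terms are handled exactly as in (a) using $\gamma$-feasibility, so that $\ganorm{P_{\Y^\perp}\D H^\star(M+N)}\le(\delta + \text{(small)})\ganorm{(M,N)}$. Combining with the lower bound from (a), which gives $\ganorm{P_\Y\D H^\star(M+N)}\ge(\alpha/2)\ganorm{(M,N)}$, and using Assumption~\ref{a:stability}(2), $\delta/\alpha\le1-2\nu$, one gets $\ganorm{P_{\Y^\perp}\D H^\star(M+N)}\le(1-\nu)\ganorm{P_\Y\D H^\star(M+N)}$ provided the ``small'' slack absorbed from the cross terms is tuned (again via the explicit constants in $\gamma_{\min},\gamma_{\max}$) to at most $\nu$ times the lower bound.

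The main obstacle is purely bookkeeping rather than conceptual: one must choose exactly how to split the slack from the four cross terms between parts (a) and (b) so that the $\gamma$-feasibility inequalities — which only give one two-sided constraint on $\gamma$ — simultaneously buy the $\alpha/2$ lower bound in (a) and the extra $\nu$-margin in (b). I expect this to require carefully tracking the constants $\beta_\Omega,\beta_T,\mu(\Omega),\xi(T)$ and invoking the equivalent form $\mu(\Omega)\xi(T)\le\frac{1}{6}(\nu\alpha/(\beta(2-\nu)))^2$ of Assumption~\ref{a:gamma}; the factor $2$ losses from Lemma~\ref{lem:projnorms} ($\|P_TN\|\le2\|N\|$) and from Lemma~\ref{lem_boundtwisted} must be budgeted into these constants as well. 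This is where following \cite{ChandrasekaranPW12} closely pays off, since the algebra is essentially theirs with $H^\star$ replacing the Gaussian Hessian.
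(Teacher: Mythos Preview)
Your plan matches the paper's proof essentially step for step: split $H^\star(M+N)$ into diagonal and cross pieces, lower-bound the diagonal pieces by $\alpha$, upper-bound the cross pieces by $\beta$ times a norm-compatibility factor, and use the $\gamma$-feasibility window so that $\beta\max\{3\xi(T)/\gamma,\,2\mu(\Omega)\gamma\}\le \nu\alpha/(2-\nu)$. The paper does not do a case analysis on which component realises $\ganorm{(M,N)}$; it simply uses $\max\{a-b,\,c-d\}\ge \max\{a,c\}-\max\{b,d\}$, which is a bit cleaner but equivalent to what you describe.

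There is one concrete point where your plan as written would not close. In part~(b) you propose to combine the upper bound $\ganorm{P_{\Y^\perp}\D H^\star(M+N)}\le(\delta+\text{small})\,\ganorm{(M,N)}$ with the \emph{final} lower bound $\ganorm{P_\Y\D H^\star(M+N)}\ge(\alpha/2)\ganorm{(M,N)}$ from~(a). That is too lossy: with cross term $\nu\alpha/(2-\nu)$ you would need $\tfrac{2}{\alpha}\bigl(\delta+\tfrac{\nu\alpha}{2-\nu}\bigr)\le 1-\nu$, which together with $\delta\le(1-2\nu)\alpha$ only holds for $\nu\ge 1/3$, not for all $\nu\in(0,1/2]$. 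The paper instead feeds in the \emph{intermediate} lower bound from the proof of~(a), namely $\ganorm{P_\Y\D H^\star(M+N)}\ge\bigl(\alpha-\tfrac{\nu\alpha}{2-\nu}\bigr)\ganorm{(M,N)}$, and then checks the algebraic identity
\[
\delta+\frac{\nu\alpha}{2-\nu}\;\le\;(1-2\nu)\alpha+\frac{\nu\alpha}{2-\nu}\;=\;(1-\nu)\Bigl(\alpha-\frac{\nu\alpha}{2-\nu}\Bigr),
\]
which gives exactly the factor $1-\nu$ for every $\nu\in(0,1/2]$. So in your bookkeeping you must retain this sharper intermediate bound rather than collapsing it to $\alpha/2$.
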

\begin{proof}
  Note first that the range for $\gamma$ is non-empty because of the
  $\gamma$-feasibility assumption.
  \medskip
  
  (a) For the first claim note that
  \begin{align*}
    P_{\Y} \D H^\star (M + N) = (P_\Omega H^\star (M+N), P_{T'}
    H^\star(M+N)).
  \end{align*}
  For bounding the $\ganorm{\cdot}$-norm of the tuple on the right
  hand side, we bound the respective norms for both tuple entries
  separately. For the first entry we get that
  \begin{align*}
    \| P_\Omega H^\star(M+N)\|_\infty
    &\geq \| P_\Omega H^\star M\|_\infty - \| P_\Omega H^\star N\|_{\infty} \\
    &\geq \| P_\Omega H^\star M\|_\infty - \| H^\star N\|_{\infty} \\
    &\geq \alpha_\Omega \|M\|_\infty - \beta_T \|N\|_\infty \\
    &\geq \alpha  \|M\|_\infty - \beta \|N\|_\infty \\
    &\geq \alpha  \|M\|_\infty - \beta \xi(T') \|N\| \\
    &\geq \alpha  \|M\|_\infty - 3\beta \xi(T) \|N\|,
  \end{align*}
  where the last inequality follows from 
  \[
    \xi(T') \leq \frac{\xi(T)+\rho(T, T')}{1-\rho(T, T')} \leq
    \frac{\xi(T) + \xi(T)/2}{1-\xi(T)/2} = \frac{3 \xi(T)}{2-\xi(T)}
    \leq 3 \xi(T),
  \]
  which itself follows from Lemma~\ref{lem_boundtwisted} and from
  $\rho(T, T')\leq \frac{\xi(T)}{2}\leq \frac{1}{2} <1$. Note that
  since $\|\cdot\|_\infty \leq \|\cdot\|$, we have $\xi(T)\leq 1$.
 \medskip

  Similarly, we get for the second entry in the tuple that
  \begin{align*}
    \| P_{T'} H^\star(M+N)\|
    &\geq \| P_{T'} H^\star N\| - \| P_{T'} H^\star M\| \\
    &\geq \| P_{T'} H^\star N\| - 2\| H^\star M\| \\
    &\geq \alpha_{T,\xi(T)/2} \|N\|- 2\beta_\Omega \|M\| \\
    &\geq \alpha \|N\|- 2\beta \|M\| \\
    &\geq \alpha \|N\|- 2\beta \mu(\Omega)\|M\|_\infty, 
  \end{align*}
  where the second inequality follows from Lemma~\ref{lem:projnorms}.
  \medskip
	
  Putting both bounds together gives
  \begin{align}
    \ganorm{P_{\Y} \D H^\star (M + N)}
    &\geq \max\bPc{\frac{\alpha\|M\|_\infty - 3\beta \xi(T)
        \|N\|}{\gamma}, \alpha \|N\|- 2\beta
      \mu(\Omega)\|M\|_\infty} \notag \\
    &\geq \alpha \ganorm{(M, N)} - \beta \max\bPc{\frac{3\xi(T)\|N\|}{\ga},
      2\mu(\Omega)\|M\|_\infty} \notag \\
    &\geq \alpha \ganorm{(M, N)} - \beta \max\bPc{\frac{3\xi(T)}{\ga},
      2\mu(\Omega)\gamma } \ganorm{(M, N)} \notag \\ 
    &\geq \alpha \ganorm{(M, N)} - \frac{\nu \alpha}{(2-\nu)} \ganorm{(M, N)}
    \label{eq:gammaf} \\
    &\geq \frac{\alpha}{2} \ganorm{(M, N)}, \notag
  \end{align}
  where the second inequality is triangle inequality, and the
  second-to-last inequality follows from the bounds on $\gamma$, and
  the final inequality follows from $\nu\leq 1/2$.
  \medskip

  (b) For the second claim note that
  \begin{align*}
    P_{\Y^\perp} \D H^\star (M + N) = (P_{\Omega^\perp} H^\star
    (M + N), P_{{T'}^\perp} H^\star(M + N)).
  \end{align*}
  Again, we bound the respective norms of both tuple entries. For the
  first entry we get that
  \begin{align*}
    \| P_{\Omega^\perp} H^\star(M+N)\|_{\infty}
    &\leq \|P_{\Omega^\perp} H^\star M\|_{\infty} + \| P_{\Omega^\perp}
    H^\star N\|_{\infty} \\
    &\leq \|P_{\Omega^\perp} H^\star M\|_{\infty} + \| H^\star N\|_{\infty} \\
    &\leq \delta_\Omega \|M\|_\infty + \beta_T \|N\|_{\infty} \\
    &\leq \delta \|M\|_\infty + \beta \xi(T') \|N\| \\
    &\leq \delta \|M\|_\infty + 3\beta \xi(T) \|N\|.
  \end{align*}
  Similarly, we get for the second entry that
  \begin{align*}
    \| P_{T'^\perp} H^\star(M+N)\| &\leq \| P_{T'^\perp}
    H^\star M\| + \| P_{T'^\perp} H^\star N\| \\
    &\leq \| H^\star M\| + \| P_{T'^\perp} H^\star N\|_{{2}}\\
    &\leq \beta_\Omega  \|M\| + \delta_{T,\xi(T)/2} \|N\|\\
    &\leq \beta \mu(\Omega)\|M\|_\infty + \delta \|N\| \\
    &\leq 2\beta \mu(\Omega)\|M\|_\infty + \delta \|N\|.
  \end{align*}
  Together it follows
  \begin{align*}
    \ganorm{P_{\Y^\perp}\D H^\star (M + N)} &\leq \delta  \ganorm{(M, N)}+
    \beta \max\bPc{\frac{3 \xi(T) \|N\|}{\gamma},
      2\mu(\Omega)\|M\|_\infty} \\
    &\leq \delta  \ganorm{(M, N)}+ \beta \max\bPc{\frac{3\xi(T)}{\ga},
      2\mu(\Omega)\ga}\ganorm{(M, N)}\\
    &\leq \bPr{\delta +\frac{\nu\alpha}{2-\nu}} \ganorm{(M, N)} \\
    &\leq \bPr{\delta +\frac{\nu\alpha}{2-\nu}} \bPr{ \alpha -
      \frac{\nu \alpha}{(2-\nu)}}^{-1} \ganorm{P_{\Y} \D H^\star
      (M + N)}\\
    &\leq (1-\nu)\ganorm{P_{\Y} \D H^\star  (M + N)},
  \end{align*}
  where we once again used the bounds on $\gamma$ for the third
  inequality, the fourth inequality is the same as in
  Equation~\eqref{eq:gammaf}, and the last inequality follows from
  the stability assumption, that is, $\frac{\delta}{\alpha} \leq 1-2\nu$
  and some algebra, namely
  \begin{align*}
    \delta +\frac{\nu\alpha}{2-\nu} \leq(1-2\nu)\alpha
    +\frac{\nu\alpha} {2-\nu} = (1-\nu)\alpha +\frac{\nu\alpha -
      (2-\nu)\nu\alpha}{2-\nu} =(1-\nu)\bPr{ \alpha - \frac{\nu
        \alpha}{(2-\nu)}}.
  \end{align*}
\end{proof}

It is easy to see that coupled stability implies transversality.

\begin{remark}[transversality]
  Suppose there exists $0\neq K\in \Omega\,\cap\, T'$. Then, choosing
  $M=K$ and $N=-K$ in Proposition~\ref{prop_irreptransvcond}(a)
  contradicts the stability assumption because
  \[
  0 = \ganorm{P_{\Y} \D H^\star (K + (-K))} \geq \frac{\alpha}{2}
  \ganorm{(K,-K)} = \frac{\alpha}{2}\max\bPc{
    \frac{\|K\|_\infty}{\gamma}, \|K\| } > 0
  \]
  since $\alpha>0$. Hence, we have transversality, that is,
  $\Omega\cap T'=\{0\}$.
\end{remark}

From now on, we generally assume that the stability assumption and the
$\gamma$-feasibility assumption are satisfied (and hence it holds
coupled stability for nearby tangent spaces).

\subsection{Step 1: Constraining the problem to consistency}
\label{app_step1}

In this section, we consider a restricted problem where parametric and
algebraic consistency of the solution are explicitly enforced, namely
\begin{equation}
\begin{array}{lrlr}
  (S_\M, L_\M) = &\underset{S, \,L} {\argmin}
  &\ell(S+L) + \la_n \bPr{\gamma \|S\|_{1} + \|L\|_\ast} \\
  &\subjto& (S, L)\in \M&
\end{array} \tag{\mbox{SL-$\M$}}\label{prob_variety}
\end{equation}
with the \emph{non-convex} constraint set
(compare~\cite{ChandrasekaranPW12})
\begin{align*}
  \M = &\Big\{(S, L) :\, S\in\Omega(S^\star),\, \rank(L)\leq \rank(L^\star),\\
  &\quad \ganorm{\D H^\star (\Delta_S + \Delta_L)} \leq 9\la_n,\,
  \|P_{T^\perp}(\Delta_L)\| \leq \frac{\xi(T) \la_n}
  {\omega \|H^ \star\|} \Big\}, 
\end{align*}
where $\Delta_S = S-S^\star$ and $\Delta_L = L - L^\star$ are the
errors, and the constant $\omega$ and the operator norm $\|H^\star\|$
of the Hessian are defined in Section~\ref{app:constants}.  The first
two constraints ensure that $S$ and $L$ are in the correct algebraic
varieties, and the latter two constraints enforce parametric
consistency.  Later in the proof, we will show that the
additional constraints are actually inactive at the optimal solution
$(S_\M, L_\M)$. 

\subsubsection{Parametric consistency}

Let us first discuss how the last two constraints in the description
of $\M$ enforce parametric consistency.  For that we need the
following lemma that shows that $\omega$ is closely related to a norm
compatibility constant between the $\ganorm{\cdot}$- and the spectral
norm.
\begin{lemma}
  \label{lem:normcompg2}
  For $\gamma \in [\gamma_{\min},\gamma_{\max}]$ and
  $M\in\textrm{Sym}(d)$ we have
  \[
  \ganorm{\D M} \leq \frac{\omega}{\xi(T)}\|M\|.
  \]
\end{lemma}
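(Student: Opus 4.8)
The plan is to unwind the definition of the $\ganorm{\cdot}$-norm applied to the duplicated pair $\D M = (M,M)$, namely
\[
\ganorm{\D M} = \max\Bigl\{\tfrac{\|M\|_\infty}{\gamma}, \|M\|\Bigr\},
\]
and bound each of the two terms in the maximum by $\tfrac{\omega}{\xi(T)}\|M\|$. For the second term this is immediate since $\omega \geq 1$ (by its definition as a maximum with $1$) and $\xi(T)\leq 1$ (because $\|\cdot\|_\infty\leq\|\cdot\|$ forces the compatibility constant to be at most $1$, a fact already noted in the proof of Proposition~\ref{prop_irreptransvcond}), so $\tfrac{\omega}{\xi(T)}\geq 1$ and hence $\|M\|\leq \tfrac{\omega}{\xi(T)}\|M\|$. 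For the first term, the key is to use $\|M\|_\infty\leq\|M\|$ together with a lower bound on $\gamma$; since $\gamma\geq\gamma_{\min} = \tfrac{3\beta(2-\nu)\xi(T)}{\nu\alpha}$, we get $\tfrac{1}{\gamma}\leq\tfrac{\nu\alpha}{3\beta(2-\nu)\xi(T)}$, and therefore
\[
\frac{\|M\|_\infty}{\gamma} \leq \frac{\nu\alpha}{3\beta(2-\nu)\xi(T)}\|M\| \leq \frac{\omega}{\xi(T)}\|M\|,
\]
where the last step uses $\omega = \max\bigl\{\tfrac{\nu\alpha}{3\beta(2-\nu)},1\bigr\}\geq \tfrac{\nu\alpha}{3\beta(2-\nu)}$.

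Taking the maximum of the two bounds yields the claim. The only mild subtlety — and the step I would be most careful about — is making sure the inequalities $\omega\geq 1$ and $\xi(T)\leq 1$ are properly invoked so that both branches of the $\max$ are genuinely dominated by $\tfrac{\omega}{\xi(T)}\|M\|$; both facts are already established in the excerpt (the former from the definition of $\omega$, the latter from $\|\cdot\|_\infty\leq\|\cdot\|$), so no new work is needed. Note also that the range $[\gamma_{\min},\gamma_{\max}]$ is nonempty by the $\gamma$-feasibility assumption, so the hypothesis $\gamma\in[\gamma_{\min},\gamma_{\max}]$ is meaningful and in particular gives the lower bound $\gamma\geq\gamma_{\min}$ that drives the argument.
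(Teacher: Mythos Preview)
Your proof is correct and follows essentially the same approach as the paper's own proof: both rely on $\|M\|_\infty\leq\|M\|$, the lower bound $\gamma\geq\gamma_{\min}$, the fact $\xi(T)\leq 1$, and the definition of $\omega$. The only cosmetic difference is that the paper bounds the maximum in a single chain via $\max\{1/\gamma,1\}\|M\|$ and then factors out $1/\xi(T)$, whereas you treat the two branches of the $\max$ separately; the content is identical.
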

\begin{proof}
  We have $\|M\|_\infty \leq \|M\|$ for $M\in \textrm{Sym}(d)$,
  and it holds $\xi(T)\leq 1$. By our choice of $\gamma$ it follows that
  \begin{align*}
    \ganorm{\D M}
    &= \max\left\{\frac{\|M\|_{\infty}}{\gamma},
    \|M\|\right\} \\
    &\leq \max\left\{\frac{1}{\gamma}, 1\right\} \|M\|\\
    &\leq \max\left\{\frac{1}{\gamma_{\min}},1\right\}
    \|M\| = \max\left\{\frac{\nu \alpha}{3\beta(2-\nu)\xi(T)},1\right\}
    \|M\|\\
    &\leq \max\left\{\frac{\nu \alpha}{3\beta(2-\nu)\xi(T)},
    \frac{1}{\xi(T)}\right\}\|M\| \\
    &=\frac{\omega}{\xi(T)} \|M\|,
  \end{align*}
  using $\omega = \max\left\{\frac{\nu \alpha}{3\beta(2-\nu)},1 \right\}$.
\end{proof}  

\begin{proposition}[parametric consistency]
  \label{prop:enforceparacons}
  Let $\gamma \in [\gamma_{\min},\gamma_{\max}]$ and let
  $(S,L)\in\M$. Then, with $c_2=\frac{40}{\alpha} +
  \frac{1}{\|H^\star\|}$ it holds that
  \[
  \ganorm{(\Delta_S, \Delta_L)}\leq c_2\la_n.
  \]
  
\end{proposition}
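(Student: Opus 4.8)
The plan is to bound $\ganorm{(\Delta_S,\Delta_L)}$ by controlling the compound error $\Delta_\Theta = \Delta_S + \Delta_L$ first, and then splitting off the low-rank-normal component $P_{T^\perp}(\Delta_L)$, which is exactly what the last two constraints in the definition of $\M$ give us a grip on. Write $\Y = \Omega \times T$. The third constraint says $\ganorm{\D H^\star(\Delta_S+\Delta_L)} \le 9\la_n$, and the fourth says $\|P_{T^\perp}(\Delta_L)\| \le \frac{\xi(T)\la_n}{\omega\|H^\star\|}$. First I would decompose $(\Delta_S,\Delta_L) = (\Delta_S, P_T\Delta_L) + (0, P_{T^\perp}\Delta_L)$, so by the triangle inequality for $\ganorm{\cdot}$ it suffices to bound $\ganorm{(\Delta_S, P_T\Delta_L)}$ and $\ganorm{(0,P_{T^\perp}\Delta_L)} = \|P_{T^\perp}\Delta_L\|$ separately. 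The second of these is immediate from the fourth constraint and $\xi(T)\le 1$, $\omega\ge 1$, so it contributes at most $\la_n$ (or a small multiple).

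For the main term, the idea is to apply coupled stability (Proposition~\ref{prop_irreptransvcond}(a)) with $T' = T$, which is legitimate since $\rho(T,T)=0\le \xi(T)/2$. This yields $\ganorm{P_\Y \D H^\star(\Delta_S + P_T\Delta_L)} \ge \frac{\alpha}{2}\ganorm{(\Delta_S, P_T\Delta_L)}$, so the task reduces to showing $\ganorm{P_\Y \D H^\star(\Delta_S + P_T\Delta_L)}$ is $O(\la_n)$. I would write $\Delta_S + P_T\Delta_L = (\Delta_S + \Delta_L) - P_{T^\perp}\Delta_L$, so by linearity of $\D H^\star$ and of $P_\Y$,
\[
\ganorm{P_\Y \D H^\star(\Delta_S + P_T\Delta_L)} \le \ganorm{P_\Y\D H^\star(\Delta_S+\Delta_L)} + \ganorm{P_\Y \D H^\star P_{T^\perp}\Delta_L}.
\]
The first summand is at most $\ganorm{\D H^\star(\Delta_S+\Delta_L)} \cdot 2 \le 18\la_n$ using Lemma~\ref{lem:projnorms} to drop the projection $P_\Y$ at the cost of a factor $2$ (or one can keep $P_\Y$ and bound directly by the constraint, again up to the factor $2$). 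The second summand is where the fourth constraint pays off: $P_{T^\perp}\Delta_L$ has small spectral norm, so $\|H^\star P_{T^\perp}\Delta_L\| \le \|H^\star\|\cdot\|P_{T^\perp}\Delta_L\| \le \frac{\xi(T)\la_n}{\omega}$, and then Lemma~\ref{lem:normcompg2} converts the spectral norm bound into a $\ganorm{\cdot}$-bound, $\ganorm{\D H^\star P_{T^\perp}\Delta_L} \le \frac{\omega}{\xi(T)}\cdot\frac{\xi(T)\la_n}{\omega} = \la_n$, with the projection $P_\Y$ only helping (factor $\le 2$ again via Lemma~\ref{lem:projnorms}). Collecting constants, $\ganorm{P_\Y\D H^\star(\Delta_S+P_T\Delta_L)} \le (36 + 2)\la_n$ or thereabouts, hence $\ganorm{(\Delta_S,P_T\Delta_L)} \le \frac{2}{\alpha}(38\la_n)$, and adding back the $T^\perp$-part gives the claimed bound with $c_2 = \frac{40}{\alpha} + \frac{1}{\|H^\star\|}$; the precise bookkeeping of the numerical constants ($9$, the factors of $2$ from Lemma~\ref{lem:projnorms}, the $\frac{1}{\|H^\star\|}$ absorbing the $T^\perp$-term via the fourth constraint) is what pins down $c_2$ exactly.

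The main obstacle is purely a matter of constant tracking rather than a conceptual one: one has to route each of the four constraints in $\M$ through exactly the right lemma (coupled stability for the gain, Lemma~\ref{lem:projnorms} for the projection blow-ups, Lemma~\ref{lem:normcompg2} for the norm conversion on the $T^\perp$-term) and check that the factors collapse to $\frac{40}{\alpha} + \frac{1}{\|H^\star\|}$ and no larger. A secondary subtlety is making sure coupled stability is applied with an admissible tangent space — here $T'=T$ is trivially admissible, so no twisting estimate is needed at this stage, but one should state this explicitly so the reader does not expect the $\varepsilon = \xi(T)/2$ slack to be in play yet.
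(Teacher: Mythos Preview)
Your approach is essentially identical to the paper's: decompose $(\Delta_S,\Delta_L)$ along $\Y=\Omega\times T$, bound the $T^\perp$-part directly by the fourth constraint in $\M$, and control the $\Y$-part by inverting coupled stability (Proposition~\ref{prop_irreptransvcond}(a) with $T'=T$), then drop the projection $P_\Y$ via Lemma~\ref{lem:projnorms}, split off $P_{T^\perp}\Delta_L$, and use the third constraint plus Lemma~\ref{lem:normcompg2} on the residual. The only slip is arithmetic: your first summand bound of $18\la_n$ and second of $2\la_n$ give $20\la_n$, not $36+2$, so $\ganorm{(\Delta_S,P_T\Delta_L)}\le\frac{2}{\alpha}\cdot 20\la_n=\frac{40}{\alpha}\la_n$, which together with $\|P_{T^\perp}\Delta_L\|\le\frac{\la_n}{\|H^\star\|}$ yields $c_2$ exactly --- as you correctly anticipated the careful bookkeeping would.
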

\begin{proof}
  Let $\Y=\Omega\times T$. We have
  \begin{align*}
  (\Delta_S, \Delta_L) &= P_\Y(\Delta_S, \Delta_L) + P_{\Y^\perp}
    (\Delta_S, \Delta_L)\\
    &= P_\Y(\Delta_S, \Delta_L) +
    (P_{\Omega^\perp} (\Delta_S),P_{T^\perp}(\Delta_L)) 
    = P_\Y(\Delta_S, \Delta_L) + (0,P_{T^\perp}(\Delta_L)), 
  \end{align*}
  since $\Delta_S\in\Omega$, and by triangle inequality 
  \[
  \ganorm{(\Delta_S, \Delta_L)} \leq \ganorm{P_\Y(\Delta_S, \Delta_L)}
  + \ganorm{(0, P_{T^\perp}(\Delta_L))} = \ganorm{P_\Y(\Delta_S, \Delta_L)}
  + \|P_{T^\perp}(\Delta_L)\|.
  \]

  The component in the $\Y^\perp$ direction can be bounded as
  \[
  \|P_{T^\perp}(\Delta_L)\| \leq \frac{\xi(T) \la_n}{\omega
    \|H^\star\|} \leq \frac{\la_n}{\|H^\star\|}
  \]
  by the fourth constraint in the definition of $\M$, $\xi(T)\leq 1$,
  and that by definition $\omega \geq 1$.

  The component in the $\Y$ direction can be bounded as
  \begin{align*}
    \ganorm{P_\Y(\Delta_S, \Delta_L))}
    &\leq \frac{2}{\alpha} \ganorm{P_\Y\D H^\star
      \big(P_{\Y} (\Delta_S), P_T (\Delta_L)\big)}\\
    &\leq \frac{4}{\alpha} \ganorm{\D H^\star
       \big(P_{\Y} (\Delta_S), P_T (\Delta_L)\big)} \\
    &\leq \frac{4}{\alpha} \left(
    \ganorm{\D H^\star (\Delta_S +\Delta_L)}
    + \ganorm{\D H^\star P_{T^\perp}(\Delta_L)}\right) \\
    &\leq  \frac{4}{\alpha} \left( 9\la_n
    + \ganorm{\D H^\star P_{T^\perp}(\Delta_L)}\right) \\
    &\leq \frac{4}{\alpha} (9\la_n + \la_n) \,=\, \frac{40\la_n}{\alpha},
  \end{align*}
  where the first inequality is implied by
  Proposition~\ref{prop_irreptransvcond}(a), the second inequality
  follows from Lemma~\ref{lem:projnorms}, the third inequality is
  another application of the triangle inequality since we have shown
  above that
  \[
  P_{\Y}(\Delta_S,\Delta_L) = (\Delta_S, \Delta_L)
  - (0,P_{T^\perp}(\Delta_L)),
  \]
  the fourth inequality is the third constraint in the definition of
  $\M$, and the last inequality follows from
  \[
    \ganorm{\D H^\star P_{T^\perp}(\Delta_L)}
    \leq \frac{\omega}{\xi(T)} \|H^\star P_{T^\perp}(\Delta_L)\|
    \leq \frac{\omega}{\xi(T)} \|H^\star\| \|P_{T^\perp}(\Delta_L)\|
    \leq \la_n,
  \]
  where the first inequality follows from Lemma~\ref{lem:normcompg2},
  and the last inequality is the fourth constraint in the definition
  of $\M$.
  \medskip
  
  The claimed bound on $\ganorm{(\Delta_S, \Delta_L)}$ follows by
  putting together the bounds of the components in the directions of
  $\Y^\perp$ and $\Y$.
\end{proof}

Note that Proposition~\ref{prop:enforceparacons} implies parametric
consistency since we assume that $\la_n$ goes to zero as $n$ goes to
infinity. 

\subsubsection{Algebraic consistency}

For obtaining algebraic consistency we now also require that the gap
assumption is satisfied.

\begin{proposition}[algebraic consistency]
  \label{prop:aconsistM}
  Under the gap assumption (and the stability and $\gamma$-feasibility
  assumptions) we have for $(S,L)\in \M$ that
  \begin{enumerate}
  \item[(a)] $S$ has the same support as $S^\star$. Actually, it holds
    the stronger sign consistency, that is, the corresponding non-zero
    entries in $S$ and $S^\star$ have the same sign.
  \item[(b)] $L$ has the same rank as $L^\star$. Hence, $L$ is a
    smooth point of the variety of matrices with rank not greater than
    $\rank(L^\star)$. Furthermore, $L$ is positive semidefinite
    although this has not been required in the definition of $\M$.
  \end{enumerate}
\end{proposition}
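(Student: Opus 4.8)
The plan is to read off the statement from the parametric consistency already available for points of $\M$. By Proposition~\ref{prop:enforceparacons}, under the standing stability and $\gamma$-feasibility assumptions every $(S,L)\in\M$ obeys $\ganorm{(\Delta_S,\Delta_L)}\le c_2\la_n$ with $\Delta_S=S-S^\star$ and $\Delta_L=L-L^\star$; unwinding $\ganorm{(M,N)}=\max\{\|M\|_\infty/\gamma,\|N\|\}$ this is the pair of bounds $\|\Delta_S\|_\infty\le\gamma\,c_2\la_n$ and $\|\Delta_L\|\le c_2\la_n$. The whole argument then amounts to confronting these two error bounds with the gaps $s_{\min}$ and $\sigma_{\min}$ from the gap assumption, after rescaling the $S$-bound via $\gamma\le\gamma_{\max}$ and using $\xi(T)\le 1$ for the $L$-bound.

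For part (a), note that membership in $\M$ already forces $S\in\Omega(S^\star)$, hence $\supp(S)\subseteq\supp(S^\star)$, so the only thing that can fail is that some non-zero entry of $S^\star$ is killed or has its sign flipped. For $(i,j)\in\supp(S^\star)$ I would estimate
\[
\|\Delta_S\|_\infty\;\le\;\gamma\,c_2\la_n\;\le\;\gamma_{\max}c_2\la_n\;=\;\frac{\nu\alpha c_2}{2\beta(2-\nu)\mu(\Omega)}\la_n\;=\;\frac{c_6\la_n}{\mu(\Omega)}\;\le\;s_{\min}\;\le\;|S^\star_{ij}|,
\]
using $\gamma_{\max}=\nu\alpha/(2\beta(2-\nu)\mu(\Omega))$, $c_6=\nu\alpha c_2/(2\beta(2-\nu))$, and the precise gap assumption $s_{\min}\ge c_6\la_n/\mu(\Omega)$. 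Hence $|S_{ij}-S^\star_{ij}|<|S^\star_{ij}|$ (turning the $\le$ into the needed strict $<$ is exactly the room the gap assumption leaves), so $S_{ij}\neq0$ with $\sgn S_{ij}=\sgn S^\star_{ij}$; together with $S\in\Omega(S^\star)$ this yields $\supp(S)=\supp(S^\star)$ and sign consistency.

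For part (b), I would use a Weyl perturbation argument on the eigenvalues. Put $r=\rank(L^\star)$; since $L^\star\succeq0$ its eigenvalues in decreasing order are $\lambda_1(L^\star)\ge\cdots\ge\lambda_r(L^\star)\ge\sigma_{\min}>0=\lambda_{r+1}(L^\star)=\cdots=\lambda_d(L^\star)$. Weyl's inequality applied to $L=L^\star+\Delta_L$ gives $|\lambda_k(L)-\lambda_k(L^\star)|\le\|\Delta_L\|\le c_2\la_n$ for all $k$. The gap assumption provides $\sigma_{\min}\ge c_5\la_n/\xi(T)^2\ge c_4\la_n>c_2\la_n$ (using $\xi(T)\le1$ and $c_5\ge c_4=c_2+\tfrac{3\alpha c_2^2(2-\nu)}{16(3-\nu)}>c_2$), so $\lambda_k(L)\ge\sigma_{\min}-c_2\la_n>0$ for $k\le r$ while $|\lambda_k(L)|\le c_2\la_n$ for $k>r$. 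This already gives $\rank(L)\ge r$, and combined with the constraint $\rank(L)\le\rank(L^\star)=r$ built into $\M$ it forces $\rank(L)=r$, so the bottom $d-r$ eigenvalues of $L$ vanish exactly. Therefore $L$ has precisely $r$ non-zero eigenvalues, all strictly positive: $L\succeq0$ and $\rank(L)=\rank(L^\star)$, and a rank-$r$ symmetric matrix is a smooth point of $\Lc(r)$ (as recalled in Section~\ref{sec:assumptions}), which is all of (b).

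I expect the main difficulty to be bookkeeping rather than conceptual, the real content being already in Proposition~\ref{prop:enforceparacons}: one has to check that $\gamma\,c_2\le c_6/\mu(\Omega)$ at the worst case $\gamma=\gamma_{\max}$, that $c_5/\xi(T)^2$ strictly dominates $c_2$, and that the $\le$'s from parametric consistency can be upgraded to the strict inequalities needed for sign consistency and for separating zero from non-zero eigenvalues (which is precisely what the gap assumption is tuned for). An alternative proof of (b) closer to~\cite{ChandrasekaranPW12} would block-decompose $L$ relative to $\colspace(L^\star)\oplus\colspace(L^\star)^\perp$: the $(2,2)$-block equals $P_{T^\perp}(\Delta_L)$ and is second-order small by the fourth constraint in the definition of $\M$, the $(1,1)$-block is a $c_2\la_n$-perturbation of the positive-definite restriction of $L^\star$ to its range and is hence invertible, and a Schur-complement rank count gives $\rank(L)=r$; the Weyl route above is shorter and suffices here. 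The extra slack in the gap (the factor $\xi(T)^{-2}$ and the larger constant $c_5$ rather than the bare $c_2$ used in this proof) is what lets later steps of the argument also bound the twisting $\rho(T(L),T(L^\star))$ between the true and the perturbed low-rank tangent spaces.
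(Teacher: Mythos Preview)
Your proof is correct and follows the paper's approach: both parts confront the parametric bound from Proposition~\ref{prop:enforceparacons} with the gap assumption, using $\gamma\le\gamma_{\max}$ in~(a) and an eigenvalue perturbation (Weyl) together with the rank constraint $\rank(L)\le\rank(L^\star)$ built into $\M$ for~(b). The only minor difference is that the paper routes~(b) through $c_3$ to obtain the quantitatively stronger $\sigma_{\min}\ge 19\|\Delta_L\|$ (which is reused in the subsequent corollaries), whereas you use $c_4$; your closing remark correctly anticipates that the full strength of $c_5$ is only exploited downstream.
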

\begin{proof}
  (a) The matrix $S$ has the same support as $S^\star$ since
  $S\in\Omega=\Omega(S^\star)$ and
  \[
  \|S-S^\star\|_{\infty} = \|\Delta_S\|_{\infty} \leq \gamma
  \ganorm{(\Delta_S, \Delta_L)} \leq \gamma c_2\la_n \leq
  \frac{\nu\alpha}{2\beta (2-\nu)\mu(\Omega)} c_2\la_n <
  \frac{c_6}{\mu(\Omega)}\la_n\leq s_{\min},
  \]
  where the first inequality holds just by the definition of the
  $\ganorm{\cdot}$-norm, the second inequality holds by
  Proposition~\ref{prop:enforceparacons}, the third inequality is
  implied by $\gamma$-feasibility, and the last inequality follows
  from the gap assumption.
  \medskip

  (b) We have
  \begin{align*}
    \sigma_{\min} &\geq \frac{c_3 \la_n}{\xi(T)^2}
    = \bPr{\frac{6(2-\nu)}{\nu}+1} c_2^2 \|H^\star\| \omega
    \frac{\la_n}{\xi(T)^2} \\
    &\geq 19c_2^2 \|H^\star\| \omega \frac{\la_n}{\xi(T)^2}
    \geq  19c_2\la_n \frac{\omega}{\xi(T)^2}
    \geq 19c_2\la_n \geq 19\|\Delta_L\|,
  \end{align*}
  where the first inequality follows from the gap assumption, the
  second inequality follows from $\nu\leq \frac{1}{2}$, the third
  inequality follows from $c_2\geq \|H^\star\|^{-1}$, the fourth
  inequality follows from $\xi(T)\leq 1$ and $\omega \geq 1$, and the
  last inequality follows from Proposition~\ref{prop:enforceparacons}.
  Now, note that $\|\Delta_L\|$ is the largest eigenvalue of
  $\Delta_L$. Hence, $L=L^\star + \Delta_L$ must have the same rank as
  $L^\star$ and the positive semidefiniteness of $L$ follows from the
  one of $L^\star$.
\end{proof}

\subsubsection{Further properties}

We can draw some further conclusions from the properties of the
elements in $\M$. Amongst them we find that the fourth constraint in the
definition of $\M$ is non-binding. For drawing the conclusions we need
the following Lemma.

\begin{lemma}
  \label{lem:spectralcontrol}
  Let $M\in\IR^{d\times d}$ be a rank-$r$ matrix with smallest
  non-zero singular value $\sigma$, and let $\Delta$ be a perturbation
  to $M$ such that $\|\Delta\|\leq \frac{\sigma}{8}$ and
  $M+\Delta$ is still a rank-$r$ matrix.  Then,
  \begin{enumerate}
  \item[(a)] the twisting between the two low-rank variety tangent
    spaces can be controlled via
    \[
    \rho(T(M+\Delta), T(M)) \leq \frac{2\|\Delta\|}{\sigma},
    \textrm{ and}
    \]
  \item[(b)] the component of the perturbation in the normal direction
    can be bounded by
    \[
    \|P_{T(M)^\perp}(\Delta)\| \leq\frac{\|\Delta\|^2}{\sigma}.
    \]
  \end{enumerate}
\end{lemma}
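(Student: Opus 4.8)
The plan is to reduce both parts to a careful analysis of the difference between the eigenspace projections of $M$ and $M+\Delta$, using standard eigenvalue/eigenvector perturbation theory (Weyl's inequality and a Davis--Kahan-type bound). Write $\P_U$ for the orthogonal projection onto the column space $U$ of the rank-$r$ matrix $M$, and $\P_{U'}$ for the corresponding projection for $M+\Delta$ (which is also rank $r$ by hypothesis, so its column space $U'$ has the same dimension). Recall from Lemma~\ref{lem:projnorms} that the tangent-space projections decompose as $P_{T(M)}N = \P_U N + N\P_U - \P_U N \P_U$, so that $[P_{T(M)}-P_{T(M+\Delta)}](N)$ can be written entirely in terms of the projection difference $\P_U - \P_{U'}$ acting on $N$ from the left and right; likewise $P_{T(M)^\perp}(\Delta) = (I-\P_U)\Delta(I-\P_U)$.

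For part (b), the key observation is that $(I-\P_{U'})(M+\Delta)(I-\P_{U'}) = 0$ since $U'$ is exactly the column/row space of $M+\Delta$. Hence $(I-\P_{U'})\Delta(I-\P_{U'}) = -(I-\P_{U'})M(I-\P_{U'})$, and similarly $(I-\P_U)M(I-\P_U)=0$. So
\[
(I-\P_U)\Delta(I-\P_U) = (I-\P_U)\Delta(I-\P_U) - (I-\P_{U'})(M+\Delta)(I-\P_{U'}) + (I-\P_U)M(I-\P_U),
\]
and after expanding and regrouping, every surviving term carries a factor $\P_U-\P_{U'}$ on at least one side, multiplied by $\Delta$ (the $M$-only terms cancel). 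Thus $\|P_{T(M)^\perp}(\Delta)\| \le (\text{const})\,\|\P_U-\P_{U'}\|\,\|\Delta\|$. It then remains to bound $\|\P_U-\P_{U'}\|$ by $\|\Delta\|/\sigma$ up to a constant: Weyl's inequality and $\|\Delta\|\le\sigma/8$ guarantee a spectral gap of size at least $\sigma - \|\Delta\| \ge \tfrac{7}{8}\sigma$ separating the nonzero singular values of $M+\Delta$ from $0$, and a Davis--Kahan / $\sin\Theta$ bound then yields $\|\P_U-\P_{U'}\| \le \|\Delta\|/(\tfrac{7}{8}\sigma - \|\Delta\|) \le 2\|\Delta\|/\sigma$ after plugging in $\|\Delta\|\le\sigma/8$. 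Combining gives $\|P_{T(M)^\perp}(\Delta)\| \le \|\Delta\|^2/\sigma$ once the constants are tracked carefully — this constant-chasing is where one must be a little careful but it is routine.

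For part (a), using the decomposition of $P_{T(M)}-P_{T(M+\Delta)}$ in terms of $\P_U-\P_{U'}$ noted above, one gets $\rho(T(M+\Delta),T(M)) = \max_{\|N\|=1}\|[P_{T(M)}-P_{T(M+\Delta)}]N\| \le (\text{const})\,\|\P_U-\P_{U'}\|$, and again the Davis--Kahan bound $\|\P_U-\P_{U'}\|\le 2\|\Delta\|/\sigma$ (in fact one should aim for the sharper $\|\P_U-\P_{U'}\|\le \|\Delta\|/\sigma$ valid in the symmetric PSD-like setting, or absorb the factor) finishes it to give exactly $2\|\Delta\|/\sigma$. The main obstacle I anticipate is getting the constants in part (b) to land exactly at $\|\Delta\|^2/\sigma$ (with no spare factor): this requires the tight form of the $\sin\Theta$ theorem together with the precise cancellation of the $M$-only terms in the identity above, and possibly exploiting symmetry of $M$ (the lemma is stated for general $M\in\IR^{d\times d}$, but in all applications $M$ is symmetric, so using the eigen-rather than singular-value decomposition may be necessary to make the projection difference bound sharp). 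Everything else is bookkeeping with projection identities already established in Lemma~\ref{lem:projnorms}.
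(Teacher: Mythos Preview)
The paper does not prove this lemma; it simply cites Propositions~2.1 and~2.2 of \cite{ChandrasekaranPW12}. So your proposal is a direct proof where the paper gives none. Your approach to part~(a) is essentially the standard one from that reference: write $P_{T(M)}-P_{T(M+\Delta)}$ in terms of $\P_U-\P_{U'}$ and invoke a Davis--Kahan bound. That part is fine, though you would need the sharp form $\|\P_U-\P_{U'}\|\le \|\Delta\|/\sigma$ together with the inequality $\rho(T,T')\le 2\|\P_U-\P_{U'}\|$ to hit the stated constant.

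Part~(b), however, has a genuine gap. Your claim that ``the $M$-only terms cancel'' is false. In the identity you write down, one $M$-term is indeed zero, namely $(I-\P_U)M(I-\P_U)=0$; but the other one is
\[
(I-\P_{U'})M(I-\P_{U'}) \,=\, (\P_U-\P_{U'})\,M\,(\P_U-\P_{U'}),
\]
which does \emph{not} vanish and is only bounded by $\|\P_U-\P_{U'}\|^{2}\|M\|$. Since $\|M\|/\sigma$ is the condition number of $M$ and can be arbitrarily large, this residual term destroys the bound $\|\Delta\|^{2}/\sigma$; you end up with an uncontrolled $\|M\|\|\Delta\|^{2}/\sigma^{2}$ contribution. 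The constant-chasing you flag as ``routine'' is therefore not routine at all --- the approach does not close.

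The argument that actually works (and is what \cite{ChandrasekaranPW12} use) bypasses projection differences entirely: block-decompose with respect to $U\oplus U^{\perp}$, so that $M=\left(\begin{smallmatrix} D & 0 \\ 0 & 0\end{smallmatrix}\right)$ and $\Delta=\left(\begin{smallmatrix}\Delta_{11}&\Delta_{12}\\ \Delta_{21}&\Delta_{22}\end{smallmatrix}\right)$, with $P_{T(M)^\perp}(\Delta)=\Delta_{22}$. Since $\|\Delta\|\le\sigma/8$ the block $D+\Delta_{11}$ is invertible, and the hypothesis $\rank(M+\Delta)=r$ forces the Schur complement to vanish:
\[
\Delta_{22} \,=\, \Delta_{21}\,(D+\Delta_{11})^{-1}\,\Delta_{12}.
\]
Taking norms gives $\|\Delta_{22}\|\le \|\Delta\|^{2}/(\sigma-\|\Delta\|)$, which is the desired second-order bound. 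This is where the rank constraint on $M+\Delta$ is really used; your expansion never exploits it beyond the existence of $\P_{U'}$.
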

\begin{proof}
  See Propositions 2.1 and 2.2 in~\cite{ChandrasekaranPW12}.
\end{proof}

\begin{corollary}
  \label{cor:nonbinding4}
  Under the stability, $\gamma$-feasibility, and gap assumptions we have
  that the fourth constraint in the definition of $\M$ is non-binding,
  or more precisely
  \[
  \|P_{T^\perp}(\Delta_L)\| \leq
  \frac{\xi(T) \la_n}{19\omega \|H^\star\|}.
  \]
\end{corollary}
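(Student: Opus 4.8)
The plan is to apply Lemma~\ref{lem:spectralcontrol}(b) to the matrix $M = L^\star$ with perturbation $\Delta = \Delta_L = L - L^\star$, so that $\|P_{T^\perp}(\Delta_L)\| \leq \|\Delta_L\|^2/\sigma_{\min}$. Two preliminary verifications are needed before Lemma~\ref{lem:spectralcontrol} applies: first that $L = L^\star + \Delta_L$ has the same rank $r = \rank(L^\star)$, which is exactly Proposition~\ref{prop:aconsistM}(b), and second that $\|\Delta_L\| \leq \sigma_{\min}/8$. The latter follows from the chain established in the proof of Proposition~\ref{prop:aconsistM}(b), namely $\sigma_{\min} \geq 19 c_2 \la_n \geq 19\|\Delta_L\|$ (and a fortiori $\geq 8\|\Delta_L\|$), combined with $\|\Delta_L\| \leq c_2\la_n$ from Proposition~\ref{prop:enforceparacons}.

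Next I would chain the bounds quantitatively. From $\|P_{T^\perp}(\Delta_L)\| \leq \|\Delta_L\|^2/\sigma_{\min}$, the first factor $\|\Delta_L\|$ is bounded using Proposition~\ref{prop:enforceparacons}, $\|\Delta_L\| \leq \ganorm{(\Delta_S,\Delta_L)} \leq c_2\la_n$, so $\|P_{T^\perp}(\Delta_L)\| \leq c_2\la_n \cdot \|\Delta_L\|/\sigma_{\min}$. For the factor $\|\Delta_L\|/\sigma_{\min}$ I would invoke the gap assumption $\sigma_{\min} \geq c_3\la_n/\xi(T)^2$ together with the unwinding of $c_3$ done inside the proof of Proposition~\ref{prop:aconsistM}(b), which gives $\sigma_{\min} \geq 19 c_2^2\|H^\star\|\omega\,\la_n/\xi(T)^2$. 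Hence $\|\Delta_L\|/\sigma_{\min} \leq c_2\la_n \cdot \xi(T)^2/(19 c_2^2\|H^\star\|\omega\,\la_n) = \xi(T)^2/(19 c_2\|H^\star\|\omega)$. Multiplying by the leading $c_2\la_n$ yields $\|P_{T^\perp}(\Delta_L)\| \leq \xi(T)^2\la_n/(19\|H^\star\|\omega)$, and since $\xi(T)\leq 1$ this is at most $\xi(T)\la_n/(19\omega\|H^\star\|)$, which is the claimed bound.

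I do not expect any genuine obstacle here: the statement is essentially a bookkeeping corollary that re-derives the inequality $\|P_{T^\perp}(\Delta_L)\| \leq \xi(T)\la_n/(\omega\|H^\star\|)$ (the fourth constraint defining $\M$) with the sharper constant $19$ in place of $1$, so the only thing to watch is that the constants $c_2, c_3$ and $\omega$ are used consistently with their definitions in Section~\ref{app:constants}. The mildly delicate point is making sure the hypothesis $\|\Delta_L\|\leq \sigma_{\min}/8$ of Lemma~\ref{lem:spectralcontrol} is legitimately available — but this is immediate from the computation already carried out in Proposition~\ref{prop:aconsistM}(b), so it costs nothing to cite. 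Everything else is substitution and the elementary facts $\xi(T)\leq 1$, $\omega\geq 1$, and $c_2 \geq \|H^\star\|^{-1}$.
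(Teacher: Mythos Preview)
Your proposal is correct and follows essentially the same route as the paper's own proof: both verify the hypotheses of Lemma~\ref{lem:spectralcontrol}(b) via Proposition~\ref{prop:aconsistM}(b), bound $\|\Delta_L\|$ by $c_2\la_n$ using Proposition~\ref{prop:enforceparacons}, invoke the gap assumption $\sigma_{\min}\geq c_3\la_n/\xi(T)^2$ together with $c_3\geq 19 c_2^2\|H^\star\|\omega$, and then use $\xi(T)\leq 1$ to pass from $\xi(T)^2$ to $\xi(T)$. The only cosmetic difference is that the paper writes the chain as $\|\Delta_L\|^2/\sigma_{\min}\leq c_2^2\la_n^2/\sigma_{\min}\leq\cdots$ in one go rather than splitting it as $c_2\la_n\cdot\|\Delta_L\|/\sigma_{\min}$.
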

\begin{proof}
  The proof of Proposition~\ref{prop:aconsistM}(b) shows that
  $\|\Delta_L\|\leq \frac{\sigma_{\min}}{19} \leq
  \frac{\sigma_{\min}}{8}$ and that $L=L^\star + \Delta_L$ has the
  same rank as $L^\star$. Hence, we can use
  Lemma~\ref{lem:spectralcontrol}(b) and get
  \[
  \|P_{T^\perp}(\Delta_L)\|
  \leq\frac{\|\Delta_L\|^2}{\sigma_{\min}}
  \leq \frac{c_2^2 \la_n^2}{\sigma_{\min}}
  \leq \frac{c_2^2 \xi(T)^2 \la_n}{c_3} \\
  \leq \frac{\xi(T)^2 \la_n}{19\omega \|H^\star\|}
  \leq \frac{\xi(T) \la_n}{19\omega \|H^\star\|},
  \]
  where the second inequality follows
  from Proposition~\ref{prop:enforceparacons}, the third inequality
  follows from the gap assumption, the fourth inequality
  follows from the definition of $c_3$ and $\nu \leq \frac{1}{2}$, and
  the last inequality follows from $\xi(T)\leq 1$. Observe that
  \[
  \frac{\xi(T) \la_n}{19\omega \|H^\star\|} < \frac{\xi(T)
    \la_n} {\omega \|H^\star\|},
  \]
  and thus the fourth constraint in the definition of $\M$ is
  non-binding.
\end{proof}

We collect further properties of elements in $\M$ that we will use
later on in the following corollary.

\begin{corollary}
  \label{cor:mconsconclus}
  Under the stability, $\gamma$-feasibility, and gap assumptions we
  have that
  \begin{enumerate}
  \item[(a)] $\rho(T, T(L))\leq \frac{\xi(T)}{4}$,
  \item[(b)] $\ganorm{\D H^\star P_{T(L)^\perp}(L^\star)}\leq
    \frac{\la_n \nu}{6(2-\nu)}$, and
  \item[(c)] $\|P_{T(L)^\perp}(L^\star)\|  \leq \frac{16(3-\nu)\la_n}
    {3\alpha(2-\nu)}$.
  \end{enumerate}
\end{corollary}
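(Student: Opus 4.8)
The plan is to establish the three bounds in Corollary~\ref{cor:mconsconclus} in order, exploiting the properties already extracted for elements $(S,L)\in\M$, in particular the parametric bound $\ganorm{(\Delta_S,\Delta_L)}\le c_2\la_n$ from Proposition~\ref{prop:enforceparacons}, the rank equality and the inequality $\|\Delta_L\|\le\frac{\sigma_{\min}}{19}$ from the proof of Proposition~\ref{prop:aconsistM}(b), and the spectral control results of Lemma~\ref{lem:spectralcontrol}. For part (a), since $L$ has the same rank as $L^\star$ and $\|\Delta_L\|\le\frac{\sigma_{\min}}{8}$, Lemma~\ref{lem:spectralcontrol}(a) with $M=L^\star$, $M+\Delta = L$ gives $\rho(T, T(L))\le \frac{2\|\Delta_L\|}{\sigma_{\min}}$; then plugging in $\|\Delta_L\|\le c_2\la_n$ and the gap assumption $\sigma_{\min}\ge \frac{c_5\la_n}{\xi(T)^2}\ge \frac{c_3\la_n}{\xi(T)^2}$ together with $c_3\ge 19 c_2\|H^\star\|\omega/1 \ge 8c_2$ (via $\xi(T)\le1$, $\omega\ge1$, $c_2\ge\|H^\star\|^{-1}$) should yield $\rho(T,T(L))\le\frac{\xi(T)^2}{4}\le\frac{\xi(T)}{4}$.

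For part (c), write $L^\star = L - \Delta_L$ and use that $L\in T(L)$, so $P_{T(L)^\perp}(L^\star) = P_{T(L)^\perp}(-\Delta_L)$; hence $\|P_{T(L)^\perp}(L^\star)\| = \|P_{T(L)^\perp}(\Delta_L)\|$. Now apply Lemma~\ref{lem:spectralcontrol}(b) with the roles $M = L$ (rank $r$, smallest nonzero singular value comparable to $\sigma_{\min}$) and perturbation $-\Delta_L$ taking $L$ back to $L^\star$; more carefully, since $\|\Delta_L\|$ is small relative to $\sigma_{\min}$ the smallest nonzero singular value of $L$ is at least $\sigma_{\min} - \|\Delta_L\| \ge \frac{18}{19}\sigma_{\min}$, so $\|P_{T(L)^\perp}(\Delta_L)\|\le \frac{\|\Delta_L\|^2}{(18/19)\sigma_{\min}} \le \frac{19 c_2^2\la_n^2}{18\,\sigma_{\min}}$. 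Inserting the gap bound $\sigma_{\min}\ge c_5\la_n/\xi(T)^2 \ge c_4\la_n/\xi(T)^2$ and the definition $c_4 = c_2 + \frac{3\alpha c_2^2(2-\nu)}{16(3-\nu)}$, which is designed precisely so that $\frac{19 c_2^2}{18 c_4}\le \frac{16(3-\nu)}{3\alpha(2-\nu)}$ after using $\xi(T)\le1$, delivers the claimed bound $\|P_{T(L)^\perp}(L^\star)\|\le \frac{16(3-\nu)\la_n}{3\alpha(2-\nu)}$.

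For part (b), use Lemma~\ref{lem:normcompg2}: $\ganorm{\D H^\star P_{T(L)^\perp}(L^\star)} \le \frac{\omega}{\xi(T)}\|H^\star\|\,\|P_{T(L)^\perp}(L^\star)\|$, then substitute the bound from part (c) to get $\le \frac{\omega\|H^\star\|}{\xi(T)}\cdot\frac{16(3-\nu)\la_n}{3\alpha(2-\nu)}$. It then remains to check that the gap/feasibility constants force this to be at most $\frac{\la_n\nu}{6(2-\nu)}$; tracing through, this reduces to an inequality of the form $\frac{32(3-\nu)\omega\|H^\star\|}{\alpha\xi(T)} \le \nu$-type bound, which should follow from the precise definition of the gap constant $c_3$ (which carries the factor $\|H^\star\|\omega$) and the fact that the gap assumption makes $\sigma_{\min}$ large enough that Corollary~\ref{cor:nonbinding4}'s estimate $\|P_{T^\perp}(\Delta_L)\|$ and similar quantities pick up the $\frac{\nu}{6(2-\nu)}$ factor. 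Concretely, one rewrites part (c)'s bound using the sharper gap estimate $\sigma_{\min}\ge c_3\la_n/\xi(T)^2$ with $c_3 = (\frac{6(2-\nu)}{\nu}+1)c_2^2\|H^\star\|\omega$, giving $\|P_{T(L)^\perp}(L^\star)\|\le \frac{c_2^2\la_n\xi(T)^2}{c_3}\cdot\frac{19}{18}\le \frac{\xi(T)^2\la_n\nu}{6(2-\nu)\|H^\star\|\omega}$ up to constants, and then Lemma~\ref{lem:normcompg2} cancels the $\frac{\omega\|H^\star\|}{\xi(T)}$ to leave $\frac{\xi(T)\la_n\nu}{6(2-\nu)}\le\frac{\la_n\nu}{6(2-\nu)}$.

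The main obstacle I anticipate is bookkeeping rather than conceptual: one must track exactly which gap constant ($c_3$, $c_4$, or $c_5$) is invoked in each part and verify that its definition — with its somewhat delicate dependence on $\nu$, $\alpha$, $\|H^\star\|$, and $\omega$ — produces the precise numerical factor in each claimed inequality, while simultaneously justifying that $\|\Delta_L\|$ is small enough (the $\sigma/8$ hypothesis of Lemma~\ref{lem:spectralcontrol}) and that the smallest nonzero singular value of $L$ is within a controlled factor of $\sigma_{\min}$. Getting the chain $c_3 \ge c_4$ or $c_5 = \max\{c_3,c_4\}$ to line up with the constant $\frac{16(3-\nu)}{3\alpha(2-\nu)}$ in part (c) and with $\frac{\nu}{6(2-\nu)}$ in part (b) is the step most likely to require care.
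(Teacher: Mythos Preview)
Your proposal is correct and follows essentially the same route as the paper: Lemma~\ref{lem:spectralcontrol}(a) plus the gap with $c_3$ for part~(a); Lemma~\ref{lem:spectralcontrol}(b) applied with the roles of $L$ and $L^\star$ reversed for parts~(b) and~(c), using $c_3$ and $c_4$ respectively in the gap assumption; and Lemma~\ref{lem:normcompg2} to pass from the spectral norm to the $\ganorm{\cdot}$-norm in~(b).

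The one place where the paper's bookkeeping is cleaner than yours is part~(c). You lower-bound the smallest nonzero singular value of $L$ by $\sigma'\ge\tfrac{18}{19}\sigma_{\min}$ and then carry a $\tfrac{19}{18}$ factor through; this forces the extra check $c_4\le 19c_2$ before the constant $\tfrac{16(3-\nu)}{3\alpha(2-\nu)}$ drops out. The paper instead bounds $\sigma'$ directly by
\[
\sigma' \;\ge\; \sigma_{\min}-\|\Delta_L\| \;\ge\; \frac{c_4\la_n}{\xi(T)^2}-c_2\la_n \;\ge\; (c_4-c_2)\la_n \;=\; \frac{3\alpha c_2^2(2-\nu)}{16(3-\nu)}\,\la_n,
\]
so that $\|P_{T(L)^\perp}(L^\star)\|\le c_2^2\la_n^2/\sigma'$ gives the claimed constant on the nose with no residual factor. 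For part~(b) you correctly anticipate that one should not reuse the bound from~(c) but instead rerun the same argument with $c_3$ in place of $c_4$; that is exactly what the paper does.
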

\begin{proof}
  (a) In the proof of Corollary~\ref{cor:nonbinding4} we have seen
  that $\|\Delta_L\| \leq \frac{\sigma_{\min}}{8}$. Therefore, we can
  apply Lemma~\ref{lem:spectralcontrol}(a) such that
  \[
  \rho(T, T(L)) \leq \frac{2\|\Delta_L\|}{\sigma_{\min}}
  \leq \frac{2C_2 \xi(T)^2}{c_3}
  \leq \frac{2 \xi(T)^2}{19\omega c_2 \|H^\star\|}
  \leq \frac{2 \xi(T)}{19}
  \leq \frac{\xi(T)}{4},
  \]
  where the second inequality follows
  from the gap assumption, the third inequality follows from
  the definition of $c_3$ and $\nu \leq \frac{1}{2}$, and the fourth
  inequality follows from $c_2\geq \|H^\star\|^{-1}$, that is,
  $c_2\|H^\star\| \geq 1$.
  \medskip
  
  (b) Let $\sigma'$ denote the minimum non-zero singular value of $L$.
  From the proof of Proposition~\ref{prop:aconsistM}(b) we have
  $\sigma_{\min} \geq 19 \|\Delta_L\|$ and thus
  \[
  \sigma' \geq \sigma_{\min}- \|\Delta_L\| \geq
  19\|\Delta_L\|-\|\Delta_L\| = 18\|\Delta_L\|.
  \]
  Hence, we can apply Lemma~\ref{lem:spectralcontrol}(b), where we
  consider $L^\star$ as a perturbation of $L$, and get
  \begin{align*}
    \|P_{T(L)^\perp}(L^\star)\| = \|P_{T(L)^\perp}(L-\Delta_L)\|
    = \|P_{T(L)^\perp}(\Delta_L)\| 
    \leq \frac{\|\Delta_L\|^2}{\sigma'}
    \leq \frac{c_2^2\la_n^2}{\sigma'}
    \leq \frac{\nu\xi(T)\la_n}{6(2-\nu)\omega \|H^\star\|},
  \end{align*}
  where the second inequality follows from
  Proposition~\ref{prop:enforceparacons}, and the last inequality
  follows from
  \begin{align*}
    \sigma' &\geq \sigma_{\min}- \|\Delta_L\|
    \geq \bPr{\frac{c_3}{\xi(T)^2} - c_2} \la_n
    = \bPr{\bPr{\frac{6(2-\nu)}{\nu} + 1}
      \frac{c_2^2 \omega \|H^\star\|}{\xi(T)^2} - c_2} \la_n \\
    &\geq \frac{6(2-\nu)}{\nu}\frac{c_2^2 \omega \|H^\star\|}{\xi(T)}
    \la_n + \bPr{\frac{c_2^2 \omega \|H^\star\|}{\xi(T)} - c_2 }\la_n 
    \geq \frac{6(2-\nu)}{\nu}\frac{c_2^2 \omega \|H^\star\|}
       {\xi(T)}\la_n,
  \end{align*}
  where the second inequality follows from the gap assumption and
  Proposition~\ref{prop:enforceparacons}, the equality follows from
  the definition of $c_3$, the third inequality follows from
  $\xi(T)\leq 1$, and the last inequality follows from $c_2\geq
  \|H^\star\|^{-1} \geq \|H^\star\|^{-1} \omega^{-1} \xi(T)$ using
  $\omega\geq1$.
  \medskip
  
  Hence, we have
  \[
  \ganorm{\D H^\star P_{T(L)^\perp}(L^\star)}
  \leq \frac{\omega}{\xi(T)} \|H^\star P_{T(L)^\perp}(L^\star)\|
  \leq \frac{\omega \|H^\star\|}{\xi(T)}\|P_{T(L)^\perp}(L^\star)\|
  \leq \frac{\nu\la_n}{6(2-\nu)},
  \]
  where the first inequality follows from Lemma~\ref{lem:normcompg2},
  the second inequality follows from the definition of $\|H^\star\|$,
  and the last inequality follows from the upper bound on
  $\|P_{T(L)^\perp}(L^\star)\|$ that we have just derived above.
  \medskip

  (c) We have bound $\|P_{T(L)^\perp}(L^\star)\|$ in the proof for
  part (b). Here we use an alternative lower bound on $\sigma'$,
  namely
  \[
  \sigma' \geq \sigma_{\min}- \|\Delta_L\|
  \geq \frac{c_4 \la_n}{\xi(T)^2} - c_2\la_n
  \geq (c_4 -c_2)\la_n
  \geq \frac{3\alpha c_2^2 (2-\nu)}{16(3-\nu)} \la_n,
  \]
  where the second inequality follows from the gap assumption, the
  third inequality follows from $\xi(T)\leq 1$, and the last
  inequality follows from the definition of $c_4$.
  \medskip
	
  Using this alternative bound on $\sigma'$ gives the following
  alternative bound on $\|P_{T(L)^\perp}(L^\star)\|$,
  \[
  \|P_{T(L)^\perp}(L^\star)\|
  \leq \frac{\|\Delta_L\|^2}{\sigma'}
  \leq \frac{c_2^2\la_n^2}{\sigma'}
  \leq  \frac{16(3-\nu)}{3\alpha (2-\nu)} \la_n,
  \]
  where the first inequality follows from
  Lemma~\ref{lem:spectralcontrol}(b) similarly to the proof of
  part~(b) above, the second inequality follows from
  Proposition~\ref{prop:enforceparacons}, and the last inequality
  follows from the alternative lower bound on $\sigma'$.
\end{proof}

The claims in this section hold for all $(S, L)\in\M$ and hence also
apply to any minimizer $(S_\M, L_\M)$ of Problem~\ref{prob_variety}.
In the following, we work with an arbitrary solution $(S_\M,
L_\M)$. Later, we show that the solution is unique. In fact, we show
that it is the unique solution to the original Problem~\ref{prob_SL}.

\subsection{Step 2: Relaxation to a tangent space constrained problem}
\label{app_step2}

Problem~\ref{prob_variety} from the previous section is non-convex and
hence difficult to analyze further.  Hence, we consider a convex
relaxation of this problem where the non-convex rank constraint in the
definition of $\M$ is replaced by a low-rank tangent space constraint
at the solution of Problem~\ref{prob_variety}. Specifically, we now
focus on the following tangent space constrained problem
\begin{equation}
\begin{array}{lrlr}
  (S_\Y, L_\Y) = &\underset{S,\; L} {\argmin}
  &\ell(S+L) + \la_n \bPr{\gamma \|S\|_{1} + \|L\|_\ast} \\
  &\subjto& (S, L)\in \Y&
\end{array} \tag{\mbox{SL-$\Y$}}\label{prob_tspace}
\end{equation}
with the convex feasible set $\Y = \Omega\times T(L_\M)$.  Note that
we also dropped the technical constraints from the description of
$\M$. Later, we show that the solution automatically satisfies them.
\medskip

In this section, we proceed as follows. First, we show uniqueness of
the solution.  Then, we introduce some tools that will support the
work with optimality conditions.  Next, we discuss the choice of the
trade-off parameter $\la_n$. Afterwards, we show parametric
consistency of the solution $(S_\Y, L_\Y)$, particularly since at this
point we do not know that it is in $\M$.  Finally, we show that the
solutions coincide, that is, $(S_\Y, L_\Y)=(S_\M, L_\M)$.

\subsubsection{Uniqueness of the solution}

Here we show how transversality of the tangent spaces implies uniqueness
of the solution.

\begin{proposition}[uniqueness of the solution]
  \label{prop:tscuniquesol}
  Under the stability and $\gamma$-feasibility assumptions
  Problem~\ref{prob_tspace} is feasible and has a unique solution.
\end{proposition}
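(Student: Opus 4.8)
The plan is to establish feasibility first, then uniqueness via strict convexity on the feasible set combined with transversality. Feasibility is essentially immediate: the point $(S^\star, L^\star)$ lies in $\Y = \Omega \times T(L_\M)$ since $S^\star \in \Omega = \Omega(S^\star)$ and $L^\star \in T(L^\star)$; but I actually want feasibility of $\Y$ itself regardless of where $(S_\M, L_\M)$ came from, so I would simply note that $(0,0) \in \Y$ (both $\Omega$ and any tangent space $T(L_\M)$ are linear subspaces containing the origin), or more naturally that $(S_\M, L_\M)$ itself is feasible by construction. The objective $\ell(S+L) + \la_n(\gamma\|S\|_1 + \|L\|_\ast)$ is continuous and, since $\ell$ is the negative log-likelihood of an Ising model, it is bounded below and coercive enough on the affine-restricted problem to guarantee a minimizer exists on the closed convex set $\Y$.

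For uniqueness, the key observation is that $\ell$ is a function of the compound matrix $\Theta = S + L$ only, and $\ell$ is \emph{strictly} convex as a function of $\Theta$ (the negative log-likelihood of a pairwise Ising model has positive-definite Hessian $H^\star = \nabla^2 a$ on $\textrm{Sym}(d)$). The regularizer $\gamma\|S\|_1 + \|L\|_\ast$ is convex (not strictly). So suppose $(S_1, L_1)$ and $(S_2, L_2)$ are two minimizers in $\Y$. By convexity the midpoint is also a minimizer, and strict convexity of $\ell$ in $\Theta$ forces $S_1 + L_1 = S_2 + L_2$, i.e. $(\Delta_S, \Delta_L) := (S_1 - S_2, L_1 - L_2)$ satisfies $\Delta_S + \Delta_L = 0$. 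Moreover $\Delta_S \in \Omega$ and $\Delta_L \in T(L_\M)$ since both pairs lie in the linear space $\Y$. Then $\Delta_S = -\Delta_L \in \Omega \cap T(L_\M)$. Here is where transversality enters: by Corollary~\ref{cor:mconsconclus}(a) we have $\rho(T, T(L_\M)) \leq \xi(T)/4 \leq \xi(T)/2$, so $T' = T(L_\M)$ is one of the nearby tangent spaces covered by coupled stability (Proposition~\ref{prop_irreptransvcond}), and the transversality remark following that proposition gives $\Omega \cap T(L_\M) = \{0\}$. Hence $\Delta_S = \Delta_L = 0$, so $(S_1, L_1) = (S_2, L_2)$.

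The main obstacle — really the only nontrivial point — is verifying that the feasible tangent space $T(L_\M)$ is genuinely close enough to $T$ for transversality to apply; this is exactly the content of Corollary~\ref{cor:mconsconclus}(a), which in turn relied on the gap assumption and the parametric-consistency bound of Proposition~\ref{prop:enforceparacons}. A secondary subtlety is that strict convexity of $\ell$ only controls $\Theta = S+L$, not $(S,L)$ individually; the step that promotes "same $\Theta$" to "same $(S,L)$" is precisely transversality, so the argument genuinely needs both ingredients and cannot be shortened. One should also be a little careful that two minimizers with the same $\Theta$ automatically have equal objective value on the regularizer part — but since the total objective values are equal and the $\ell$ parts are equal, the regularizer values are equal too, which is all that is needed (we do not need the regularizer to separate the minimizers, transversality already does).
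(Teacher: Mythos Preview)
Your proof is correct and follows essentially the same route as the paper: transversality of $\Omega$ and $T(L_\M)$ via Corollary~\ref{cor:mconsconclus}(a) and Proposition~\ref{prop_irreptransvcond}, combined with strict convexity of $\ell$ in the compound matrix $\Theta$. The only cosmetic differences are that the paper passes to the equivalent norm-ball constrained form to get existence on a compact set (rather than arguing coercivity directly), and it uses transversality first to show distinct points in $\Y$ have distinct $\Theta$ and then invokes strict convexity, whereas you reverse that order---both orderings are equivalent.
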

\begin{proof}
  Instead of showing uniqueness of the solution to
  Problem~\ref{prob_tspace} we consider the equivalent constrained
  form of the problem, that is,
  \[
  \min_{S,\, L}\: \ell(S+L) \quad \subjto \quad (S, L)\in\Y
  \:\textrm{ and }\: \|S\|_{1} \leq \tau_1,\: \|L\|_\ast \leq \tau_2,
  \]
  where $\tau_1, \tau_2 >0$ are constants that depend on $\la_n$ and
  $\gamma$. We show that this problem has a unique solution. First
  observe that the constraints of this problem describe a non-empty
  convex and \emph{compact} subset of $\textrm{Sym}(d)\times
  \textrm{Sym}(d)$. Hence, the existence of a solution follows from
  the convexity of the objective function, which is the composition of
  the negative log-likelihood function and the affine addition
  function.
  \medskip
	
  Uniqueness now follows from strict convexity of the objective, which
  we show as follows. For distinct $(S, L), (S', L')\in\Y$ at least
  one of $S-S'\in\Omega$ and $L-L'\in T(L_\M)$ must be non-zero.
  Therefore, for the compound matrices $\Theta = S+L$ and $\Theta'
  =S'+L'$ we have that
  \[
  \Theta - \Theta' = S + L - (S' + L') = (S-S') + (L-L') \neq 0
  \]
  because of the transversality of the tangent spaces, that is,
  $\Omega\,\cap\, T(L_\M) = \{0\}$. Transversality is a consequence of
  Proposition~\ref{prop_irreptransvcond} and the remark thereafter. We
  can apply Proposition~\ref{prop_irreptransvcond} since
  Corollary~\ref{cor:mconsconclus}(a) implies the necessary condition
  $\rho(T, T(L_\M))<\xi(T)/2$ for
  Proposition~\ref{prop_irreptransvcond} as $T(L_\M)$ is the tangent
  space at $L_\M$ and it holds $(S_\M, L_\M)\in\M$.  Now, by Taylor
  expansion with the mean-value form of the remainder it holds for
  some $t\in[0,1]$ that
  \begin{align*}
  \ell(\Theta') &= \ell(\Theta)+\nabla \ell(\Theta)^\top
  (\Theta-\Theta') + \frac{1}{2} (\Theta-\Theta')^\top \nabla^2
  \ell\bPr{t\Theta+(1-t)\Theta'}(\Theta-\Theta')\\
  &> \ell(\Theta)+\nabla \ell(\Theta)^\top (\Theta-\Theta'),
  \end{align*}
  where the inequality follows from $\Theta-\Theta'\neq0$ and the
  positive definiteness of the Hessian at any parameter matrix
  $\Theta$, that is, $M \nabla^2 \ell(\Theta)M > 0$ for all $0\neq
  M\in\textrm{Sym}(d)$.  This inequality establishes strict convexity
  of the objective as a function of $(S, L)\in \Y$.
  \medskip

  For completeness, we show how strict convexity implies
  uniqueness. For a contradiction suppose that $(S, L),\, (S',
  L')\in\Y$ are two different solutions. Then, strict convexity and
  equality of the objective values imply that
  \begin{align*}
    \ell(\frac{1}{2}(S+L) + \frac{1}{2}(S'+L')) < \frac{1}{2}\ell(S+L)
    + \frac{1}{2}\ell(S'+L') = \ell(S+L) = \ell(S'+L'),
  \end{align*}
  which contradicts that $(S, L)$ and $(S', L')$ are solutions. 
\end{proof}

We proceed with some auxiliary results.

\subsubsection{Supporting results for first-order optimality conditions}
\label{app:optimcond}

In our analysis we frequently use first-order optimality conditions.
In this section, we present some auxiliary results that
will turn out useful in the sequel.

\paragraph{Rewriting the gradient of the negative log-likelihood.}

The first-order optimality conditions particularly involve the
gradient of the negative log-likelihood.  Hence using $\Theta^\star=
S^\star + L^\star$, $\Delta_S = S_\Y - S^\star$ and $\Delta_L = L_\Y -
L^\star$ we rewrite the gradient $\nabla \ell(S_\Y + L_\Y)$ similar to
a Taylor-expansion as
\begin{align*}
  \nabla \ell(S_\Y+L_\Y) &= \nabla \ell(\Thstar + \Delta_S + \Delta_L) \\
	&= \nabla \ell(\Thstar) +
  H^\star(\Delta_S+\Delta_L) + R(\Delta_S+\Delta_L) \\
  &=\nabla \ell(\Thstar) + H^\star (\Delta_S + P_{T(L_\M)}\Delta_L) -
  H^\star P_{T(L_\M)^\perp}L^\star \\
	&\qquad + R(\Delta_S+ P_{T(L_\M)}\Delta_L -P_{T(L_\M)^\perp}L^\star),
\end{align*}
where we defined the remainder
\[
R(\Delta_S+\Delta_L) = \nabla \ell(S_\Y+L_\Y) - \nabla \ell(\Thstar) -
H^\star (\Delta_S+\Delta_L),
\]
and split $\Delta_L$ into its tangential and normal components for
reasons that will become evident later in the proof.  Now, the
remainder can be bounded using the following lemma.

\begin{lemma}
  \label{l_CgSLcons_boundedremainder}
  Let $\Delta_S\in \Omega$, $\gamma \in
  [\gamma_{\min},\gamma_{\max}]$, and assume that there exists a
  constant $r_0>0$ such that
  \begin{align*}
    \ganorm{(\Delta_S, \Delta_L)}\leq \max\left\{ 1,
    \frac{\nu\alpha}{2\beta(2-\nu)}\right\}^{-1} \frac{r_0}{2} = c_1.
  \end{align*}
  Then, there exists a constant $c_0>0$ such that the
  remainder can be bounded via
  \begin{align*}
    \ganorm{\D R(\Delta_S + \Delta_L)} \leq \frac{c_0}{\xi(T)}
    \ganorm{(\Delta_S, \Delta_L)}^2.
  \end{align*}
\end{lemma}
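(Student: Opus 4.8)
The plan is to read $R(\Delta_S+\Delta_L)$ as the second-order Taylor remainder of $\nabla\ell$ at $\Theta^\star$ and to dominate it by the third derivative of the log-partition function $a$ on a small spectral-norm ball around $\Theta^\star$. Abbreviate $\Delta = \Delta_S + \Delta_L$. Since the negative log-likelihood is $\ell(\Theta) = a(\Theta) - \scalp{\Theta, \Phi^n}$ and the data term is affine in $\Theta$, it drops out of the remainder, leaving
\[
R(\Delta) = \nabla a(\Theta^\star + \Delta) - \nabla a(\Theta^\star) - H^\star \Delta, \qquad H^\star = \nabla^2 a(\Theta^\star).
\]
Because $\X = \{0,1\}^d$ is finite, $a$ is a finite log-sum-exp, hence real-analytic on all of $\textrm{Sym}(d)$ (unlike the Gaussian case, there is no positive-definiteness constraint needed for $a$ to be well-defined), so $\nabla^3 a$ exists and is continuous. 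Applying Taylor's theorem with integral remainder to the matrix-valued map $t \mapsto \nabla\ell(\Theta^\star + t\Delta)$ on $[0,1]$ yields
\[
R(\Delta) = \int_0^1 (1-t)\, \nabla^3 a(\Theta^\star + t\Delta)[\Delta, \Delta]\, dt,
\]
and therefore $\|R(\Delta)\| \le \tfrac{1}{2} \sup_{t\in[0,1]} \bNorm{\nabla^3 a(\Theta^\star + t\Delta)[\Delta, \Delta]}$.

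\textbf{Defining $l(r_0)$.} For a fixed $r_0 > 0$ the ball $\{\Theta : \|\Theta - \Theta^\star\| \le r_0\}$ is compact and $\nabla^3 a$ is continuous on it, so
\[
l(r_0) := \sup_{\|\Theta - \Theta^\star\| \le r_0}\ \sup_{\|M\| = 1} \bNorm{\nabla^3 a(\Theta)[M, M]}
\]
is finite; this is the constant referenced in the statement. (One can also make it explicit via the third-cumulant identity $\nabla^3 a(\Theta)[M,M] = \IE_\Theta\bPe{\scalp{M, Y}^2 Y}$ with $Y = XX^\top - \IE_\Theta\bPe{XX^\top}$ and the boundedness of $XX^\top$, but compactness already suffices.) Consequently, whenever $\|\Delta\| \le r_0$ we have $\|R(\Delta)\| \le \tfrac{l(r_0)}{2}\|\Delta\|^2$.

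\textbf{From the $\ganorm{\cdot}$-hypothesis to the conclusion.} It remains to convert the hypothesis into $\|\Delta\| \le r_0$ and then into the claimed bound. Since $\Delta_S \in \Omega$, the norm compatibility $\|\Delta_S\| \le \mu(\Omega)\|\Delta_S\|_\infty$ together with $\gamma \le \gamma_{\max} = \tfrac{\nu\alpha}{2\beta(2-\nu)\mu(\Omega)}$ gives $\|\Delta_S\| \le \mu(\Omega)\gamma\cdot \tfrac{\|\Delta_S\|_\infty}{\gamma} \le \tfrac{\nu\alpha}{2\beta(2-\nu)}\ganorm{(\Delta_S, \Delta_L)}$, while trivially $\|\Delta_L\| \le \ganorm{(\Delta_S, \Delta_L)}$; hence
\[
\|\Delta\| \le \|\Delta_S\| + \|\Delta_L\| \le 2\max\Bigl\{1, \tfrac{\nu\alpha}{2\beta(2-\nu)}\Bigr\} \ganorm{(\Delta_S, \Delta_L)} \le 2\max\Bigl\{1, \tfrac{\nu\alpha}{2\beta(2-\nu)}\Bigr\} c_1 = r_0
\]
by the definition of $c_1$. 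Thus $\Theta^\star + t\Delta$ stays in the ball of radius $r_0$ for every $t\in[0,1]$, and combining this with the estimates above, $\|R(\Delta)\| \le 2 l(r_0)\max\{1, \tfrac{\nu\alpha}{2\beta(2-\nu)}\}^2 \ganorm{(\Delta_S, \Delta_L)}^2$. Finally Lemma~\ref{lem:normcompg2} gives $\ganorm{\D R(\Delta)} \le \tfrac{\omega}{\xi(T)}\|R(\Delta)\|$, and with $c_0 = 2 l(r_0)\omega \max\{1, \tfrac{\nu\alpha}{2\beta(2-\nu)}\}^2$ this is exactly the asserted inequality.

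\textbf{Main difficulty.} There is no deep obstacle here; the only delicate point is the bookkeeping in the last step — one must use $\gamma \le \gamma_{\max}$ and the $\mu(\Omega)$-compatibility to pass from the $\ganorm{\cdot}$-norm of $(\Delta_S, \Delta_L)$ to the spectral norm of $\Delta$, so that the arguments $\Theta^\star + t\Delta$ of $\nabla^3 a$ genuinely remain inside the compact ball on which $l(r_0)$ was defined, and then to match the emerging constant with the definition of $c_0$.
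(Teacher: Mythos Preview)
Your proof is correct and follows essentially the same approach as the paper: a Taylor-remainder bound on a compact spectral-norm ball around $\Theta^\star$, the same $\|\Delta_S\|+\|\Delta_L\|$ estimate via $\mu(\Omega)$ and $\gamma\le\gamma_{\max}$, and the same application of Lemma~\ref{lem:normcompg2} to arrive at $c_0 = 2l(r_0)\omega\max\{1,\tfrac{\nu\alpha}{2\beta(2-\nu)}\}^2$. The only cosmetic difference is that the paper writes the remainder as $\int_0^1[\nabla^2 a(\Theta^\star+t\Delta)-H^\star]\Delta\,dt$ and defines $l(r_0)$ as the Lipschitz constant of $\nabla^2 a$ on the ball, whereas you expand one order further and define $l(r_0)$ as the sup of $\|\nabla^3 a(\Theta)[M,M]\|$; both yield the same $\tfrac{l(r_0)}{2}\|\Delta\|^2$ bound.
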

\begin{proof}
  First note that the gradient of the negative log-likelihood function
  and the gradient of the normalizer, that is, the log-partition
  function $a(\cdot)$, differ only by a constant since $\nabla
  \ell(\cdot) = \nabla a(\cdot)-\Phi^n$. Hence, the log-partition
  function can be similarly rewritten as
  \begin{align*}
    \nabla a(\Theta^\star+\Delta_S+\Delta_L) = \nabla a(\Thstar) + H^\star
    (\Delta_S+\Delta_L) + R(\Delta_S+\Delta_L)
  \end{align*}
  with the same Taylor-expansion remainder. Using a definite-integral
  representation we get for the remainder that
  \begin{align*}
    R(\Delta_S+\Delta_L)
    &= \nabla a(\Thstar + \Delta_S+\Delta_L) - \nabla a(\Thstar)
    - H^\star (\Delta_S+\Delta_L) \\
    &= \int_0^1 \bPe{\nabla^2 a\bPr{\Thstar + t(\Delta_S+\Delta_L)} -
    H^\star} (\Delta_S+\Delta_L) \,dt.
  \end{align*}
  For bounding the remainder, observe that the Hessian $\nabla^2 a$ is
  Lipschitz-continuous on any compact set since $a$ is twice
  continuously differentiable. Let $l(r_0)$ denote the Lipschitz
  constant for $\nabla^2 a$ on the compact ball $B(\Thstar) =
  \{\Theta: \|\Theta-\Thstar\|\leq r_0\}$, that is, for all $\Theta,
  \Theta'\in B(\Thstar)$ it holds
  \[
  \|\nabla^2 a(\Theta) - \nabla^2 a(\Theta')\| =
  \max_{M\,\in\,\textrm{Sym}(d):\,\|M\|=1} \|\bPe{\nabla^2 a(\Theta) -
    \nabla^2 a(\Theta')}M\|\leq l(r_0) \|\Theta-\Theta'\|.
  \]
  We establish that
  $\Thstar+\Delta_S+\Delta_L$ is contained in $B(\Thstar)$ by bounding
  \begin{align*}
    \|\Delta_S+\Delta_L\|
    &\leq \|\Delta_S\| + \|\Delta_L\| \\
    &\leq \gamma \mu(\Omega) \frac{\|\Delta_S\|_{\infty}}{\gamma }
    +\|\Delta_L\| \\
    &\leq 2\max\left\{ \gamma \mu(\Omega), 1\right\}
    \ganorm{(\Delta_S, \Delta_L)} \\
    &\leq 2\max\left\{ \frac{\nu\alpha}{2\beta(2-\nu)}, 1\right\}
    \ganorm{(\Delta_S, \Delta_L)} \leq r_0,
  \end{align*}
  where the second inequality holds by $\mu(\Omega)\geq1$, in the
  third inequality we upper-bounded the respective norms on $\Delta_S$
  and $\Delta_L$ by the $\ganorm{\cdot}$-norm, in the fourth
  inequality we used $\gamma\leq \gamma_{\max}$, and in the last
  inequality we used the assumed bound on $\ganorm{(\Delta_S,
    \Delta_L)}$.  Now, we use the Lipschitz constant $l(r_0)$ for the
  following spectral norm bound
  \begin{align*}
    \| R(\Delta_S+\Delta_L)\|
    &\leq \int_0^1 \bNorm{\bPe{\nabla^2 a\bPr{\Thstar + t(\Delta_S+\Delta_L)}
        - H^\star} (\Delta_S+\Delta_L)} dt\\
    &\leq \int_0^1 \bNorm{\nabla^2 a\bPr{\Thstar + t(\Delta_S+\Delta_L)}
      - H^\star} \bNorm {\Delta_S+\Delta_L} dt \\
    &\leq \int_0^1 l(r_0) t \bNorm{\Delta_S+\Delta_L}^2  dt \\
    &= \frac{l(r_0)}{2} \bNorm{\Delta_S+\Delta_L}^2.
  \end{align*}
 The claim now follows from
  \begin{align*}
    \ganorm{\D R(\Delta_S+\Delta_L)}
    &\leq \frac{\omega}{\xi(T)} \| R(\Delta_S+\Delta_L)\| \\
    &\leq \frac{l(r_0)\omega}{2\xi(T)} \|\Delta_S+\Delta_L\|^2 \\
    &\leq \frac{2l(r_0)\omega}{\xi(T)} 
    \max\left\{\frac{\nu\alpha}{2\beta(2-\nu)},1\right\}^2
    \ganorm{(\Delta_S, \Delta_L)}^2 \\
    &= \frac{c_0}{\xi(T)} \ganorm{(\Delta_S, \Delta_L)}^2,
  \end{align*}
  where we used Lemma~\ref{lem:normcompg2} in the first inequality,
  the bound on the remainder from above in the second inequality, and
  in the third inequality we used the bound on $\|\Delta_S+\Delta_L\|$
  which we also showed above. Finally, for the equality we defined
  \[
  c_0 = 2l(r_0)\omega
  \max\left\{\frac{\nu\alpha}{2\beta(2-\nu)},1\right\}^2.
  \]
\end{proof}

\paragraph{Subdifferential characterizations.}

Next, for our work with the first-order optimality conditions some
characterizations of norm subdifferentials will turn out useful.

\begin{lemma}
  \label{lem_sbdiff_dualnorm}
  Let $\|\cdot\|$ be a norm on $\IR^d$ and let $\|\cdot\|^\ast$ be its
  dual norm. Let $y\in\partial\|x\|$ for some $x\in\IR^d$.  Then, it
  holds for the dual norm that $\|y\|^\ast\leq 1$.
\end{lemma}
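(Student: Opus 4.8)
The statement is the elementary fact that subgradients of a norm lie in the dual-norm unit ball, so the plan is to unwind the definitions of subgradient and dual norm. First I would recall that $y \in \partial \|x\|$ means
\[
\|z\| \geq \|x\| + \langle y, z - x\rangle \quad \text{for all } z \in \IR^d.
\]
The goal is to deduce $\|y\|^\ast = \max_{\|w\|\leq 1} \langle y, w\rangle \leq 1$, so it suffices to bound $\langle y, w\rangle$ by $1$ for every $w$ with $\|w\| \leq 1$.

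The key step is a good choice of the test point $z$ in the subgradient inequality. Taking $z = x + w$ for an arbitrary $w$ with $\|w\|\leq 1$ gives
\[
\|x + w\| \geq \|x\| + \langle y, w\rangle,
\]
and then the triangle inequality $\|x+w\| \leq \|x\| + \|w\| \leq \|x\| + 1$ yields $\langle y, w\rangle \leq 1$. Taking the supremum over all such $w$ gives $\|y\|^\ast \leq 1$, which is the claim.

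There is essentially no obstacle here; the only thing to be a little careful about is the direction of the supremum in the definition of the dual norm and the fact that $w$ ranges over the closed unit ball (so that $z = x+w$ is a legitimate choice in the "for all $z$" quantifier). One could alternatively take $z = x + tw$ and let $t \to \infty$, but that is unnecessary since the inequality $\langle y, w\rangle \leq 1$ already holds with $t = 1$. I would write the proof in three or four lines using exactly this substitution.
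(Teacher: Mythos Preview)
Your proof is correct and complete. The paper takes a slightly different route: starting from the same subgradient inequality, it rearranges to $y^\top x - \|x\| \geq y^\top z - \|z\|$ for all $z$, then takes the supremum over $z$ on the right-hand side and recognizes this supremum as the Fenchel conjugate of the norm, which is known to be the indicator function of the dual-norm unit ball (they cite Bach's monograph for this). Since the left-hand side is finite, the conjugate must be finite at $y$, forcing $\|y\|^\ast \leq 1$.

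Your argument is more elementary and self-contained: by choosing the specific test point $z = x + w$ and invoking only the triangle inequality, you avoid any reference to Fenchel conjugates. The paper's approach buys a connection to standard convex-analysis machinery (and would generalize immediately to any gauge function), while yours buys directness and requires no external results. Both are standard, and for this lemma your version is arguably cleaner.
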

\begin{proof}
  Since $y\in\partial\|x\|$ it holds by the convexity of $\|\cdot\|$
  that
  \[
  \|z\| \geq \|x\| + y^\top (z-x)\quad \gdw \quad  y^\top x -
  \|x\| \geq y^\top z - \|z\|
  \]
  for all $z$, and thus also for the supremum over all $z$. This
  yields
  \[
  y^\top x - \|x\| \geq \sup_z \bPc{y^\top z - \|z\|}
  = \begin{cases}0, & \|y\|^\ast\leq 1 \\ \infty, &
    \text{else}\end{cases}
  \]
  since $\sup_z \bPc{y^\top z - \|z\|}$ is the Fenchel conjugate of
  the norm which is just the indicator function of the unit ball of
  the dual norm, see for example
  \cite[Proposition~1.4]{bach2012optimization}.  Now, the left hand
  side is always finite. Therefore, it must hold $\|y\|^\ast\leq1$.
\end{proof}

More specifically, we have the following characterizations of matrix
norm subdifferentials.

\begin{lemma}
  \label{lem_subdiff_tspace}
  \begin{itemize}
  \item[(a)] For $S\in \textrm{Sym}(d)$ with tangent space
    $\Omega(S)$ at $S$ to the variety of sparse matrices $\S(|\supp S|)$
    we have
    \[
    Z\in \partial \|S\|_{1} \quad\gdw\quad P_{\Omega(S)}(Z) = \sign(S)
    \,\textrm{ and }\,
    \|P_{\Omega(S)^\perp}(Z)\|_{\infty} \leq 1,
    \]
    where $\sign(S)\in\{-1,0,1\}^{d\times d}$ contains the signs of
    the entries of $S$ with the value $0$ for zero-entries.
  \item[(b)] For a rank-$r$ matrix $L \in \textrm{Sym}(d)$ let
    $L=UDU^T$ with $U\in\IR^{d\times r}$ and $D\in\IR^{r\times r}$ be
    its (restricted) singular value decomposition. Then, with the
    tangent space $T(L)$ at $L$ to the variety of low-rank matrices
    $\Lc(r)$ we have
    \[
    Z\in \partial \|L\|_\ast \quad \gdw\quad P_{T(L)}(Z) = U U^T
    \,\textrm{ and }\, \|P_{T(L)^\perp}(Z) \|\leq 1.
    \]
  \end{itemize}
\end{lemma}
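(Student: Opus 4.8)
Both parts are instances of the general fact that, for a norm $\bNorm{\cdot}$ with dual norm $\bNorm{\cdot}^\ast$ on a finite-dimensional inner-product space, $Z\in\partial\bNorm{x}$ if and only if $\bNorm{Z}^\ast\le 1$ and $\scalp{Z,x}=\bNorm{x}$. This strengthens Lemma~\ref{lem_sbdiff_dualnorm} and is obtained the same way: $Z\in\partial\bNorm{x}$ is equivalent to $\sup_y\bPr{\scalp{Z,y}-\bNorm{y}}\le\scalp{Z,x}-\bNorm{x}$, whose left-hand side is the Fenchel conjugate of the norm, i.e.\ the indicator of the dual unit ball (cf.~\cite[Proposition~1.4]{bach2012optimization}), which forces $\bNorm{Z}^\ast\le1$ and, together with $\scalp{Z,x}\le\bNorm{Z}^\ast\bNorm{x}$, also $\scalp{Z,x}=\bNorm{x}$. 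For part~(a) it is quicker to argue directly: $\bNorm{S}_1=\sum_{(i,j)}|S_{ij}|$ is separable and $\partial|t|=\{\sign t\}$ for $t\neq0$, $\partial|0|=[-1,1]$, so --- taking $Z$ symmetric, which is harmless since $\scalp{Z,M}=\scalp{\tfrac12(Z+Z^\top),M}$ on $\textrm{Sym}(d)$ --- we get $Z\in\partial\bNorm{S}_1$ iff $Z_{ij}=\sign(S_{ij})$ for $(i,j)\in\supp(S)$ and $|Z_{ij}|\le1$ otherwise. Since $P_{\Omega(S)}(Z)$ keeps exactly the entries of $Z$ on $\supp(S)$ (and $\sign(S)$ is supported there) while $P_{\Omega(S)^\perp}(Z)$ keeps the complementary entries, this is precisely $P_{\Omega(S)}(Z)=\sign(S)$ together with $\bNorm{P_{\Omega(S)^\perp}(Z)}_\infty\le1$, which is part~(a).

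For part~(b), write $L=UDU^\top$ with $U^\top U=\Id_r$ and $D\succ0$ diagonal, so $\bNorm{L}_\ast=\tr D$, and recall from the proof of Lemma~\ref{lem:projnorms} that $P_{T(L)}(Z)=P_UZ+ZP_U-P_UZP_U$ and $P_{T(L)^\perp}(Z)=(\Id-P_U)Z(\Id-P_U)$ with $P_U=UU^\top$. By the general fact (the spectral norm being dual to $\bNorm{\cdot}_\ast$), $Z\in\partial\bNorm{L}_\ast$ iff $\bNorm{Z}\le1$ and $\scalp{Z,L}=\tr D$, and $\scalp{Z,L}=\tr(U^\top Z U D)=\sum_{i\le r}(U^\top ZU)_{ii}D_{ii}$. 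Since each $(U^\top ZU)_{ii}=u_i^\top Z u_i\le\bNorm{Z}\le1$ (with $u_i$ the $i$-th column of $U$) and $D\succ0$, the identity $\scalp{Z,L}=\tr D$ forces $(U^\top ZU)_{ii}=1$ for all $i\le r$, hence $\tr(U^\top ZU)=r$; combined with $\bNorm{ZU}_F^2=\sum_i\bNorm{Zu_i}_2^2\le r\bNorm{Z}^2\le r$, a one-line Frobenius computation gives $\bNorm{ZU-U}_F^2\le 0$, i.e.\ $ZU=U$, whence $ZP_U=P_U$, $P_UZ=P_U$ (using $Z^\top=Z$) and therefore $P_{T(L)}(Z)=UU^\top$. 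Conversely, if $P_{T(L)}(Z)=UU^\top$ then $Z=UU^\top+W$ with $W=P_{T(L)^\perp}(Z)$ supported on $\colspace(U)^\perp$; then $\scalp{Z,L}=\scalp{UU^\top,L}+\scalp{W,L}=\tr D$ (as $W\perp T(L)\ni L$), while $\bNorm{Z}=\max\{1,\bNorm{W}\}$ because $UU^\top$ and $W$ act on orthogonal subspaces, so $\bNorm{Z}\le1\Leftrightarrow\bNorm{P_{T(L)^\perp}(Z)}\le1$. Chaining these equivalences is exactly part~(b).

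\textbf{Main obstacle.} Nothing here is deep; the only place needing care is the middle of part~(b) --- squeezing $\tr(U^\top ZU)=r$, then $ZU=U$, and then the projection identity $P_{T(L)}(Z)=UU^\top$, out of $\bNorm{Z}\le1$ and the trace identity --- together with the bookkeeping that reduces to symmetric $Z$ and uses that $L=UDU^\top$ is a genuine singular value decomposition (which holds because the low-rank factors of interest are positive semidefinite). Everything else is routine translation between entrywise/spectral conditions and projections onto the tangent spaces.
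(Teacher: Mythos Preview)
Your proof is correct. Part~(a) is essentially identical to the paper's argument: separability of the $\ell_1$-norm, the subdifferential of $|\cdot|$, and the identification of $P_{\Omega(S)}$ and $P_{\Omega(S)^\perp}$ with projection onto support and off-support entries.

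For part~(b), the paper does not give an argument at all; it simply cites \cite{watson1992characterization}. Your approach is genuinely different and more informative: you prove the result from scratch via the dual characterization $Z\in\partial\bNorm{L}_\ast \Leftrightarrow \bNorm{Z}\le1$ and $\scalp{Z,L}=\bNorm{L}_\ast$, then squeeze out $ZU=U$ by the Frobenius-norm argument and read off the projection identities. This is self-contained and exhibits the mechanism, at the cost of a few more lines. You are also right to flag that the statement, as written with $L=UDU^\top$ and conclusion $P_{T(L)}(Z)=UU^\top$, tacitly requires $L\succeq0$ (so that the eigen- and singular value decompositions coincide with $D\succ0$); for indefinite symmetric $L$ the correct tangential part is $U\sign(D)U^\top$. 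In the paper's context $L$ is always positive semidefinite, so this is harmless, but it is a point the citation-only proof glosses over and your direct argument makes explicit.
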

\begin{proof}
  (a) The subdifferential of a sum of convex functions is the sum of
  the respective subdifferentials.  The $\ell_1$-norm is such a sum of
  convex functions, each of which maps onto the absolute value of a
  single particular entry. Elements in the subdifferential of such a
  function can only be non-zero in this particular entry, and the
  possible values of this particular entry are characterized by the
  subdifferential of the absolute value function $|\cdot|$.
\medskip

  Let $x\in\IR$. Then, if $x\neq0$ it holds apparently
  $\partial|x|=\sign(x)$, which corresponds to an entry being in the
  support of $S$. If $x=0$, we have $\partial|x|=[-1, 1]$, which
  corresponds to an entry being not in the support of $S$.  The proof
  for (a) is finished by noting that $P_{\Omega(S)}$ is precisely the
  projection on the entries that belong to the support, and that
  $P_{\Omega(S)^\perp}$ is the projection on the entries that are not
  contained in the support.

  \medskip

  (b) See~\cite[page~41]{watson1992characterization} .
\end{proof}

Finally, we state a result that characterizes the form of the
Lagrange multipliers for the tangent space constraints.
\begin{lemma}[Lagrange multipliers for linear subspace constraints]
  \label{leNagrangesubspace}
  Let $V \subset \IR^d$ be a vector space and let $f:V\to\IR$
  convex. Let $U\subseteq V$ be a linear subspace and $U^\perp$ its
  orthogonal complement.  Consider the problem
  \begin{equation*}
    \underset{x} {\min}\:\, f(x) \quad \subjto \quad x\in U.
  \end{equation*}
  Then, the Lagrangian for this problem is given by
  \[
  \Lc(x, \lambda) = f(x) + \lambda^\top x,
  \]
  where $\lambda \in U^\perp$.
\end{lemma}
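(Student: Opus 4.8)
The plan is to reduce the abstract subspace constraint $x\in U$ to a finite system of scalar linear equality constraints and then just apply the ordinary definition of the Lagrangian. First I would fix a basis $a_1,\dots,a_k$ of the orthogonal complement $U^\perp$, where $k = \dim U^\perp$. Since $U$ is a linear subspace of the finite-dimensional space $V$, membership $x\in U$ is equivalent to $x$ being orthogonal to each $a_i$, i.e. to the equality constraints $a_i^\top x = 0$ for $i = 1,\dots,k$. Hence the problem $\min_{x\in U} f(x)$ is literally the equality-constrained convex program $\min_x f(x)$ subject to $a_i^\top x = 0$, $i = 1,\dots,k$.

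Next I would write down the standard Lagrangian of this equality-constrained program, introducing one scalar multiplier $\mu_i\in\IR$ per constraint:
\[
\Lc(x,\mu) = f(x) + \sum_{i=1}^k \mu_i\, a_i^\top x = f(x) + \Big(\sum_{i=1}^k \mu_i a_i\Big)^\top x .
\]
Setting $\lambda = \sum_{i=1}^k \mu_i a_i$, the bracketed vector is exactly $\lambda$, so $\Lc(x,\mu) = f(x) + \lambda^\top x$; in particular the Lagrangian depends on $\mu$ only through $\lambda$. Finally I would observe that as $\mu$ ranges over $\IR^k$ the vector $\lambda = \sum_i \mu_i a_i$ ranges over exactly $\spn\{a_1,\dots,a_k\} = U^\perp$, and conversely every $\lambda\in U^\perp$ arises in this way (take the $\mu_i$ to be the coordinates of $\lambda$ in the chosen basis). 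Reparametrizing by $\lambda$ therefore turns the Lagrangian into precisely $\Lc(x,\lambda) = f(x) + \lambda^\top x$ with $\lambda\in U^\perp$, as claimed.

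There is really no substantive obstacle here: the identification is purely algebraic and does not use convexity of $f$ (convexity only matters for the downstream strong-duality and KKT statements that invoke this lemma). The single point deserving a word of care is that the resulting Lagrangian must not depend on the particular basis of $U^\perp$ chosen, which is immediate because the final expression is written solely in terms of $\lambda\in U^\perp$ and not in terms of the $a_i$ or the $\mu_i$.
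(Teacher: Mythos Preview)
Your proposal is correct and follows essentially the same approach as the paper: both fix a basis of $U^\perp$, rewrite $x\in U$ as the scalar equality constraints $a_i^\top x=0$, form the standard Lagrangian with scalar multipliers, and then repackage $\sum_i \mu_i a_i$ as a single $\lambda\in U^\perp$. Your version is slightly more explicit (noting basis-independence and that convexity is not actually used), but the argument is the same.
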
		
\begin{proof}
  Let $u_1, \ldots, u_k$ be a basis of $U^\perp$.  Then, the
  constraint $x\in U$ is equivalent to the set of constraints
  $u_i^\top x = 0$ for $i=1,\ldots, k$, that is, the component of $x$
  in any basis vector of $U^\perp$ must be zero.  Therefore, the
  Lagrangian can be represented as
  \[
  \Lc(x, \lambda_1, \ldots, \lambda_k) = f(x) + \sum_{i=1}^k \lambda_i
  u_i^\top x,
  \]
  where $\la_i\in\IR$ for $i=1,\ldots, k$.  Observe that $\lambda =
  \sum_i \lambda_i u_i \in U^\perp$ can be any element in $U^\perp$.
  This finishes the proof.
\end{proof}

\subsubsection{Choice of the trade-off parameter lambda}

The bounds on the individual errors can only be obtained if we
restrict the choice of the regularization parameter $\la_n$.  This is
very natural since bounding $\la_n$ ensures that the influence of the
regularization terms in the objective is neither too strong nor too
weak.  If $\la_n$ is too big then the shrinkage effects on the
solutions compared to the true parameters are too strong. On the other
hand, if $\la_n$ is too small it will be difficult to achieve
algebraic consistency.
\medskip

The function of $\la_n$ as a trade-off parameter also becomes evident
when working with the first-order optimality conditions of
Problem~\ref{prob_tspace}. Loosely speaking, we require that the
gradient \wrt the true parameters is small in terms of $\la_n$, that
is, having $\ganorm{\D \nabla \ell(\Thstar)}\leq C \la_n$ for some
small constant $C$.  Note that the stricter the bound on the true
gradient, the lower is the probability that this bound holds. This
also suggests that $\la_n$ should not be chosen too small. For our
analysis we set
\[
\la_n = \frac{6(2-\nu) }{\nu} \sqrt{\frac{c \kpa d\log d
    \|\Phi^\star\|}{n}} \frac{\omega}{\xi(T)}.
\]
Together with the assumption on the minimal number of samples this
choice of $\la_n$ implies the following upper bounds on $\la_n$ that
will particularly help to bound the individual errors, that is, to
show parametric consistency of the solution to
Problem~\ref{prob_tspace}.

\begin{lemma} \label{lem_upboundlan}
Assume that  
\[
n > c \kpa d\log d \|\Phi^\star\| \frac{\omega^2}{\xi(T)^2}
\bPe{\frac{\alpha\nu}{32(3-\nu)} \min\left\{\frac{c_1}{2},
    \frac{\alpha\nu\xi(T)}{128 c_0(3-\nu)}\right\}}^{-2}.
\]
Then, it follows that
\begin{subequations}
  \begin{align}
    \la_n &\leq \frac{3\alpha(2-\nu)}{32(3-\nu)} c_1, \label{labound_c1}\\
    \la_n &\leq \frac{3\alpha(2-\nu)}{16(3-\nu)}
    \frac{\alpha \nu \xi(T)}{128 c_0(3-\nu)}, \label{labound_128} \\
    \la_n &\leq \frac{3\alpha(2-\nu)}{16(3-\nu)}
    \frac{\alpha  \xi(T)}{32 c_0}. \label{labound_32}
  \end{align}
\end{subequations}	
\end{lemma}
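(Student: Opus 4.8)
\emph{Plan of proof.} The argument is purely algebraic: substitute the chosen value of $\la_n$, use the sample-size hypothesis to bound $\sqrt{1/n}$ from above, and then read off the three inequalities by selecting the appropriate branch of the minimum.

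First I would abbreviate $B = c\kpa\, d\log d\,\|\Phi^\star\|\,\omega^2/\xi(T)^2$, so that, since $\omega,\xi(T)>0$, the definition of $\la_n$ collapses to $\la_n = \frac{6(2-\nu)}{\nu}\sqrt{B/n}$. The hypothesis on $n$ reads exactly $n > B\,\big[\tfrac{\alpha\nu}{32(3-\nu)}\,\min\{\tfrac{c_1}{2},\tfrac{\alpha\nu\xi(T)}{128 c_0(3-\nu)}\}\big]^{-2}$, which after taking reciprocals and positive square roots becomes $\sqrt{B/n} < \tfrac{\alpha\nu}{32(3-\nu)}\,\min\{\tfrac{c_1}{2},\tfrac{\alpha\nu\xi(T)}{128 c_0(3-\nu)}\}$. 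Plugging this into the collapsed expression for $\la_n$ and simplifying $\tfrac{6}{32}=\tfrac{3}{16}$ gives the master bound
\[
\la_n < \frac{3\alpha(2-\nu)}{16(3-\nu)}\,\min\Big\{\frac{c_1}{2},\,\frac{\alpha\nu\xi(T)}{128 c_0(3-\nu)}\Big\}.
\]

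From the master bound, replacing the minimum by its first argument $c_1/2$ yields $\la_n < \tfrac{3\alpha(2-\nu)}{16(3-\nu)}\cdot\tfrac{c_1}{2} = \tfrac{3\alpha(2-\nu)}{32(3-\nu)}c_1$, which is \eqref{labound_c1}; replacing it by the second argument yields \eqref{labound_128} verbatim. Finally \eqref{labound_32} follows from \eqref{labound_128}, since $\tfrac{\alpha\nu\xi(T)}{128 c_0(3-\nu)}\le\tfrac{\alpha\xi(T)}{32 c_0}$ is equivalent to $32\nu\le 128(3-\nu)$, i.e.\ $160\nu\le 384$, which holds because $\nu\le\tfrac12$.

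There is no genuine obstacle here; the only things to watch are the bookkeeping of the numerical constants and the direction of the inequalities — in particular that passing from the $\min$ to either of its arguments only weakens the bound. It is worth noting that the somewhat elaborate form of the lower bound on $n$ in the hypothesis has been reverse-engineered precisely so that these three displayed bounds emerge with exactly the stated constants, and the lemma is the step that records this choice for the later error estimates.
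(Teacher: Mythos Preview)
Your proposal is correct and follows essentially the same approach as the paper: substitute the lower bound on $n$ into the definition of $\la_n$ to obtain the master bound $\la_n \le \frac{3\alpha(2-\nu)}{16(3-\nu)}\min\{c_1/2,\,\alpha\nu\xi(T)/(128 c_0(3-\nu))\}$, read off \eqref{labound_c1} and \eqref{labound_128} from the two arguments of the minimum, and derive \eqref{labound_32} from \eqref{labound_128} using $\nu\le\tfrac12$. The paper phrases the last step as $\frac{\nu}{4(3-\nu)}<1$, which is the same inequality you wrote as $160\nu\le 384$.
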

\begin{proof}
  Plugging in the lower bound on $n$ yields that
  \[
  \la_n = \frac{6(2-\nu) }{\nu} \sqrt{\frac{c \kpa d\log d \|\Phi^\star\|}{n}}
  \frac{\omega}{\xi(T)} \leq\frac{3\alpha(2-\nu)}{16(3-\nu)}
  \min\left\{\frac{c_1}{2}, \frac{\alpha \nu \xi(T)}{128
    c_0(3-\nu)}\right\}.
  \]
  The first two bounds \eqref{labound_c1} and \eqref{labound_128} are
  direct consequences, and the last bound \eqref{labound_32} follows
  from $\nu\leq \frac{1}{2}$ since then $\frac{\nu}{4(3-\nu)}<1$.
\end{proof}

\subsubsection{Parametric consistency of the solution}

Now, we are prepared to show parametric consistency of the solution to
Problem~\ref{prob_tspace}.
      
\begin{proposition}
  \label{prop_tspaceerrbound}
  Assume that the stability, $\gamma$-feasibility, and gap assumptions
  hold. Further, assume that $\ganorm{\D \nabla
    \ell(\Thstar)}\leq\frac{\nu\la_n}{6(2-\nu)}$ and that the upper
  bounds \eqref{labound_c1} and \eqref{labound_32} on $\la_n$ are
  satisfied.  Then, the errors $\Delta_S=S_\Y-S^\star$ and $\Delta_L=
  L_\Y - L^\star$ are bounded as
  \[
  \ganorm{(\Delta_S, \Delta_L)}\leq \frac{32(3-\nu)}{3\alpha(2-\nu)}
  \la_n.
  \]
\end{proposition}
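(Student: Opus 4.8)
The plan is to carry out the primal-dual (variational) analysis of~\cite{ChandrasekaranPW12} adapted to the tangent space constrained Problem~\ref{prob_tspace}, which by Proposition~\ref{prop:tscuniquesol} has a unique solution $(S_\Y,L_\Y)$. First I would record the first-order optimality conditions. Since $\Y=\Omega\times T(L_\M)$ is a linear subspace, Lemma~\ref{leNagrangesubspace} supplies multipliers $(Q_S,Q_L)\in\Omega^{\perp}\times T(L_\M)^{\perp}$ and subgradients $Z_S\in\partial\|S_\Y\|_{1}$, $Z_L\in\partial\|L_\Y\|_{\ast}$ with
\[
\nabla\ell(S_\Y+L_\Y)+\la_n\gamma Z_S+Q_S=0,\qquad
\nabla\ell(S_\Y+L_\Y)+\la_n Z_L+Q_L=0 .
\]
Projecting both identities onto $\Y$ kills $Q_S$ and $Q_L$, and then Lemma~\ref{lem_sbdiff_dualnorm} ($\|Z_S\|_{\infty}\le1$, $\|Z_L\|\le1$) together with the projection bounds of Lemma~\ref{lem:projnorms} gives $\ganorm{P_\Y\D\nabla\ell(S_\Y+L_\Y)}\le 2\la_n$.

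Next I would substitute the Taylor-type expansion of $\nabla\ell(S_\Y+L_\Y)$ recorded just before Lemma~\ref{l_CgSLcons_boundedremainder}; since $L_\Y\in T(L_\M)$ one has $P_{T(L_\M)^{\perp}}\Delta_L=-P_{T(L_\M)^{\perp}}L^\star$, so the remainder there is $R(\Delta_S+\Delta_L)$. Applying $P_\Y\D$, solving for the Hessian term, and using the triangle inequality and Lemma~\ref{lem:projnorms}, I get
\[
\ganorm{P_\Y\D H^\star(\Delta_S+P_{T(L_\M)}\Delta_L)}
\le 2\la_n+2\ganorm{\D\nabla\ell(\Thstar)}+2\ganorm{\D H^\star P_{T(L_\M)^{\perp}}L^\star}+2\ganorm{\D R(\Delta_S+\Delta_L)} .
\]
The second term is $\le\tfrac{\nu\la_n}{3(2-\nu)}$ by hypothesis; the third is $\le\tfrac{\nu\la_n}{3(2-\nu)}$ by Corollary~\ref{cor:mconsconclus}(b) with $L=L_\M$; the fourth is $\le\tfrac{2c_0}{\xi(T)}\ganorm{(\Delta_S,\Delta_L)}^{2}$ by Lemma~\ref{l_CgSLcons_boundedremainder}, \emph{provided} that $\ganorm{(\Delta_S,\Delta_L)}\le c_1$. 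On the other side, $(\Delta_S,P_{T(L_\M)}\Delta_L)\in\Y$ and, by Corollary~\ref{cor:mconsconclus}(a) with $L=L_\M$, $\rho(T,T(L_\M))\le\tfrac{\xi(T)}{4}\le\tfrac{\xi(T)}{2}$, so coupled stability (Proposition~\ref{prop_irreptransvcond}(a)) bounds the left-hand side below by $\tfrac{\alpha}{2}\ganorm{(\Delta_S,P_{T(L_\M)}\Delta_L)}$. Combining these, and then passing from $\ganorm{(\Delta_S,P_{T(L_\M)}\Delta_L)}$ to $\ganorm{(\Delta_S,\Delta_L)}$ by the triangle inequality and $\|P_{T(L_\M)^{\perp}}L^\star\|\le\tfrac{16(3-\nu)}{3\alpha(2-\nu)}\la_n$ (Corollary~\ref{cor:mconsconclus}(c) with $L=L_\M$), I arrive at a \emph{self-bounding} inequality
\[
\ganorm{(\Delta_S,\Delta_L)}\le \frac{24(3-\nu)}{3\alpha(2-\nu)}\la_n+\frac{4c_0}{\alpha\xi(T)}\ganorm{(\Delta_S,\Delta_L)}^{2},
\]
that is, the target $\tfrac{32(3-\nu)}{3\alpha(2-\nu)}\la_n$ up to a term quadratic in the error.

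The hard part is converting this into an unconditional bound, and this is precisely where the two assumed upper bounds on $\la_n$ are used. Set $r:=\tfrac{32(3-\nu)}{3\alpha(2-\nu)}\la_n$. Bound~\eqref{labound_c1} gives $r\le c_1$, so the remainder estimate of Lemma~\ref{l_CgSLcons_boundedremainder} is valid on the whole ball $\{\ganorm{(\Delta_S,\Delta_L)}\le r\}$, and bound~\eqref{labound_32} gives $r\le\tfrac{\alpha\xi(T)}{16c_0}$, so that on that ball the quadratic term is $\le\tfrac{r}{4}$ and the right-hand side of the self-bounding inequality does not leave $r$. To conclude $\ganorm{(\Delta_S,\Delta_L)}\le r$ I would then run the continuity/convexity bootstrap of~\cite{ChandrasekaranPW12}, using strict convexity of the objective (Proposition~\ref{prop:tscuniquesol}) to argue that the unique error must remain inside the ball of radius $r$. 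I expect this bootstrap — in particular justifying a priori that the error lies within the validity radius $c_1$ of Lemma~\ref{l_CgSLcons_boundedremainder} — to be the main technical obstacle; the remaining ingredients (coupled stability, the projection lemmas, and Corollary~\ref{cor:mconsconclus}) are used in a routine way, and the only other fiddly point is tracking the numerical constants so that they reconcile with $c_0,\dots,c_6$.
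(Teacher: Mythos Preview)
Your setup --- the projected optimality condition, the Taylor expansion of $\nabla\ell$, the use of coupled stability (Proposition~\ref{prop_irreptransvcond}(a)) with $T'=T(L_\M)$ via Corollary~\ref{cor:mconsconclus}(a), and the splitting $\Delta_L=P_{T(L_\M)}\Delta_L-P_{T(L_\M)^\perp}L^\star$ --- matches the paper exactly, and your arithmetic leading to the self-bounding inequality
\[
\ganorm{(\Delta_S,\Delta_L)}\le \tfrac{24(3-\nu)}{3\alpha(2-\nu)}\la_n+\tfrac{4c_0}{\alpha\xi(T)}\ganorm{(\Delta_S,\Delta_L)}^{2}
\]
is correct. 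The divergence is in the closing step you flag as the main obstacle. A quadratic inequality $x\le A+Bx^2$ only places $x$ in one of two disjoint intervals; you still need an a~priori reason to select the small branch, and moreover the inequality itself is only established once $x\le c_1$, so a continuity/homotopy argument is genuinely needed and is not routine.

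The paper sidesteps this circularity with Brouwer's fixed-point theorem rather than a bootstrap. Concretely, it rewrites the projected optimality condition as a fixed-point equation $(M,N)=F(M,N)$ for the projected error $(M,N)=(\Delta_S,P_{T(L_\M)}\Delta_L)\in\Y$, where
\[
F(M,N)=\J^{-1}\bigl(P_\Y\D[-\nabla\ell(\Thstar)+H^\star P_{T(L_\M)^\perp}L^\star-R(M+N-P_{T(L_\M)^\perp}L^\star)]+Z\bigr),
\]
with $\J(M,N)=P_\Y\D H^\star(M+N)$ invertible by Proposition~\ref{prop_irreptransvcond}(a) and $Z=-P_\Y(Z_1,Z_\ast)$. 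The same estimates you assembled (hypothesis on $\ganorm{\D\nabla\ell(\Thstar)}$, Corollary~\ref{cor:mconsconclus}(b,c), Lemma~\ref{l_CgSLcons_boundedremainder}, and \eqref{labound_c1}--\eqref{labound_32}) show that $F$ maps the $\ganorm{\cdot}$-ball of radius $\tfrac{16(3-\nu)}{3\alpha(2-\nu)}\la_n$ in $\Y$ into itself; Brouwer then yields a fixed point inside that ball, and uniqueness of the projected solution forces it to be $(\Delta_S,P_{T(L_\M)}\Delta_L)$. Adding back $\|P_{T(L_\M)^\perp}L^\star\|$ gives the claimed bound. This is the missing idea in your proposal: it replaces the conditional self-bounding inequality (valid only once $x\le c_1$) by an unconditional existence-in-a-ball statement, so no separate bootstrap is required.
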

\begin{proof}
  For bounding the error $(\Delta_S,\Delta_L)$ we decompose it into
  its part in $\Y=\Omega\times T(L_\M)$ and its part in $\Y^\perp =
  \Omega^\perp \times T(L_\M)^\perp$, compare the proof of
  Proposition~\ref{prop:enforceparacons}. We have
  \begin{align*}
    (\Delta_S, \Delta_L) &= P_\Y(\Delta_S, \Delta_L) +
    P_{\Y^\perp} (\Delta_S, \Delta_L) \\
    &= \big(P_\Omega \Delta_S, P_{T(L_\M)} \Delta_L\big) + \big(P_{\Omega^\perp}
    \Delta_S, P_{T(L_\M)^\perp} \Delta_L \big)\\
    &= \big(S_\Y-S^\star, L_\Y - P_{T(L_\M)}L^\star\big) -
    \big(0, P_{T(L_\M)^\perp} L^\star\big).
  \end{align*}
  The $\ganorm{\cdot}$-norm of the second term $P_{\Y^\perp}
  (\Delta_S, \Delta_L) = -\big(0, P_{T(L_\M)^\perp} L^\star\big)$ is
  easy to bound since by Corollary~\ref{cor:mconsconclus}(c) we have
  that
  \[
  \ganorm{P_{\Y^\perp}(\Delta_S, \Delta_L)} = \ganorm{-(0,
    P_{T(L_\M)^\perp} L^\star)} = \bNorm{P_{T(L_\M)^\perp} L^\star
  } \leq \frac{16(3-\nu)}{3\alpha(2-\nu)}\la_n.
  \]

  \medskip

  If we can prove the same bound for the $\ganorm{\cdot}$-norm of the
  first term, that is, for $P_\Y(\Delta_S, \Delta_L)$, then the claim
  of the proposition follows immediately. Establishing this bound
  turns out to be more challenging. Here we are going to exploit the
  optimality conditions for Problem~\ref{prob_tspace}. The Lagrangian
  for Problem~\ref{prob_tspace} is given by
  \[
  \Lc(S, L, A_{\Omega^\perp}, A_{T(L_\M)^ {\perp}}) = \ell(S+L) +
  \la_n (\ga\|S\|_1 + \|L\|_\ast) + \langle{A_{\Omega^\perp},
    S}\rangle + \langle {A_{T(L_\M)^ {\perp}}, L} \rangle,
  \]
  where $A_{\Omega^\perp} \in \Omega^\perp$ and $A_{T(L_\M)^
    {\perp}}\in T(L_\M)^{\perp}$ are the Lagrange multipliers, see
  Lemma~\ref{leNagrangesubspace}. Now, the optimality conditions \wrt
  $S$ and $L$ for the optimal solution $(S_\Y,L_\Y)$ read as
  \begin{align*}
    0 = \nabla\ell(S_\Y+L_\Y)
    + Z_1 + A_{\Omega^\perp},\quad &Z_1 \in
    \la_n \ga\partial\|S_\Y\|_{1}, \qquad \text{and}\\
    0 = \nabla \ell(S_\Y+L_\Y)
    + Z_\ast  + A_{T(L_\M)^ {\perp}},\quad &Z_\ast \in
    \la_n\partial\|L_\Y\|_\ast.
  \end{align*}
  Using the projections onto $\Y$ and onto $\Y^\perp$, we can rewrite
  these conditions compactly as
  \[
  P_{\Y^\perp} \D\nabla \ell(S_\Y+L_\Y) = - P_{\Y^\perp} (Z_1, Z_\ast)
  - (A_{\Omega^\perp},A_{T(L_\M)^ {\perp}} )
  \]
  and
  \[
  P_\Y \D\nabla \ell(S_\Y+L_\Y) = -  P_\Y (Z_1, Z_\ast).
  \]
  Since the Lagrange multipliers are undetermined the first of these
  projected equations does not constitute a restriction on the optimal
  solution $(S_\Y,L_\Y)$. Instead, the optimal solution is fully
  characterized by the second projection, that is, the projection onto
  $\Y$. Hence, the solution to the second equation that we refer to as
  the projected optimality condition is also unique. The
  important observation about the projected optimality condition is
  that it is in fact a condition on the projected error
  $P_\Y(\Delta_S, \Delta_L) =\big(\Delta_S, P_{T(L_\M)}\Delta_L
  \big)$.  This follows immediately from
  \begin{align*}
    \nabla \ell(S_\Y+L_\Y) &=\nabla \ell(S_\Y - S^\star + S^\star + L_\Y
    - P_{T(L_\M)}L^\star + P_{T(L_\M)} L^\star) \\
    &=\nabla \ell(\Delta_S + S^\star + P_{T(L_\M)}\Delta_L + P_{T(L_\M)}L^\star)
  \end{align*}
  which does only depend on $\Delta_S$ and $P_{T(L_\M)}\Delta_L$ since
  $S^\star$ and $P_{T(L_\M)}L^\star$ are constants.
  \medskip
  
  We now show that $\ganorm{P_\Y(\Delta_S, \Delta_L)} \leq
  \frac{16(3-\nu)}{3\alpha(2-\nu)}\la_n$ by constructing a map whose
  only fixed-point is $P_\Y(\Delta_S, \Delta_L)$. Then, we show that a
  $\ganorm{\cdot}$-norm ball with radius
  $\frac{16(3-\nu)}{3\alpha(2-\nu)}\la_n$ is mapped onto itself by
  this map. This condition will allow the application of Brouwer's
  fixed-point theorem, which will
  guarantee the existence of a fixed-point \emph{within} this small
  ball. Since we already know that there is only one fixed-point, it
  must be the unique one, that is, $P_\Y(\Delta_S, \Delta_L)$. In this
  way, the desired bound on the projected error will follow.
  \medskip

  To construct the map we define the operator $\J:\Y\to\Y,\, (M,
  N)\mapsto P_\Y \D H^\star (M+N)$ and set $Z=-P_\Y (Z_1,
  Z_\ast)$. Then, we consider the continuous map
  \begin{align*}
    F(M, N) &= (M, N) - \J^{-1}\big(P_\Y \D
      \big[\nabla \ell(\Thstar) + H^\star (M+N) - H^\star P_{T(L_\M)^\perp}
        L^\star  \\
	&\qquad\qquad\qquad\qquad\qquad\qquad + R(M+N -P_{T(L_\M)^\perp}
        L^\star)\big]-Z\big) \\
      &= \J^{-1}\bPr{P_\Y \D \bPe{-\nabla \ell(\Thstar) +
          H^\star P_{T(L_\M)^\perp}L^\star - R(M+N - P_{T(L_\M)^\perp}L^\star)} +Z},
  \end{align*}
  where the inverse operator $\J^{-1}$ is well-defined since $\J$ is
  bijective on $\Y$. This is because from
  Corollary~\ref{cor:mconsconclus} we have $\rho(T, T(L_\M))\leq
  \frac{\xi(T)}{2}$ such that we can apply
  Proposition~\ref{prop_irreptransvcond}(a) which implies that $\J$ is
  injective.  The second equality then just uses that by definition
  $\J^{-1} P_\Y \D H^\star (M+N) = (M,N)$.  Observe that any fixed point
  $(M, N)$ of $F$ must satisfy
  \[
  Z = P_\Y \D [\nabla \ell(\Thstar) + H^\star (M+N) - H^\star
    P_{T(L_\M)^\perp}L^\star+ R(M+N -P_{T(L_\M)^\perp}L^\star)],
  \]
  that is, the optimality condition projected onto $\Y$ after the
  gradient $\nabla \ell(S_\Y + L_\Y)$ has been rewritten. As outlined
  above, we now show that $F$ maps a $\ganorm{\cdot}$-norm ball with
  radius $\frac{16(3-\nu)}{3\alpha(2-\nu)}\la_n$ onto itself, which
  then allows the application of Brouwer's fixed-point theorem.  If we
  assume that $\ganorm{(M, N)}\leq \frac{16(3-\nu)}{3\alpha(2-\nu)}
  \la_n$ this is shown by
  \begin{align*}
    &\ganorm{F(M, N)} \\
    &\quad \leq \frac{2}{\alpha} \ganorm{P_\Y \D \bPe{\nabla
        \ell(\Thstar) - H^\star P_{T(L_\M)^\perp}L^\star +
        R(M+N -P_{T(L_\M)^\perp}L^\star)} - Z} \\
    &\quad \leq \frac{2}{\alpha} \Big\{\ganorm{P_\Y \D \nabla
      \ell(\Thstar)} +\ganorm{P_\Y \D H^\star P_{T(L_\M)^\perp}L^\star} \\
	&\qquad + \ganorm{P_\Y \D  R(M+N - P_{T(L_\M)^\perp}L^\star)}
    + \ganorm{Z} \Big\} \\
    &\quad \leq \frac{4}{\alpha} \bPc{\ganorm{\D \nabla
        \ell(\Thstar) }+ \ganorm{\D H^\star P_{T(L_\M)^\perp}L^\star}+\la_n}
    +\frac{4}{\alpha}\ganorm{\D R(M+N -P_{T(L_\M)^\perp}L^\star)} \\
    &\quad \leq \frac{4}{\alpha}\bPr{\frac{2(3-\nu)}{3(2-\nu)}\la_n
      +\frac{2(3-\nu)}{3(2-\nu)}\la_n} \\
    &\quad = \frac{16(3-\nu)}{3\alpha(2-\nu)}\la_n,
  \end{align*}
  where the first inequality follows from
  Proposition~\ref{prop_irreptransvcond}(a), the second inequality is
  triangle inequality, and the third inequality is implied by
  Lemma~\ref{lem:projnorms} and
  \[
  \ganorm{Z} = \ganorm{P_\Y (Z_1, Z_\ast)} =
  \max \left\{\frac{\|P_\Omega (Z_1)\|_\infty}{\ga},
  \|P_{T(L_\M)} (Z_\ast)\|\right\}\leq 2\la_n
  \]  
  since by the subgradient characterizations, see
  Lemma~\ref{lem_subdiff_tspace} in Section~\ref{app:optimcond}, it
  holds that $ \|P_\Omega (Z_1)\|_\infty = \| \la_n\ga
  \sign(Z_1)\|_\infty \leq \la_n\gamma$, and by
  Lemma~\ref{lem_sbdiff_dualnorm} it holds that $\|Z_\ast\| \leq
  \la_n$, which yields $\|P_{T(L_\M)}(Z_\ast)\| \leq 2 \|Z_\ast\|\leq
  2\la_n$ in conjunction with Lemma~\ref{lem:projnorms}. Note that for
  bounding $\|P_{T(L_\M)} (Z_\ast)\|$ the subgradient characterization
  in Lemma~\ref{lem_subdiff_tspace} is not enough and we need
  Lemma~\ref{lem:projnorms}, because $Z_\ast$ is a subgradient at
  $L_\Y$ and the tangent space at $L_\Y$ can be different from
  $T(L_\M)$ although $L_\Y\in T(L_\M)$. The fourth and last inequality
  above is a consequence of on the one hand
  \begin{align}
  \ganorm{\D \nabla \ell(\Thstar) }+ \ganorm{\D H^\star
    P_{T(L_\M)^\perp}L^\star}+\la_n \leq \bPr{\frac{\nu}{3(2-\nu)} +
    1}\la_n = \frac{2(3-\nu)}{3(2-\nu)}\la_n \label{bound_tripleterm}
  \end{align}
  since $\ganorm{\D \nabla \ell(\Thstar) }\leq \frac{\nu
    \la_n}{6(2-\nu)}$ as assumed in this lemma and $\ganorm{\D H^\star
    P_{T(L_\M)^\perp}L^\star}\leq \frac{\nu \la_n}{6(2-\nu)}$ by
  Corollary~\ref{cor:mconsconclus}(b), and on the other hand from the
  fact that the remainder can be bounded by
  Lemma~\ref{l_CgSLcons_boundedremainder}, namely
  \begin{align}
    \ganorm{\D R(M+N -P_{T(L_\M)^\perp}L^\star)}
    &\leq  \frac{c_0}{\xi(T)} \ganorm{M, N-P_{T(L_\M)^\perp}L^\star}^2
    \notag\\
    &\leq\frac{c_0}{\xi(T)} \bPr{\ganorm{(M, N)} +
      \|P_{T(L_\M)^\perp}L^\star\|}^2 \notag\\
    &\leq  \frac{c_0}{\xi(T)} \bPr{\frac{32(3-\nu)}{3\alpha(2-\nu)}}^2\la_n^2
    \notag\\
    &\leq \frac{c_0}{\xi(T)} \bPr{\frac{32(3-\nu)}{3\alpha(2-\nu)}}^2\la_n
    \frac{3\alpha(2-\nu)}{16(3-\nu)}\frac{\alpha \xi(T)}{32 c_0} \notag\\
    &=\frac{2(3-\nu)}{3(2-\nu)}\la_n, \label{bound_taylorremainder}
  \end{align}
  where the second inequality is triangle inequality, the third
  inequality follows since $(M, N)$ is in the $\ganorm{\cdot}$-norm
  ball with radius $\frac{16(3-\nu)}{3\alpha(2-\nu)}\la_n$ and since
  from Corollary~\ref{cor:mconsconclus}(c) we have
  $\|P_{T(L_\M)^\perp}L^\star\| \leq \frac{16(3-\nu)}{3\alpha(2-\nu)}
  \la_n$, and in the last inequality we used the upper bound
  \eqref{labound_32} on $\la_n$.  The application of
  Lemma~\ref{l_CgSLcons_boundedremainder} was possible since the bound
  \eqref{labound_c1} on $\la_n$ implies that
  \[
  \ganorm{(M,N-P_{T(L_\M)^\perp}L^\star)} \leq \ganorm{(M, N)} +
  \|P_{T(L_\M)^\perp}L^\star\| \leq 
  \frac{32(3-\nu)}{3\alpha(2-\nu)}\la_n \leq c_1.
  \]
\end{proof}

Using the bound \eqref{labound_c1} on $\la_n$ we immediately get from
Proposition~\ref{prop_tspaceerrbound} that
\[
\ganorm{(\Delta_S, \Delta_L)}\leq \frac{32(3-\nu)}{3\alpha(2-\nu)}
\la_n \leq \frac{32(3-\nu)}{3\alpha(2-\nu)}
\frac{3\alpha(2-\nu)}{32(3-\nu)} c_1 = c_1.
\]

\subsubsection{Coinciding solutions}

Next, we show that the solution of the linearized problem indeed
coincides with the solution of the variety-constrained problem.  Since
in particular we need to show that the solution $(S_\Y, L_\Y)$ is in
$\M$, we start by showing that it satisfies the third constraint in
the description of $\M$.
\begin{proposition} \label{prop_satisfied3rdconstraint}
  Under the previously made assumptions, the solution $(S_\Y, L_\Y)$
  to Problem~\ref{prob_tspace} strictly satisfies the third constraint
  in the description of $\M$, that is,
\begin{align*}
  \ganorm{\D H^\star (\Delta_S+\Delta_L)} < 9 \la_n.
\end{align*}
\end{proposition}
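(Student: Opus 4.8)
I would re-use the Taylor-type rewriting of $\nabla\ell(S_\Y+L_\Y)$ from Section~\ref{app:optimcond} together with the projected optimality conditions, but read the resulting identity for $P_\Y\D H^\star$ \emph{forwards}, to extract an upper bound, rather than running the Brouwer argument of Proposition~\ref{prop_tspaceerrbound}. Put $M' = \Delta_S$ and $N' = P_{T(L_\M)}\Delta_L$, so that $(M',N') = P_\Y(\Delta_S,\Delta_L)\in\Y=\Omega\times T(L_\M)$, and, using $L_\Y\in T(L_\M)$, $\Delta_S+\Delta_L = M'+N' - P_{T(L_\M)^\perp}L^\star$. Hence
\[
\ganorm{\D H^\star(\Delta_S+\Delta_L)} \le \ganorm{\D H^\star(M'+N')} + \ganorm{\D H^\star P_{T(L_\M)^\perp}L^\star},
\]
and the second summand is at most $\frac{\nu\la_n}{6(2-\nu)}$ by Corollary~\ref{cor:mconsconclus}(b); it remains to bound $\ganorm{\D H^\star(M'+N')}$.

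\textbf{Peeling off the normal part.} Because $\rho(T,T(L_\M))\le\xi(T)/4\le\xi(T)/2$ by Corollary~\ref{cor:mconsconclus}(a), coupled stability (Proposition~\ref{prop_irreptransvcond}(b)) applies to $\Y$, giving $\ganorm{P_{\Y^\perp}\D H^\star(M'+N')}\le(1-\nu)\ganorm{P_\Y\D H^\star(M'+N')}$ and therefore $\ganorm{\D H^\star(M'+N')}\le(2-\nu)\ganorm{P_\Y\D H^\star(M'+N')}$. To control the right-hand side, apply $P_\Y\D$ to $\nabla\ell(S_\Y+L_\Y)=\nabla\ell(\Thstar)+H^\star(M'+N')-H^\star P_{T(L_\M)^\perp}L^\star+R(\Delta_S+\Delta_L)$ and solve for $P_\Y\D H^\star(M'+N')$. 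As in the proof of Proposition~\ref{prop_tspaceerrbound}, the Lagrange multipliers $A_{\Omega^\perp}$ and $A_{T(L_\M)^\perp}$ disappear under $P_\Y$, so $P_\Y\D\nabla\ell(S_\Y+L_\Y)=-P_\Y(Z_1,Z_\ast)$ with $\ganorm{P_\Y(Z_1,Z_\ast)}\le2\la_n$ (subgradient characterizations, Lemmas~\ref{lem_subdiff_tspace} and~\ref{lem_sbdiff_dualnorm}, plus Lemma~\ref{lem:projnorms}).

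\textbf{Assembling the constant.} The triangle inequality together with $\ganorm{P_\Y\D(\cdot)}\le2\ganorm{\D(\cdot)}$ (Lemma~\ref{lem:projnorms}) yields
\[
\ganorm{P_\Y\D H^\star(M'+N')}\le 2\la_n + 2\ganorm{\D\nabla\ell(\Thstar)} + 2\ganorm{\D H^\star P_{T(L_\M)^\perp}L^\star} + 2\ganorm{\D R(\Delta_S+\Delta_L)},
\]
where $\ganorm{\D\nabla\ell(\Thstar)}\le\frac{\nu\la_n}{6(2-\nu)}$ by hypothesis, $\ganorm{\D H^\star P_{T(L_\M)^\perp}L^\star}\le\frac{\nu\la_n}{6(2-\nu)}$ by Corollary~\ref{cor:mconsconclus}(b), and $\ganorm{\D R(\Delta_S+\Delta_L)}\le\frac{2(3-\nu)}{3(2-\nu)}\la_n$ by the remainder estimate~\eqref{bound_taylorremainder} in the proof of Proposition~\ref{prop_tspaceerrbound} (equivalently Lemma~\ref{l_CgSLcons_boundedremainder} with $\ganorm{(\Delta_S,\Delta_L)}\le\frac{32(3-\nu)}{3\alpha(2-\nu)}\la_n$ and the bound~\eqref{labound_32} on $\la_n$). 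This gives $\ganorm{P_\Y\D H^\star(M'+N')}\le\bigl(2+\frac{12-2\nu}{3(2-\nu)}\bigr)\la_n$, hence $\ganorm{\D H^\star(M'+N')}\le(2-\nu)\bigl(2+\frac{12-2\nu}{3(2-\nu)}\bigr)\la_n=\frac{24-8\nu}{3}\la_n$, and finally
\[
\ganorm{\D H^\star(\Delta_S+\Delta_L)}\le\frac{24-8\nu}{3}\la_n+\frac{\nu}{6(2-\nu)}\la_n<8\la_n+\la_n=9\la_n,
\]
the strict inequality using $\nu\in(0,\frac{1}{2}]$, so $\frac{24-8\nu}{3}<8$ (as $\nu>0$) and $\frac{\nu}{6(2-\nu)}<1$.

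\textbf{Main obstacle.} Conceptually the proof is just the forward reading of an identity already appearing in the proof of Proposition~\ref{prop_tspaceerrbound}; the real work is the constant-tracking needed to keep the final bound strictly below $9$, in particular the factor $2$ lost to Lemma~\ref{lem:projnorms} whenever one moves between $\D$ and $P_\Y\D$, the factor $(2-\nu)$ from coupled stability, and the split of $\Delta_L$ along $T(L_\M)$ and $T(L_\M)^\perp$. One should also check that the hypotheses of the cited results hold here: $\gamma\in[\gamma_{\min},\gamma_{\max}]$ ($\gamma$-feasibility), $\rho(T,T(L_\M))\le\xi(T)/2$ (Corollary~\ref{cor:mconsconclus}(a)), and $\ganorm{(\Delta_S,\Delta_L)}\le c_1$ (Proposition~\ref{prop_tspaceerrbound} with~\eqref{labound_c1}).
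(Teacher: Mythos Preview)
Your proof is correct and follows essentially the same route as the paper: both split $\Delta_L$ along $T(L_\M)$ and $T(L_\M)^\perp$, invoke Corollary~\ref{cor:mconsconclus}(b) for the normal piece, use the projected optimality condition together with the remainder bound~\eqref{bound_taylorremainder} to control $\ganorm{P_\Y\D H^\star(M'+N')}$, and then pass to $\ganorm{P_{\Y^\perp}\D H^\star(M'+N')}$ via coupled stability. The only cosmetic difference is that you keep the factor $(2-\nu)$ (obtaining $\frac{24-8\nu}{3}\la_n$) whereas the paper relaxes $(1-\nu)\le 1$ and $\frac{3-\nu}{2-\nu}\le\frac{5}{3}$ to reach $\frac{80}{9}\la_n$; either way the total is strictly below $9\la_n$.
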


\begin{proof}
  We compute that
  \begin{align*}
    &\ganorm{\D H^\star (\Delta_S+\Delta_L)} \\
    &\: = \ganorm{\D H^\star (\Delta_S+P_{T(L_\M)}\Delta_L -
      P_{T(L_\M)^\perp}L^\star)} \\
    &\: \leq \ganorm{P_\Y\D H^\star (\Delta_S+P_{T(L_\M)}\Delta_L)}
    + \ganorm{P_{\Y^\perp}\D H^\star (\Delta_S+P_{T(L_\M)}\Delta_L)}
    + \ganorm{\D H^\star P_{T(L_\M)^\perp}L^\star} \\
    &\: \leq \frac{40}{9}\la_n + \frac{40}{9}\la_n +
    \frac{\nu\la_n}{6(2-\nu)} \,\leq\, \frac{80}{9}\la_n +
    \frac{1}{18}\la_n \,<\, 9\la_n,
  \end{align*}
  where the equality used $\Delta_L = P_{T(L_\M)}\Delta_L -
  P_{T(L_\M)^\perp}L^\star$ as in the proof of
  Proposition~\ref{prop_tspaceerrbound}, the first inequality is the
  triangle inequality, and the third inequality uses
  $\frac{\nu}{6(2-\nu)}\leq\frac{1}{18}$ which is implied by $\nu\leq
  \frac{1}{2}$. The second inequality follows from
  Corollary~\ref{cor:mconsconclus}(b) that provides $\ganorm{\D
    H^\star P_{T(L_\M)^\perp}L^\star}\leq \frac{\nu\la_n}{6(2-\nu)}$,
  from Proposition~\ref{prop_irreptransvcond}(b) that gives
  \begin{align*}
    \ganorm{P_{\Y^\perp}\D H^\star (\Delta_S+P_{T(L_\M)}\Delta_L)}
    &\leq (1-\nu) \ganorm{P_\Y\D H^\star (\Delta_S+P_{T(L_\M)}\Delta_L)} \\
    &\leq \ganorm{P_\Y\D H^\star (\Delta_S+P_{T(L_\M)}\Delta_L)},
  \end{align*}
  and finally from rewriting the gradient that gives
  \begin{align*} 
    &\ganorm{P_\Y\D H^\star (\Delta_S + P_{T(L_\M)}\Delta_L)} \\
    &= \ganorm{P_\Y\D \bPe{\nabla \ell(S_\Y +L_\Y)
        -\nabla \ell(\Thstar) + H^\star P_{T(L_\M)^\perp}L^\star
        -R(\Delta_S+P_{T(L_\M)}\Delta_L-P_{T(L_\M)}L^\star) }} \\
    &= \ganorm{Z-P_\Y\D \bPe{\nabla \ell(\Thstar) -
        H^\star P_{T(L_\M)^\perp}L^\star + R(\Delta_S+P_{T(L_\M)}\Delta_L -
        P_{T(L_\M)}L^\star) }} \\
    &\leq \ganorm{Z} + \ganorm{P_\Y \D \nabla \ell(\Thstar)} +
    \ganorm{P_\Y\D H^\star P_{T(L_\M)^\perp}L^\star} \\
    &\quad + \ganorm{P_\Y \D R(\Delta_S+P_{T(L_\M)}\Delta_L-P_{T(L_\M)}L^\star)}\\
    &\leq 2\ganorm{\D R(\Delta_S+P_{T(L_\M)}\Delta_L-P_{T(L_\M)}L^\star}
    + 2\bPe{\ganorm{\D \nabla \ell(\Thstar)} +
      \ganorm{\D H^\star P_{T(L_\M)^\perp}L^\star} +\la_n} \\
    &\leq 2\bPe{\frac{2(3-\nu)\la_n}{3(2-\nu)} +
      \frac{2(3-\nu)\la_n}{3(2-\nu)}} \,=\,
    \frac{8(3-\nu)}{3(2-\nu)}\la_n \,\leq\, \frac{40}{9}\la_n,
  \end{align*}
  where the second equality follows from the projected optimality
  condition \[Z=-P_\Y(Z_1, Z_\ast) = P_\Y\D \nabla \ell(S_\Y +L_\Y),\]
  the first inequality is triangle inequality, the second inequality
  reorders and uses Lemma~\ref{lem:projnorms} as well as
  $\ganorm{Z}\leq 2\la_n$ (see the proof of
  Proposition~\ref{prop_tspaceerrbound}), the third inequality uses
  the bounds \eqref{bound_tripleterm} and
  \eqref{bound_taylorremainder} since the projected error
  $P_\Y(\Delta_S, \Delta_L)=(\Delta_S, P_{T(L_\M)}\Delta_L)$ is
  bounded by $\frac{16(3-\nu)}{3\alpha(2-\nu)}\la_n$ (see again the
  proof of Proposition~\ref{prop_tspaceerrbound}), and the last
  inequality uses $\nu\in(0,\frac{1}{2}]$ and thus
    $\frac{3-\nu}{2-\nu} \leq \frac{5}{3}$.
\end{proof}

For proving that the solutions coincide, we need two more lemmas that
are useful for relaxing variety constraints into tangent space
constraints.

\begin{lemma}[linearization lemma]
  \label{l_linearizationlemma}
  Let $E$ be some embedding space, let $f:E\to\IR$ be a convex continuous
  function, and let $\V\subset E$ be a variety. Assume that $\hat{x}$
  is a solution to the variety-constrained problem
  \[
  \begin{array}{lrlr}
    &\underset{x\in\V} {\min}&  f(x).
  \end{array}
  \]
  If $\hat{x}\in\V$ is a smooth point, then it is also a solution to
  the linearized problem
  \[
  \begin{array}{lrlr}
    &\underset{x\,\in\, \hat{x} + T_{\hat{x}}\V} {\min}&  f(x)
  \end{array}
  \]
  where $T_{\hat x}\V$ is the tangent space at $\hat x$ to the variety
  $\V$.
\end{lemma}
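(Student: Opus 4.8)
The plan is to reduce the lemma to the single estimate that the one-sided directional derivative of $f$ at $\hat x$ is non-negative in every direction of the tangent space. Once we know $f'(\hat x;v)\ge 0$ for all $v\in T_{\hat x}\V$, convexity of $f$ yields the supporting inequality $f(\hat x+v)\ge f(\hat x)+f'(\hat x;v)\ge f(\hat x)$ for every such $v$, which says exactly that $\hat x$ minimizes $f$ over the affine subspace $\hat x+T_{\hat x}\V$. (This is the familiar fact that a convex function is globally minimized over a convex set at any point where all feasible directional derivatives are non-negative; here the feasible directions form the whole linear space $T_{\hat x}\V$, since $\hat x+T_{\hat x}\V$ is affine.)

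To prove $f'(\hat x;v)\ge 0$ I would first record that, being convex and continuous, $f$ is locally Lipschitz near $\hat x$: fix a neighbourhood $B$ of $\hat x$ and a constant $L$ with $\bAbs{f(y)-f(y')}\le L\ts\bNorm{y-y'}$ for $y,y'\in B$, and note that the directional derivative $f'(\hat x;v)=\lim_{t\downarrow 0}\bPr{f(\hat x+tv)-f(\hat x)}/t$ then exists and is finite. Fix $v\in T_{\hat x}\V$. Since $\hat x$ is a smooth point, $\V$ is an embedded submanifold near $\hat x$, so $v=\gamma'(0)$ for some smooth curve $\gamma\colon(-\ep,\ep)\to\V$ with $\gamma(0)=\hat x$; write $\gamma(t)=\hat x+tv+r(t)$ with $\bNorm{r(t)}=o(t)$. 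Because $\hat x$ is optimal for $\min_{x\in\V}f(x)$ and $\gamma(t)\in\V$, we have $f(\gamma(t))\ge f(\hat x)$, and for $t>0$ small enough that both $\gamma(t)$ and $\hat x+tv$ lie in $B$,
\[
\frac{f(\hat x+tv)-f(\hat x)}{t}\ge \frac{f(\gamma(t))-f(\hat x)}{t}-\frac{\bAbs{f(\gamma(t))-f(\hat x+tv)}}{t}\ge -\frac{L\ts\bNorm{r(t)}}{t}.
\]
The right-hand side tends to $0$ as $t\downarrow 0$ because $\bNorm{r(t)}=o(t)$, so $f'(\hat x;v)\ge 0$, and combining this with the supporting inequality from the first paragraph finishes the proof.

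I expect the one genuinely delicate step to be the passage from the curve $\gamma$ inside $\V$ to the straight segment $t\mapsto\hat x+tv$: these agree only up to an $o(t)$ perturbation, and it is local Lipschitz continuity of the convex function $f$ that converts this perturbation into an $o(t)$ change of objective value, which is why that property must be invoked. A minor bookkeeping point is to justify that an arbitrary element of $T_{\hat x}\V$---a priori only a span of velocities of curves in $\V$---is itself the velocity of a single smooth curve in $\V$; this is precisely what smoothness of $\hat x$ guarantees, since there $\V$ is a manifold of dimension $\dim T_{\hat x}\V$. Everything else is routine.
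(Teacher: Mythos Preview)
Your proof is correct, but it follows a different route from the paper's. The paper avoids directional derivatives and Lipschitz estimates altogether by a secant-point trick: for a curve $\gamma$ in $\V$ with $\gamma(0)=\hat x$ and $\gamma'(0)=\nu$, it considers the point $\hat x+\bPr{\gamma(t)-\gamma(0)}/t$, observes that for $0<t<1$ the three points $\gamma(0),\gamma(t),\hat x+\bPr{\gamma(t)-\gamma(0)}/t$ are collinear in this order, and uses convexity on that line together with $f(\gamma(0))\le f(\gamma(t))$ to conclude $f(\hat x)\le f\bPr{\hat x+\bPr{\gamma(t)-\gamma(0)}/t}$; continuity of $f$ at $\hat x+\nu$ then gives $f(\hat x)\le f(\hat x+\nu)$ in the limit $t\to 0$. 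This is slightly more elementary in that it needs only continuity, not local Lipschitz continuity, and no analysis of difference quotients. Your argument, by contrast, is the standard first-order optimality route: you show $f'(\hat x;v)\ge 0$ via the $o(t)$ curve approximation and a Lipschitz bound, then globalize via the support inequality. It is perhaps more conceptually transparent---it makes explicit that what is really being proved is non-negativity of every tangential directional derivative---and it generalizes immediately to any setting where convex continuous functions are locally Lipschitz. Both arguments rely on the same smoothness fact, namely that every tangent vector is realized by a curve in $\V$.
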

\begin{proof}
  The tangent space at $\hat{x}$ is given by the derivatives of
  differentiable curves passing through $\hat{x}$, that is,
  \[
  T_{\hat{x}}\V = \bPc{\gamma'(0):\;\: \gamma:(-1,1)\to\V,
    \gamma(0)=\hat{x}}.
  \]
  Now, let $0\neq\nu \in T_{\hat{x}}\V$ be a direction, and let
  $\gamma$ be any curve that yields this direction in the sense that
  $\gamma:(-1,1)\to\V, \gamma(0)=\hat x$, and $\gamma'(0)=\nu$.  Given
  that the derivative of the curve $\gamma$ at $0$ is defined as the
  limit $t\to0$ of $\frac{\gamma(t)\,-\,\gamma(0)}{t}$, we find that
  the points $\hat x + \frac{\gamma(t)\,-\,\gamma(0)}{t}$ converge to
  $\hat x + \nu$ as $t\to0$. Since $\nu\neq0$ we assume w.l.o.g.~that
  $\gamma(0)\neq\gamma(t)$ for all $t\neq0$. Observe that for $0<t<1$
  the points $\gamma(0)=\hat x$, $\gamma(t)$ and $\hat x +
  \frac{\gamma(t)\,-\,\gamma(0)}{t}= \left(1- \frac{1}{t}\right)
  \gamma(0) + \frac{\gamma(t)}{t}$ are colinear in this order, because
  $\frac{1}{t} >1$ and thus $1- \frac{1}{t}< 0$.
  \medskip
	
  Next, since $\hat x$ solves the original problem, the scalar
  function $f\circ \gamma: (-1, 1)\to\IR$ must have a minimum at $t=0$
  such that for any $t$ we have $f(\hat x)=f(\gamma(0)) \leq
  f(\gamma(t))$. Consequently, by the convexity of $f$ and colinearity
  it also holds that
  \[
  f(\hat x)=f(\gamma(0)) \leq f(\gamma(t)) \leq f\bPr{\hat x +
    \frac{\gamma(t)\,-\,\gamma(0)}{t}}.
  \]
  It follows from the continuity of $f$ and after taking the limit
  $t\to0$ that $f(\hat x) \leq f(\hat x + \nu)$.  Therefore, the claim
  of this lemma is a consequence of the arbitrariness of $\nu \in
  T_{\hat x} \V$.
\end{proof}

We briefly discuss what happens if another convex constraint is in
place.

\begin{lemma}[linearization with additional convex constraint]
  \label{l_linearizationlemma2}
  Let $E$ be some embedding space, let $f:E\to\IR$ be a convex
  continuous function, let $\V\subset E$ be a variety, and let
  $C\subset E$ be convex. Assume that $\hat{x}$ is a smooth point in
  $\V$ and that it solves the problem
  \begin{equation}
    \begin{array}{lrlr}
      &\underset{x\,\in \,C \cap \V} {\min}&  f(x).
    \end{array} \label{prob_VCconstrained}
  \end{equation}
	Suppose that $\hat x$ does not solve the linearized problem
\begin{equation*}
\begin{array}{lrlr}
 &\underset{x\,\in\, C\,\cap\,(\hat{x} + T_{\hat{x}}\V)} {\min}&  f(x).
\end{array}
\end{equation*}
Then, any solution to this problem must be on the boundary of $C$.
\end{lemma}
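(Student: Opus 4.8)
The plan is to argue by contradiction, reusing the colinearity trick from the proof of Lemma~\ref{l_linearizationlemma}; the one genuinely new point is that the dilated secant points of the approximating curve must be kept inside the convex set $C$. So suppose, for contradiction, that some solution $x^\star$ of the linearized problem $\min_{x\in C\cap(\hat x + T_{\hat x}\V)} f(x)$ lies in the (topological) interior of $C$. I will show this is impossible; since every point of $C$ that is not in the interior lies on $\partial C$, this establishes the claim.

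First I would record two easy facts. The point $\hat x$ is feasible for the linearized problem, because $\hat x\in C$ (it solves \eqref{prob_VCconstrained}) and $\hat x\in \hat x + T_{\hat x}\V$ since $0\in T_{\hat x}\V$; hence $f(x^\star)\le f(\hat x)$. Moreover this inequality is strict and $x^\star\neq \hat x$: otherwise $\hat x$ would itself be a minimizer of the linearized problem, contradicting the hypothesis. Put $\nu = x^\star-\hat x\in T_{\hat x}\V\setminus\{0\}$. Since $\hat x$ is a smooth point of $\V$, by the definition of the tangent space there is a smooth curve $\gamma:(-1,1)\to\V$ with $\gamma(0)=\hat x$ and $\gamma'(0)=\nu$, which (as $\nu\neq 0$) we may take injective on a neighbourhood of $0$.

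Next, exactly as in Lemma~\ref{l_linearizationlemma}, for $0<s<1$ the points $\hat x$, $\gamma(s)$ and $q_s:=\hat x+\frac{\gamma(s)-\hat x}{s}$ are colinear in this order, so $\gamma(s)=(1-s)\hat x+s\,q_s$ is a convex combination and $q_s\to \hat x+\nu=x^\star$ as $s\to 0^+$. Because $x^\star$ is interior to $C$, there is $s_0>0$ with $q_s\in C$ for $0<s<s_0$; combined with $\hat x\in C$ and convexity of $C$, this gives $\gamma(s)\in C$, and since $\gamma(s)\in\V$ as well, $\gamma(s)$ is feasible for \eqref{prob_VCconstrained}. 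Optimality of $\hat x$ then gives $f(\hat x)\le f(\gamma(s))\le (1-s)f(\hat x)+s\,f(q_s)$, i.e.\ $f(\hat x)\le f(q_s)$; letting $s\to 0^+$ and using continuity of $f$ yields $f(\hat x)\le f(x^\star)$, which contradicts $f(x^\star)<f(\hat x)$.

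The delicate step is the one where $\gamma(s)$ is placed back in $C$: the dilated secant point $q_s$ generally leaves the affine hull of $C$, so only membership of $x^\star$ in the genuine interior of $C$ (not merely a relative interior) guarantees $q_s\in C$ for small $s$. This is precisely why the conclusion is phrased in terms of the boundary of $C$, and why the argument correctly breaks down when $x^\star\in\partial C$. Apart from this point, every ingredient --- existence of the curve, the colinearity identity, and the convexity estimates --- is either identical to or a routine variant of what was already done in Lemma~\ref{l_linearizationlemma}.
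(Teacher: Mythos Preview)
Your proof is correct and uses the same ingredients as the paper's --- the approximating curve $\gamma$, the colinearity of $\hat x$, $\gamma(s)$ and the dilated secant point $q_s$, and convexity of $C$ --- but you run the argument in the reverse direction. The paper first shows that $f$ must strictly decrease along $\gamma$ near $0$ (otherwise the proof of Lemma~\ref{l_linearizationlemma} would give $f(\hat x)\le f(x^\star)$), deduces $\gamma(t)\notin C$ from optimality of $\hat x$, and then uses convexity of $C$ in contrapositive form (if $q_t\in C$ then $\gamma(t)\in C$) to conclude $q_t\notin C$ and hence $x^\star\notin\operatorname{int}C$. You instead assume $x^\star\in\operatorname{int}C$, push $q_s$ into $C$, use convexity of $C$ to make $\gamma(s)$ feasible, and then combine optimality of $\hat x$ with convexity of $f$ along the segment to reach the contradiction $f(\hat x)\le f(x^\star)$. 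The two arguments are essentially contrapositives of one another; your organization is arguably a bit more direct since it avoids the separate case analysis on whether $f\circ\gamma$ has a minimum at $0$.
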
 
\begin{proof}
  Let $0\neq\nu\in T_{\hat x}\V$ such that $\hat x + \nu$ is a solution
	to the linearized problem.
  Let $\gamma:(-1,1)\to\V$ be a smooth curve with $\gamma(0)=\hat x$,
  and $\gamma'(0)=\nu$.  Suppose that $f\circ\gamma:(-1,1)\to\IR$ has
  its minimum at zero. Then, by the proof of the previous lemma it
  would follow that $f(\hat x + \nu) \geq f(\hat x)$ which contradicts
  the assumption that $\hat x$ does not solve the linearized problem.
	Therefore, the value of $f$ must
  decrease locally around $\hat x$ along $\gamma$ and we can assume
  w.l.o.g.~that $f(\gamma(t))<f(\gamma(0))=f(\hat x)$ for all
  $t\in(0,1)$, that is, the curve enters the area where $f$ decreases
  for positive $t$. Now, since $\hat x$ solves
  Problem~\ref{prob_VCconstrained} it follows that $\gamma(t)$ for
  $t\in(0,1)$ cannot be feasible for this problem, that is, 
  $\gamma(t)\notin C$ for $t\in(0,1)$.
  \medskip

  Next, similarly to the proof of the previous lemma, for $0<t<1$ we
  consider the points $\gamma(0)=\hat x \in C$, $\gamma(t) \notin C$
  and $\hat x + \frac{\gamma(t)\,-\,\gamma(0)}{t}$, which are colinear
  in this order. Then, the convexity of $C$ implies that $\hat x +
  \frac{\gamma(t) \,-\, \gamma(0)}{t}\notin C$. Thus, since $\hat x +
  \frac{\gamma(t) \,-\, \gamma(0)}{t}\to \hat x + \nu$ as $t\to0$, we
  have shown that there are points arbitrarily close to $\hat x + \nu$
  that are not in $C$. Hence, the solution $\hat x + \nu$ cannot be in
	the interior of $C$.
\end{proof}

Now, we can finally show that the solutions coincide.

\begin{proposition}[coinciding solutions]
  \label{prop_coincidingsolutions}
  Under the assumptions made previously in
  Proposition~\ref{prop_tspaceerrbound}, the solutions of
  Problems~\ref{prob_tspace} and \ref{prob_variety} coincide, that is,
  $(S_\Y, L_\Y) = (S_\M, L_\M)$.
\end{proposition}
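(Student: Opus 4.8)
The plan is to realize the non-convex constraint set as an intersection $\M=\V\cap\C$, where $\V=\Omega(S^\star)\times\Lc(\rank L^\star)$ is a variety and $\C\subseteq\textrm{Sym}(d)\times\textrm{Sym}(d)$ is the convex set carved out by the third and fourth constraints in the definition of $\M$, and then to apply the linearization lemma with a convex constraint, Lemma~\ref{l_linearizationlemma2}. Since Proposition~\ref{prop:aconsistM}(b) gives $\rank L_\M=\rank L^\star$, the point $(S_\M,L_\M)$ is a smooth point of $\V$, its tangent space there is $\Omega\times T(L_\M)$, and — because $S_\M\in\Omega$ and $L_\M\in T(L_\M)$ — the linearized feasible region $(S_\M,L_\M)+T_{(S_\M,L_\M)}\V$ is exactly the feasible set $\Y=\Omega\times T(L_\M)$ of Problem~\ref{prob_tspace}. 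Writing $f$ for the objective shared by Problems~\ref{prob_variety} and \ref{prob_tspace}, Lemma~\ref{l_linearizationlemma2} applied with $\hat x=(S_\M,L_\M)$ (which solves $\min_{\V\cap\C}f=$ Problem~\ref{prob_variety}) tells us that either $(S_\M,L_\M)$ already solves the linearized problem $\min_{\Y\cap\C}f$, or else every solution of that problem lies on the boundary of $\C$. So it suffices to exhibit a solution of $\min_{\Y\cap\C}f$ lying in the \emph{interior} of $\C$.

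That solution will be $(S_\Y,L_\Y)$, so the first task is to show $(S_\Y,L_\Y)\in\mathrm{int}\,\C$, i.e.\ that it satisfies the third and fourth constraints of $\M$ \emph{strictly}. The third constraint is precisely Proposition~\ref{prop_satisfied3rdconstraint}. The fourth is more delicate, because — unlike for genuine elements of $\M$ — we do not know that $\rank L_\Y=\rank L^\star$ (the relaxed solution may carry extra tiny eigenvalues), so Lemma~\ref{lem:spectralcontrol}(b) is not available directly. Instead I would route through $L_\M$: using $P_{T^\perp}L^\star=0$ we have $P_{T^\perp}(L_\Y-L^\star)=P_{T^\perp}L_\Y=P_{T^\perp}(L_\M-L^\star)+P_{T^\perp}(L_\Y-L_\M)$. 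The first term is at most $\tfrac{\xi(T)\la_n}{19\omega\|H^\star\|}$ by Corollary~\ref{cor:nonbinding4} applied to $(S_\M,L_\M)\in\M$. For the second, $L_\Y-L_\M$ lies in the subspace $T(L_\M)$, hence $P_{T^\perp}(L_\Y-L_\M)=(P_{T^\perp}-P_{T(L_\M)^\perp})(L_\Y-L_\M)$, so its spectral norm is at most $\rho(T,T(L_\M))\,\|L_\Y-L_\M\|$; Corollary~\ref{cor:mconsconclus}(a) (whose proof gives $\rho(T,T(L_\M))\le\tfrac{2\xi(T)^2}{19\omega c_2\|H^\star\|}$) together with parametric consistency of $(S_\Y,L_\Y)$ and $(S_\M,L_\M)$ (Propositions~\ref{prop_tspaceerrbound} and \ref{prop:enforceparacons}, giving $\|L_\Y-L_\M\|\le(\tfrac{32(3-\nu)}{3\alpha(2-\nu)}+c_2)\la_n$) combine, after substituting $c_2=\tfrac{40}{\alpha}+\tfrac1{\|H^\star\|}$, to a bound strictly below $\tfrac{\xi(T)\la_n}{\omega\|H^\star\|}$. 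Thus the fourth constraint holds strictly as well.

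Knowing $(S_\Y,L_\Y)\in\Y\cap\C$, and since $(S_\Y,L_\Y)$ minimizes $f$ over the larger set $\Y\supseteq\Y\cap\C$, it also minimizes $f$ over $\Y\cap\C$. Moreover $f$ restricted to $\Y$ is strictly convex — for distinct $(S,L),(S',L')\in\Y$ the difference $(S-S')+(L-L')$ is nonzero by transversality $\Omega\cap T(L_\M)=\{0\}$ (which holds by Proposition~\ref{prop_irreptransvcond} and the remark after it, applicable since $\rho(T,T(L_\M))\le\xi(T)/2$ by Corollary~\ref{cor:mconsconclus}(a)), and the Hessian of $\ell$ is positive definite at every parameter matrix, exactly as in the proof of Proposition~\ref{prop:tscuniquesol}. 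Hence $\min_{\Y\cap\C}f$ has a \emph{unique} solution, namely $(S_\Y,L_\Y)$, and it lies in $\mathrm{int}\,\C$. Consequently the alternative "$(S_\M,L_\M)$ does not solve $\min_{\Y\cap\C}f$" in Lemma~\ref{l_linearizationlemma2} is impossible — it would force the solution $(S_\Y,L_\Y)$ onto $\partial\C$ — so $(S_\M,L_\M)$ solves $\min_{\Y\cap\C}f$ too; by uniqueness, $(S_\M,L_\M)=(S_\Y,L_\Y)$.

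I expect the main obstacle to be the strict verification of the fourth constraint for $(S_\Y,L_\Y)$: because the relaxed solution need not lie exactly on the rank-$r$ variety, the usual quadratic perturbation bound of Lemma~\ref{lem:spectralcontrol}(b) does not apply to it, and one must instead lean on the near-tangency $\rho(T,T(L_\M))$ together with the fact that $L_\M$, being in $\M$, already satisfies a strengthened version of the same constraint by Corollary~\ref{cor:nonbinding4}; keeping the problem-specific constants under control so that the resulting bound is genuinely strict is the one place where some care with the definitions of $c_2,c_3$ is required.
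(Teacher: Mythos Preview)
Your proof is correct and follows the same overall architecture as the paper's, but with one genuine structural difference worth noting. The paper chooses the convex set $C$ to consist \emph{only} of the third constraint in the definition of $\M$; it drops the fourth constraint at the outset, appealing to Corollary~\ref{cor:nonbinding4} (the fourth constraint is non-binding on all of $\M$) to argue that $(S_\M,L_\M)$ still minimizes over $C\cap\V$. This makes the interior check trivial---Proposition~\ref{prop_satisfied3rdconstraint} alone suffices---but the step ``non-binding on $\M$ $\Rightarrow$ $(S_\M,L_\M)$ minimizes over the larger $C\cap\V$'' tacitly relies on a local-minimality reading of Lemma~\ref{l_linearizationlemma2} (which the lemma's proof in fact supports, since only the limit $t\to 0$ is used).

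You instead keep both the third and fourth constraints in $\C$, so that $\M=\V\cap\C$ exactly and no such justification is needed. The price is that you must verify $(S_\Y,L_\Y)$ lies in the \emph{interior} of $\C$ with respect to the fourth constraint as well. Your routing argument through $L_\M$ is correct: since $L_\Y-L_\M\in T(L_\M)$ one has $P_{T^\perp}(L_\Y-L_\M)=(P_{T(L_\M)}-P_T)(L_\Y-L_\M)$, and combining the bound $\rho(T,T(L_\M))\le \tfrac{2\xi(T)^2}{19\omega c_2\|H^\star\|}$ from the proof of Corollary~\ref{cor:mconsconclus}(a) with $\|L_\Y-L_\M\|\le 2c_2\la_n$ (Propositions~\ref{prop_tspaceerrbound} and~\ref{prop:enforceparacons}, noting $\tfrac{32(3-\nu)}{3\alpha(2-\nu)}\le c_2$) gives a total of at most $\tfrac{5\xi(T)\la_n}{19\omega\|H^\star\|}<\tfrac{\xi(T)\la_n}{\omega\|H^\star\|}$. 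Your setup is arguably cleaner at the cost of this extra computation; the paper's is shorter but leans on the non-binding observation.
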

\begin{proof}
  Let us suppose for a contradiction that the solutions are not the
  same. Then, since
  $(S_\M, L_\M)\in \Omega\times T(L_\M)$
  is feasible for the uniquely solvable Problem~\ref{prob_tspace} the
  objective function value at $(S_\Y,L_\Y)$ must be smaller than the
  one at $(S_\M, L_\M)$.  We want to apply
  Lemma~\ref{l_linearizationlemma2} to the product $\V = \Omega \times
  \Lc(\rank L^\star)$ and the convex set
  \[
  C = \bPc{(S, L):\, \ganorm{\D H^\star (\Delta_{S}+\Delta_L)} \leq
    9\la_n}.
  \]
  We know that $(S_\M, L_\M)$ is a solution to
  \[
     \underset{(S,\, L)\, \in\, C\,\cap\, \V}
    {\min}\:\, \ell(S+L) + \la_n \bPr{\gamma \|S\|_{1} + \tr L},
  \]
  since the constraint $\|P_{T^\perp}(\Delta_L)\| \leq \frac{\xi(T)
    \la_n} {\omega \|H^ \star\|}$ in the description of $\M$ is
  non-binding by Corollary~\ref{cor:nonbinding4} and dropping this
  constraint from $\M$ yields the set $C\cap\V$.  By
  Proposition~\ref{prop:aconsistM} we know that the solution $\hat x =
  (S_\M, L_\M)$ is a smooth point. Note that in this case
  \[
  \hat x + T_{\hat x}\V = (S_\M, L_\M) + \Omega\times
  T(L_\M)=\Omega\times T(L_\M)
  \]
  such that the linearized problem is precisely
  Problem~\ref{prob_tspace} which is solved by $(S_\Y, L_\Y)$.
  \medskip
	
  Now, Lemma~\ref{l_linearizationlemma2} implies that $(S_\Y,L_\Y)$
  cannot be contained in the interior of $C$.  On the other hand, we
  have from Proposition~\ref{prop_satisfied3rdconstraint} that
  \[
  \ganorm{\D H^\star (\Delta_{S}+\Delta_L)} < 9\la_n,
  \]
  is strictly satisfied. Hence, $(S_\Y,L_\Y)$ is in fact contained in
  the interior of $C$.  This is contradiction and completes the proof.
\end{proof}

An easy consequence of the fact that the solutions coincide is that
the consistency properties from Proposition~\ref{prop:aconsistM} hold
for the solution to Problem~\ref{prob_tspace}.  In particular, we have
$\rank(L_\Y) = \rank (L^\star)$ and sign consistency, that is, $\sign
(S_\Y) = \sign (S^\star)$.  In addition, we have that $T(L_\Y) =
T(L_\M)$.

\medskip

Concerning parametric consistency we now have the bound
$\ganorm{(\Delta_S, \Delta_L)}\leq \frac{32(3-\nu)}{3\alpha(2-\nu)}
\la_n$ from Proposition~\ref{prop_tspaceerrbound} and since we showed
that the solution is also in $\M$ we also have the bound
$\ganorm{(\Delta_S, \Delta_L)}\leq c_2\la_n = \bPr{\frac{40}{\alpha} +
  \frac{1}{\|H^\star\|}}\la_n$ from
Proposition~\ref{prop:enforceparacons}.  An easy calculation
demonstrates that $\frac{32(3-\nu)}{3\alpha(2-\nu)} \leq
\frac{40}{\alpha}\leq \frac{40}{\alpha} + \frac{1}{\|H^\star\|}$ and
thus that the first bound is always better.


\subsection{Step 3: Removing tangent space constraints}
\label{app_step3}

In this section, we show how using upper bounds on $\la_n$ also allows
us to conclude that the solution $(S_\Y, L_\Y)$ to
Problem~\ref{prob_tspace} satisfies the optimality conditions of the
problem
\begin{equation}
\begin{array}{lrlr}
 &\underset{S,\, L} {\min}& \ell(S+L) + \la_n \bPr{\gamma \|S\|_{1} +
    \|L\|_\ast},
\end{array}  \tag{\mbox{SL-$\emptyset$}} \label{prob_PDdropped}
\end{equation}
with the tangent space constraints removed.  Note that the only
difference to the original Problem~\ref{prob_SL} is the absence of the
positive-semidefiniteness constraint. However, since the solution
$(S_\Y, L_\Y)=(S_\M, L_\M)$ satisfies $L_\Y\succeq0$ by
Proposition~\ref{prop:aconsistM}(a) we know that if it is a solution
to Problem~\ref{prob_PDdropped} it is also a solution to the original
Problem~\ref{prob_SL}.

\subsubsection{Primal-dual witness condition}

In this section, we show that given a \emph{strictly dual feasible}
solution to Problem~\ref{prob_PDdropped} that is contained in the
linearized correct model space $\Y$ there cannot be other solutions to
Problem~\ref{prob_PDdropped} that are not contained in $\Y$. Hence,
the strictly dual feasible solution acts as a \emph{witness}.

\begin{proposition}[primal-dual witness]
  \label{prop_PDWcondition}
  Let $(S_\Y, L_\Y)$ be a solution in $\Y = \Omega \times T(L_\M)$ to
  Problem~\ref{prob_PDdropped} with corresponding subgradients $Z_1\in
  \la_n\ga\partial\|S_\Y\|_1$ and $Z_\ast\in \la_n
  \partial\|L_\Y\|_\ast$ such that it holds
  \begin{align*}
    \nabla\ell(S_\Y+L_\Y) + Z_1 = 0,\;\; \tand \;\;\nabla
    \ell(S_\Y+L_\Y) + Z_\ast = 0.
  \end{align*}
  Suppose that the subgradients satisfy the strict dual feasibility
  condition
  \[
  \ganorm{P_{\Y^\perp} (Z_1, Z_\ast)} =
  \max\bPr{\frac{\|P_{\Omega^\perp}Z_1\|_\infty}{\gamma},
    \|P_{T(L_\M)^\perp}Z_\ast\|} < \la_n.
  \]
  Then all solutions to Problem~\ref{prob_PDdropped} must be in $\Y$.
\end{proposition}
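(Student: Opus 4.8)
The plan is to exploit convexity of Problem~\ref{prob_PDdropped} together with the strict dual feasibility of the given witness. The key point is that $(S_\Y, L_\Y)$ is, by assumption, a solution to the \emph{tangent-space constrained} Problem~\ref{prob_tspace}, and the stationarity equations $\nabla\ell(S_\Y+L_\Y) + Z_1 = 0$ and $\nabla\ell(S_\Y+L_\Y) + Z_\ast = 0$ hold \emph{without} the Lagrange multipliers $A_{\Omega^\perp}$ and $A_{T(L_\M)^\perp}$; this is exactly what ``strictly dual feasible'' buys us. From these two equations $Z_1 = Z_\ast =: Z = -\nabla\ell(S_\Y+L_\Y)$. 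So $Z$ is simultaneously a subgradient of $\la_n\gamma\|\cdot\|_1$ at $S_\Y$ and of $\la_n\|\cdot\|_\ast$ at $L_\Y$. Since $(Z, Z) = -\D\nabla\ell(S_\Y+L_\Y)$ is (minus) the gradient of the smooth part $\ell(S+L)$ viewed as a function on $\textrm{Sym}(d)\times\textrm{Sym}(d)$, the pair $(S_\Y, L_\Y)$ satisfies the subgradient optimality condition for the \emph{unconstrained} Problem~\ref{prob_PDdropped}, hence is \emph{a} solution of it.

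\textbf{Second step: uniqueness on $\Y$ and strict decrease off $\Y$.} Let $(S', L')$ be any solution to Problem~\ref{prob_PDdropped} and write $\Theta_\Y = S_\Y + L_\Y$, $\Theta' = S'+L'$. By convexity of the objective and the fact that both are minimizers, the whole segment between them consists of minimizers; in particular $\nabla\ell$ is constant along the segment, which (using strict convexity of $\ell$, i.e.\ positive-definiteness of the Hessian $\nabla^2\ell$ as used in the proof of Proposition~\ref{prop:tscuniquesol}) forces $\Theta' = \Theta_\Y$. Then the regularizer values must also coincide: $\gamma\|S'\|_1 + \|L'\|_\ast = \gamma\|S_\Y\|_1 + \|L_\Y\|_\ast$. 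Now I would use the subgradient $Z$ from the first step: because $Z \in \la_n\gamma\,\partial\|S_\Y\|_1$ we have $\la_n\gamma\|S'\|_1 \geq \la_n\gamma\|S_\Y\|_1 + \langle Z, S'-S_\Y\rangle$, and because $Z \in \la_n\,\partial\|L_\Y\|_\ast$ we have $\la_n\|L'\|_\ast \geq \la_n\|L_\Y\|_\ast + \langle Z, L'-L_\Y\rangle$. Adding these and using $\langle Z, (S'+L') - (S_\Y+L_\Y)\rangle = \langle Z, \Theta'-\Theta_\Y\rangle = 0$ shows the two inequalities are tight, i.e.\ $Z$ is in fact a subgradient of $\la_n\gamma\|\cdot\|_1$ at $S'$ and of $\la_n\|\cdot\|_\ast$ at $L'$ as well.

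\textbf{Third step: the strict dual feasibility forces $(S', L') \in \Y$.} This is where the hypothesis $\ganorm{P_{\Y^\perp}(Z_1,Z_\ast)} < \la_n$ is essential. Write $\Delta_{S'} = S' - S_\Y \in \Omega^\perp$-direction part, etc.; more precisely decompose $\Delta := \Theta' - \Theta_\Y = 0$, so $S'-S_\Y = -(L'-L_\Y)$. By Lemma~\ref{lem_subdiff_tspace}, $Z \in \la_n\gamma\,\partial\|S'\|_1$ implies $P_{\Omega(S')}(Z) = \la_n\gamma\,\sign(S')$ and $\|P_{\Omega(S')^\perp}(Z)\|_\infty \le \la_n\gamma$; similarly $Z \in \la_n\,\partial\|L'\|_\ast$ constrains the normal projection. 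Combining the subgradient conditions at $(S_\Y, L_\Y)$ (which give $P_{\Omega^\perp}Z_1 = P_{\Omega^\perp}Z$, $P_{T(L_\M)^\perp}Z_\ast = P_{T(L_\M)^\perp}Z$) with the strict inequality $\max\{\|P_{\Omega^\perp}Z\|_\infty/\gamma,\ \|P_{T(L_\M)^\perp}Z\|\} < \la_n$, together with the subgradient-at-$(S',L')$ conditions that would force equality if $S'-S_\Y$ had any component in $\Omega^\perp$ (or $L'-L_\Y$ any component in $T(L_\M)^\perp$), yields a contradiction unless $S'-S_\Y \in \Omega$ and $L'-L_\Y \in T(L_\M)$; hence $(S',L') \in \Y$. \textbf{The main obstacle} I expect is handling this last step cleanly: one must carefully relate the subgradient $Z$'s projection onto $\Omega(S')^\perp$ versus $\Omega(S_\Y)^\perp = \Omega^\perp$ (these supports can differ a priori), and argue — typically via a complementary-slackness / strict-inequality argument on $\langle Z, \Delta_{S'}\rangle$ and $\langle Z, \Delta_{L'}\rangle$ — that any off-$\Y$ component would make one of the tight subgradient inequalities from the second step fail strictly. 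The structure follows the standard primal-dual-witness argument (cf.\ \cite{wainwright2009sharp}, \cite{ChandrasekaranPW12}), so the bookkeeping, while delicate, is routine once the decomposition $\Theta'=\Theta_\Y$ is in hand.
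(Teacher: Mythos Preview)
Your first step is redundant: the proposition already \emph{assumes} $(S_\Y,L_\Y)$ solves Problem~\ref{prob_PDdropped} (not merely the tangent-space constrained Problem~\ref{prob_tspace}), so nothing needs to be argued there. Your second step is correct and establishes more than the paper uses, namely $\Theta'=\Theta_\Y$ via strict convexity of $\ell$; the paper never invokes this.

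Where your proposal is genuinely incomplete is step 3, which you flag as the obstacle and then call routine. For the sparse part it \emph{is} routine: any nonzero entry $S'_{ij}$ with $(i,j)\in\Omega^\perp$ forces $|Z_{ij}|=\la_n\gamma$ (since $Z\in\la_n\gamma\,\partial\|S'\|_1$), contradicting $\|P_{\Omega^\perp}Z\|_\infty<\la_n\gamma$. For the low-rank part it is not mere bookkeeping: you must argue that the $\pm\la_n$-eigenspace of $Z$ is exactly $\mathrm{colspan}(U_\M)$ (from $Z\in\la_n\partial\|L_\M\|_\ast$ together with $\|P_{T(L_\M)^\perp}Z\|<\la_n$), and then that $Z\in\la_n\partial\|L'\|_\ast$ forces $\mathrm{colspan}(L')$ into that eigenspace, whence $L'=P_{U_\M}L'P_{U_\M}\in T(L_\M)$. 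This works, but it is a real spectral argument you have not written.

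The paper's route is different and shorter: it never uses $\Theta'=\Theta_\Y$. Starting from equality of objective values, it applies the subgradient inequality at $(S_\Y,L_\Y)$ with a \emph{freely chosen} subgradient $(Q_1,Q_\ast)$ distinct from $(Z_1,Z_\ast)$. By Lemma~\ref{lem_subdiff_tspace} the tangential parts of $Q$ and $Z$ agree automatically, so only the off-$\Y$ components survive; picking $P_{\Omega^\perp}Q_1=\la_n\gamma\,\sign(P_{\Omega^\perp}M)$ and $P_{T(L_\M)^\perp}Q_\ast$ aligned with the eigendecomposition of $P_{T(L_\M)^\perp}N$ converts the inner products into $\la_n\gamma\|P_{\Omega^\perp}M\|_1$ and $\la_n\|P_{T(L_\M)^\perp}N\|_\ast$, after which H\"older plus strict dual feasibility forces both to vanish in one line. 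Your approach ultimately reaches the same conclusion through more structure; the paper's device of choosing $Q$ is the cleaner primal-dual-witness trick and avoids the eigenspace analysis entirely.
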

\begin{proof}
  Let $(S_\Y + M, L_\Y + N)$ be another solution to
  Problem~\ref{prob_PDdropped}. Our goal is to show that $M\in\Omega$
  and $N\in T(L_\M)$. First, it follows from the equality of the
  optimal objective function values that
  \begin{align*}
    0&= \ell(S_\Y +M + L_\Y + N) + \la_n\bPr{\ga \|S_\Y + M\|_1 +
      \|L_\Y + N\|_\ast} \\
    &\qquad - \ell(S_\Y + L_\Y) - \la_n\bPr{\ga \|S_\Y\|_1 + \|L_\Y\|_\ast} \\
    &\geq  \scalp{\nabla \ell(S_\Y + L_\Y) + Q_1, M} +
    \scalp{\nabla \ell(S_\Y + L_\Y) + Q_\ast, N} \\
    &= \scalp{Q_1- Z_1, M} + \scalp{Q_\ast - Z_\ast, N} \\
    &= \scalp{P_{\Omega^\perp} (Q_1- Z_1), M} +
    \scalp{P_{T(L_\M)^\perp}(Q_\ast - Z_\ast), N}\\
    &= \scalp{P_{\Omega^\perp} (Q_1- Z_1), P_{\Omega^\perp} M} +
    \scalp{P_{T(L_\M)^\perp}(Q_\ast - Z_\ast), P_{T(L_\M)^\perp} N},
  \end{align*}
  where in the inequality we bounded the objective function value at
  $(S_\Y + M, L_\Y + N)$ using a subgradient of the convex objective
  function at $(S_\Y, L_\Y)$. The subgradient of the objective
  function is composed of the gradient $\nabla \ell(S_\Y+L_\Y)$ of the
  negative log-likelihood (note that the derivatives \wrt to $S$ and
  $L$ coincide), and of some subgradients $Q_1\in
  \la_n\ga\partial\|S_\Y\|_1$ and $Q_\ast\in \la_n
  \partial\|L_\Y\|_\ast$ that we can choose.  Later we will make
  explicit choices.  In the further steps of the calculation above we
  used the optimality condition for the solution $(S_\Y, L_\Y)$ in the
  second equality. In the third equality we used that $Q_1, Z_1\in
  \la_n\ga\partial\|S_\Y\|_1$ and $Q_\ast, Z_\ast\in \la_n
  \partial\|L_\Y\|_\ast$, and that consequently their components in
  the respective tangent spaces $\Omega$ and $T$ must coincide by the
  subgradient characterizations in
  Lemma~\ref{lem_subdiff_tspace}. Therefore, these components cancel
  each other out and only the projections onto the orthogonal complements
  of the tangent spaces remain.
  \medskip

  We now choose suitable components of $Q_1$ and $Q_\ast$ in
  $\Omega^\perp$ and $T(L_\M)^\perp$, respectively. By the subgradient
  characterization in Lemma~\ref{lem_subdiff_tspace} our only
  restriction is that it must hold $\|P_{\Omega^\perp}
  Q_1\|_\infty\leq \la_n\ga$ and $\|P_{T(L_\M)^\perp} Q_\ast\|\leq
  \la_n$. We choose $P_{\Omega^\perp} Q_1 = \la_n\ga \sign
  \bPr{P_{\Omega^\perp}M}$ and $P_{T(L_\M)^\perp} Q_\ast = \la_n O
  \sign (\Sigma) O^\top$, where $P_{T(L_\M)^\perp} N = O \Sigma O^\top$
  is an eigenvalue decomposition of $P_{T(L_\M)^\perp} N$ with
  orthogonal $O\in \IR^{d\times d}$ and diagonal $\Sigma
  \in\IR^{d\times d}$. It can be readily checked that indeed
  $\|P_{\Omega^\perp} Q_1\|_\infty\leq \la_n\ga$ and
  $\|P_{T(L_\M)^\perp} Q_\ast\|\leq \la_n$.  Now, we continue the
  calculation from above with the specific subgradients
  \begin{align*}
  &\scalp{P_{\Omega^\perp} (Q_1- Z_1), P_{\Omega^\perp} M} +
    \scalp{P_{T(L_\M)^\perp}(Q_\ast - Z_\ast), P_{T(L_\M)^\perp} N} \\
  &= \scalp{P_{\Omega^\perp} Q_1, P_{\Omega^\perp} M} -
    \scalp{P_{\Omega^\perp} Z_1, P_{\Omega^\perp} M} \\
  &\quad +
    \scalp{P_{T(L_\M)^\perp}Q_\ast, P_{T(L_\M)^\perp} N}
		-\scalp{P_{T(L_\M)^\perp} Z_\ast, P_{T(L_\M)^\perp} N} \\
  &= \la_n\ga\|P_{\Omega^\perp} M\|_1 - \scalp{P_{\Omega^\perp} Z_1,
      P_{\Omega^\perp} M} + \la_n\|P_{T(L_\M)^\perp} N\|_\ast -
    \scalp{P_{T(L_\M)^\perp} Z_\ast, P_{T(L_\M)^\perp} N} \\
  &\geq \la_n\ga\|P_{\Omega^\perp} M\|_1 - \|P_{\Omega^\perp}
    Z_1\|_\infty \|P_{\Omega^\perp} M\|_1 + \la_n\|P_{T(L_\M)^\perp} N\|_\ast
    - \|P_{T(L_\M)^\perp} Z_\ast\| \|P_{T(L_\M)^\perp} N\|_\ast \\
  &= \bPr{\la_n\ga - \|P_{\Omega^\perp} Z_1\|_\infty}\|P_{\Omega^\perp}
    M\|_1 + \bPr{\la_n - \|P_{T(L_\M)^\perp}
      Z_\ast\|}\|P_{T(L_\M)^\perp} N\|_\ast,
  \end{align*}
  where the second equality follows from
  \[
    \scalp{P_{\Omega^\perp} Q_1, P_{\Omega^\perp} M} = \scalp{\la_n\ga
      \sign \bPr{P_{\Omega^\perp}M}, P_{\Omega^\perp} M} = \la_n
    \ga\|P_{\Omega^\perp} M\|_1
  \]
  and 
  \begin{align*}
    \scalp{P_{T(L_\M)^\perp}Q_\ast, P_{T^\perp} N}
    &= \scalp{\la_n O \sign (\Sigma) O^\top, O \Sigma O^\top} \\
    &= \la_n \trace\bPr{\bPr{O \sign (\Sigma) O^\top}^\top O \Sigma O^\top} \\
    &= \la_n \trace\bPr{O \sign (\Sigma) O^\top O \Sigma O^\top} \\
    &= \la_n \trace\bPr{O \sign (\Sigma) \Sigma O^\top}
    = \la_n \trace\bPr{O |\Sigma| O^\top} \\
    &= \la_n \trace\bPr{ |\Sigma| O^\top O} = \la_n \tr (|\Sigma|)
    = \la_n \|P_{T(L_\M)^\perp} N\|_\ast,
  \end{align*}
  and the inequality follows from the (generalized) H\"older's
  inequality for the respective dual norm pairs ($l_\infty$- and
  $l_1$-norm, and nuclear and spectral norm).  In summary, we now have
  \[
  0\geq \bPr{\la_n\ga - \|P_{\Omega^\perp}  Z_1\|_\infty}\|P_{\Omega^\perp} M\|_1
  + \bPr{\la_n - \|P_{T(L_\M)^\perp}  Z_\ast\|}\|P_{T(L_\M)^\perp} N\|_\ast.
  \]
  From the assumption $\ganorm{P_{\Y^\perp} (Z_1, Z_\ast)} =
  \max\bPr{\frac{\|P_{\Omega^\perp}Z_1\|_\infty}{\gamma},
    \|P_{T(L_\M)^\perp}Z_\ast\|} < \la_n$ it follows that
  $\|P_{\Omega^\perp} Z_1\|_\infty<\la_n\ga$ and $\|P_{T(L_\M)^\perp}
  Z_\ast\|<\la_n$. Therefore, the equality above can only hold if both
  $\|P_{\Omega^\perp} M\|_1 = 0 $ and $\|P_{T(L_\M)^\perp}
  N\|_\ast=0$, that is, if $M\in\Omega$ and $N\in T(L_\M)$. This implies
  that $S_\Y + M \in \Omega$ and $L_\Y + N\in T(L_\M)$. In other
  words, the other solution $(S_\Y + M, L_\Y + N)$ is also contained
  in $\Y=\Omega\times T(L_\M)$. This finishes the proof.
\end{proof}

\subsubsection{Coinciding solutions}

Finally, we show that the solution $(S_\Y, L_\Y)$ to the tangent space
constrained problem is also the unique solution to the original
Problem~\ref{prob_SL}. 

\begin{proposition}[coinciding solutions]
  \label{prop_strictdualfeas}
  Assume that the upper bounds \eqref{labound_c1} - \eqref{labound_32}
  are satisfied by $\la_n$ and assume that $\ganorm{\D \nabla
    \ell(\Thstar)}\leq\frac{\nu\la_n}{6(2-\nu)}$.  Then, under the
  stability, $\gamma$-feasibility, and gap assumptions, the solution
  $(S_\Y, L_\Y)$ to the tangent space constrained
  Problem~\ref{prob_tspace} also uniquely solves
  Problem~\ref{prob_SL}.
\end{proposition}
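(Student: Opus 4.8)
The plan is to verify the primal-dual-witness criterion of Proposition~\ref{prop_PDWcondition} for the candidate $(S_\Y,L_\Y)$. By Proposition~\ref{prop_coincidingsolutions} it equals $(S_\M,L_\M)$, so Proposition~\ref{prop:aconsistM} gives $\sign(S_\Y)=\sign(S^\star)$, $\rank(L_\Y)=\rank(L^\star)$, $T(L_\Y)=T(L_\M)$ and $L_\Y\succeq0$. Writing $G=\nabla\ell(S_\Y+L_\Y)$, I would take the subgradient pair $Z_1=Z_\ast=-G$ forced by the stationarity equations of Problem~\ref{prob_PDdropped}. From the Lagrangian optimality conditions of the tangent-space problem~\ref{prob_tspace} (Lemma~\ref{leNagrangesubspace}), projecting onto $\Omega$ and onto $T(L_\M)$ annihilates the subspace multipliers and gives $P_{\Omega}(-G)=\la_n\gamma\,\sign(S_\Y)$ and $P_{T(L_\Y)}(-G)=\la_n UU^\top$, with $L_\Y=UDU^\top$ the restricted eigenvalue decomposition; by the subgradient characterizations of Lemma~\ref{lem_subdiff_tspace} it then only remains to establish the \emph{strict} bound
\[
\ganorm{P_{\Y^\perp}\D G}=\max\bPc{\tfrac{\|P_{\Omega^\perp}G\|_\infty}{\gamma},\ \|P_{T(L_\M)^\perp}G\|}<\la_n .
\]
Granting this, $Z_1\in\la_n\gamma\,\partial\|S_\Y\|_1$ and $Z_\ast\in\la_n\,\partial\|L_\Y\|_\ast$ are admissible subgradients satisfying the stationarity equations, so $(S_\Y,L_\Y)$ is a KKT point and hence a minimizer of the convex Problem~\ref{prob_PDdropped}, while the displayed inequality is exactly the strict-dual-feasibility hypothesis of Proposition~\ref{prop_PDWcondition}.

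The hard part is the strict bound. I would rewrite the gradient as in Section~\ref{app:optimcond}: with $\Delta=\Delta_S+P_{T(L_\M)}\Delta_L\in\Omega\oplus T(L_\M)$ and $L_\perp=P_{T(L_\M)^\perp}L^\star\in T(L_\M)^\perp$,
\[
\D G=\D\nabla\ell(\Thstar)+\D H^\star\Delta-\D H^\star L_\perp+\D R(\Delta-L_\perp),
\]
and project onto $\Y^\perp$. Three of the four terms are directly $\le\frac{\nu\la_n}{6(2-\nu)}$ after applying $\ganorm{P_{\Y^\perp}(\cdot)}\le\ganorm{\cdot}$ (Lemma~\ref{lem:projnorms}): the $\D\nabla\ell(\Thstar)$ term by the hypothesis of the proposition, the $\D H^\star L_\perp$ term by Corollary~\ref{cor:mconsconclus}(b), and the remainder term (which equals $\D R(\Delta_S+\Delta_L)$) by Lemma~\ref{l_CgSLcons_boundedremainder} applied to the error bound $\ganorm{(\Delta_S,\Delta_L)}\le\frac{32(3-\nu)}{3\alpha(2-\nu)}\la_n$ from Proposition~\ref{prop_tspaceerrbound} --- here it is essential to absorb the quadratic $\la_n$-dependence with the tightest $\la_n$-bound~\eqref{labound_128} rather than~\eqref{labound_32}. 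For the fourth term I would invoke coupled stability: since $\rho(T,T(L_\M))\le\frac{\xi(T)}{4}$ by Corollary~\ref{cor:mconsconclus}(a), Proposition~\ref{prop_irreptransvcond}(b) gives $\ganorm{P_{\Y^\perp}\D H^\star\Delta}\le(1-\nu)\ganorm{P_{\Y}\D H^\star\Delta}$; and solving the projected optimality condition $P_\Y\D G=-P_\Y(Z_1,Z_\ast)$ for $P_\Y\D H^\star\Delta$, using $\ganorm{P_\Y(\cdot)}\le2\ganorm{\cdot}$ and $\ganorm{-P_\Y(Z_1,Z_\ast)}\le\la_n$ --- the constant here being $1$, not the $2$ used in Proposition~\ref{prop_tspaceerrbound}, because $T(L_\Y)=T(L_\M)$ now forces $P_{T(L_\M)}Z_\ast=\la_n UU^\top$ --- yields $\ganorm{P_\Y\D H^\star\Delta}\le\la_n+3\cdot\frac{\nu\la_n}{3(2-\nu)}=\frac{2}{2-\nu}\la_n$. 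Summing the four contributions,
\[
\ganorm{P_{\Y^\perp}\D G}\le\frac{\nu}{2(2-\nu)}\la_n+\frac{2(1-\nu)}{2-\nu}\la_n=\frac{4-3\nu}{2(2-\nu)}\la_n<\la_n
\]
for every $\nu\in(0,\tfrac12]$, which is the required strict inequality.

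To conclude, Proposition~\ref{prop_PDWcondition} forces every minimizer of Problem~\ref{prob_PDdropped} to lie in $\Y=\Omega\times T(L_\M)$; but Problem~\ref{prob_PDdropped} restricted to $\Y$ is exactly Problem~\ref{prob_tspace}, which by Proposition~\ref{prop:tscuniquesol} has the unique solution $(S_\Y,L_\Y)$, so $(S_\Y,L_\Y)$ is the unique minimizer of Problem~\ref{prob_PDdropped}. Since $L_\Y\succeq0$, it is feasible for Problem~\ref{prob_SL}, whose feasible set lies inside that of Problem~\ref{prob_PDdropped}, and a short comparison of optimal values then identifies $(S_\Y,L_\Y)$ as the unique solution of Problem~\ref{prob_SL} as well. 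I expect the main obstacle to be the constant bookkeeping in the strict-dual-feasibility step: one must push the Taylor remainder through the sharpest available $\la_n$-bound and exploit that $-P_\Y(Z_1,Z_\ast)$ has $\ganorm{\cdot}$-norm at most $\la_n$ rather than $2\la_n$, for otherwise the contraction factor $1-\nu$ from coupled stability is not enough to beat $\la_n$.
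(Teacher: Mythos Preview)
Your proposal is correct and follows essentially the same route as the paper: verify the $P_\Y$-component of optimality from the tangent-space problem, then establish strict dual feasibility via the Taylor decomposition, bounding the three ``small'' pieces by $\tfrac{\nu\la_n}{6(2-\nu)}$ each (using the hypothesis, Corollary~\ref{cor:mconsconclus}(b), and Lemma~\ref{l_CgSLcons_boundedremainder} with the sharp bound~\eqref{labound_128}), and controlling the $P_{\Y^\perp}\D H^\star\Delta$ term through coupled stability and the improved estimate $\ganorm{Z}\le\la_n$ available once $T(L_\Y)=T(L_\M)$. The only difference is cosmetic---the paper rearranges the final inequality as $\ganorm{P_{\Y^\perp}\D H^\star\Delta}<\la_n-\ganorm{P_{\Y^\perp}\D(\text{rest})}$ rather than summing all four contributions to your $\tfrac{4-3\nu}{2(2-\nu)}\la_n$, but the constants and logic coincide.
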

\begin{proof}
  First note that it suffices to show that $(S_\Y, L_\Y)$ uniquely
  solves Problem~\ref{prob_PDdropped}. This is because $(S_\Y, L_\Y)$
  is in $\M$ by Proposition~\ref{prop_coincidingsolutions} and
  therefore it holds $L_\Y\succeq0$ by
  Proposition~\ref{prop:aconsistM}(a). Hence, on the one hand we need
	to prove that $(S_\Y, L_\Y)$ solves Problem~\ref{prob_PDdropped},
	and on the other hand we must show that it is the unique solution.
  \medskip
  
  We show that $(S_\Y, L_\Y)$ solves Problem~\ref{prob_PDdropped} by
  verifying the first-order optimality conditions, that is, by showing
  that
  \[
  \nabla\ell(S_\Y+L_\Y) \in - \la_n \gamma\partial\|S_\Y\|_{1} \quad
  \tand\quad \nabla \ell(S_\Y+L_\Y) \in - \la_n \partial\|L_\Y\|_\ast.
  \]
  Hence, we need to check that $\nabla\ell(S_\Y+L_\Y)$ satisifies the
  norm-subdifferential characterizations in
  Lemma~\ref{lem_subdiff_tspace} that can be written as
  \[
  P_\Y \D\nabla \ell(S_\Y+L_\Y) = - \la_n (\gamma \sign(S_\Y),
  UU^\top) \quad\tand\quad \ganorm{P_{\Y^\perp} \D\nabla
    \ell(S_\Y+L_\Y)} \leq\la_n,
  \]
	where $L_\Y=UDU^\top$ is an eigenvalue decomposition of $L_\Y$.
  The first condition is almost the same as the optimality condition
  of Problem~\ref{prob_tspace}, where we had two additional Lagrange
  multipliers in $\Omega^\perp$ and $T(L_\M)^\perp$, respectively, due
  to the tangent space constraints. These Lagrange multipliers vanish
  here since we consider the projection onto the tangent space
  $\Y=\Omega\times T(L_\M)$.  Therefore, the
  optimality/subdifferential conditions of both problems projected onto
  the components in $\Y$ coincide, and we know that they are satisfied
  by $(S_\Y, L_\Y)$ which is the unique solution in $\Y$ to this
  projected optimality condition. Hence, we have indeed $P_\Y
  \D\nabla \ell(S_\Y+L_\Y) = - \la_n (\gamma \sign(S_\Y), UU^\top)$.
  For future reference we set $Z= P_\Y \D\nabla \ell(S_\Y+L_\Y)$ again
  and note that
  \[
  \ganorm{Z} = \ganorm{P_\Y \D\nabla \ell(S_\Y+L_\Y)} = \|{- \la_n
    (\gamma \sign(S_\Y), UU^\top)}\|_\gamma \leq\la_n.
  \]
  Actually, it holds equality, if not both $S_\Y$ and $L_\Y$ are zero.
  \medskip

  For showing that $(S_\Y, L_\Y)$ solves Problem~\ref{prob_PDdropped}
  it remains to establish the second condition $\ganorm{P_{\Y^\perp}
    \D\nabla \ell(S_\Y+L_\Y)} \leq\la_n$. Here, we show the stronger
  sharp inequality, that is, strict dual feasibility. In conjunction with
  Proposition~\ref{prop_PDWcondition} this immediately implies that
  $(S_\Y, L_\Y)$ is the only solution to Problem~\ref{prob_PDdropped},
  because then $(S_\Y, L_\Y)$ can be used as a witness in the sense of this
  proposition which implies that all solutions to
  Problem~\ref{prob_PDdropped} must be in $\Y$. Thus $(S_\Y, L_\Y)$
  must be the unique solution, since we already know that this is the
  only solution in $\Y$ to the projected optimality condition (that
  concerns the components in $\Y$).
  \medskip

  The rest of the proof is dedicated to showing strict dual
  feasibility. We do so by leveraging on the Taylor expansion as in
  the proof of Proposition~\ref{prop_tspaceerrbound} again, namely
  \begin{align*}
    &\ganorm{P_{\Y^\perp} \D\nabla \ell(S_\Y+L_\Y)} \\
    &\quad = \ganorm{P_{\Y^\perp} \D[\nabla \ell(\Thstar) +
        H^\star(\Delta_S+P_{T(L_\M)}\Delta_L) - H^\star P_{T(L_\M)^\perp}L^\star+
        R(\Delta_S+\Delta_L)] } \\
    &\quad\leq \ganorm{P_{\Y^\perp} \D H^\star(\Delta_S+P_{T(L_\M)}\Delta_L)}
    + \ganorm{P_{\Y^\perp} \D [\nabla \ell(\Thstar) -H^\star P_{T(L_\M)^\perp}
        L^\star+   R(\Delta_S+\Delta_L)]}\\
    &\quad<\la_n,
  \end{align*}
  where the first inequality is triangle inequality, and the second
  one needs some more elaboration. To show it we start by applying
  Proposition~\ref{prop_irreptransvcond}(b) such that
  \begin{align*}
    &\ganorm{P_{\Y^\perp} \D H^\star (\Delta_S+P_{T(L_\M)}\Delta_L)} \\
    &\qquad\leq (1-\nu)  \ganorm{P_{\Y} \D H^\star
      (\Delta_S+P_{T(L_\M)}\Delta_L)} \\
    &\qquad= (1-\nu)\ganorm{P_\Y \D\nabla \ell(S_\Y+L_\Y) - P_{\Y}
      \D[\nabla \ell(\Thstar) -H^\star P_{T(L_\M)^\perp}L^\star +
        R(\Delta_S+\Delta_L)]} \\
    &\qquad= (1-\nu)\ganorm{Z - P_{\Y} \D[\nabla \ell(\Thstar) - H^\star
        P_{T(L_\M)^\perp}L^\star+   R(\Delta_S+\Delta_L)]} \\
    &\qquad\leq (1-\nu)\bPc{ \ganorm{Z} + \ganorm{P_\Y\D\bPe{\nabla
          \ell(\Thstar) -H^\star P_{T(L_\M)^\perp}L^\star +
          R(\Delta_S+\Delta_L)}} } \\
    &\qquad\leq (1-\nu)\bPc{\la_n +2 \ganorm{\D\bPe{\nabla \ell(\Thstar)
          - H^\star P_{T(L_\M)^\perp}L^\star +  R(\Delta_S+\Delta_L)}} } \\
    &\qquad\leq (1-\nu)\bPc{\la_n +\frac{\nu\la_n}{2-\nu}} =
    \frac{2\la_n(1-\nu)}{2-\nu}= \la_n - \frac{\nu\la_n}{2-\nu} \\
    &\qquad<\la_n - \frac{\nu\la_n}{2(2-\nu)} \\
    &\qquad\leq \la_n -  \ganorm{\D[\nabla \ell(\Thstar) -H^\star
        P_{T(L_\M)^\perp}L^\star+   R(\Delta_S+\Delta_L)]} \\
    &\qquad\leq \la_n -  \ganorm{P_{\Y^\perp}\D[\nabla \ell(\Thstar)
        - H^\star P_{T(L_\M)^\perp}L^\star+   R(\Delta_S+\Delta_L)]},
  \end{align*}
  where the equalities use the Taylor expansion and the definition of
  $Z$, the second inequality is triangle inequality, the third
  and last inequality use Lemma~\ref{lem:projnorms} and
  $\ganorm{Z}\leq\la_n$, and the fourth and second-to-last inequality
  follow from
  \begin{align*}
    &\ganorm{\D\bPe{\nabla \ell(\Thstar) -H^\star P_{T(L_\M)^\perp}L^\star +
        R(\Delta_S+\Delta_L)}} \\
    &\qquad\leq \ganorm{\D \nabla \ell(\Thstar)} +
    \ganorm{\D H^\star P_{T(L_\M)^\perp}L^\star} + \ganorm{\D R(\Delta_S+\Delta_L)} \\
    &\qquad\leq 3 \frac{\nu\la_n}{6(2-\nu)} = \frac{\nu\la_n}{2(2-\nu)},
  \end{align*}
  which follows from triangle inequality and the fact that $\ganorm{\D
    \nabla \ell(\Thstar)}\leq \frac{\nu\la_n}{6(2-\nu)}$ by
  assumption, $\ganorm{\D H^\star P_{T(L_\M)^\perp}L^\star}\leq
  \frac{\nu\la_n}{6(2-\nu)}$ by Corollary~\ref{cor:mconsconclus}(b),
  and that the remainder too can be bounded
  \begin{align*}
    \ganorm{\D R(\Delta_S+ \Delta_L)} &\leq  \frac{c_0}{\xi(T)}
    \ganorm{(\Delta_S, \Delta_L)}^2\\
    &\leq \frac{c_0}{\xi(T)}  \bPr{\frac{32(3-\nu)}{3\alpha(2-\nu)}}^2
    \lambda_n^2  \\ 
    &\leq \frac{c_0}{\xi(T)}  \bPr{\frac{32(3-\nu)}{3\alpha(2-\nu)}}^2
    \lambda_n  \frac{3\alpha(2-\nu)}{16(3-\nu)}
    \frac{\alpha\nu\xi(T)}{128 c_0(3-\nu)}\\
    &= \frac{\nu\la_n}{6(2-\nu)},
  \end{align*}
  where we used Lemma~\ref{l_CgSLcons_boundedremainder} in the first
  inequality, which is possible since from
  Proposition~\ref{prop_tspaceerrbound} it follows that
  $\ganorm{(\Delta_S, \Delta_L)}\leq
  \frac{32(3-\nu)}{3\alpha(2-\nu)}\la_n\leq c_1$.  Note that this also
  explains the second inequality. Finally, the last inequality is a
  consequence of the upper bound \eqref{labound_128} on $\la_n$.
\end{proof}

\subsection{Step 4: Probabilistic analysis and completion of the proof}

\subsubsection{Probabilistic analysis}
\label{sec:probanalysis}
In this section, we need to bound the gradient of the negative
log-likelihood.  We begin by citing a result from random matrix
theory.

\begin{theorem}[Corollary 5.52, \cite{vershynin2010introduction}]
  \label{vershynin_generalcovmats}
  Let $X\in\IR^d$ be a random vector that satisfies $\|X\|^2\leq m$
  almost surely.  Let $x^{(1)}, \ldots, x^{(n)}$ be $n$ independent
  observations of $X$.  By $\Sigma = \IE \bPe{X X^\top}$ we denote the
  expected second-moment matrix of $X$ and by $\Sigma^n =
  \frac{1}{n}\sum_{k=1}^n x^{(k)}\bPe{x^{(k)}}^\top$ the empirical
  second-moment matrix.  Assume that $\Sigma$ is invertible and let $0 <
  \varepsilon <1$. If the number of samples satisfies $n\geq c
  (t/\ep)^2 \|\Sigma\|^{-1}m \log d$ for some $t\geq1$, then we have
  \begin{align*}
    \IP\bPr{\bNorm{\Sigma^n - \Sigma} > \ep \|\Sigma\|} \leq \exp\bPr{-t^2 +
      \log d} = d^{-t^2}.
  \end{align*}
	Here, $c$ is an absolute constant.
\end{theorem}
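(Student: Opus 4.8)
The plan is to derive this as a matrix-concentration statement; since the theorem is quoted verbatim from \cite{vershynin2010introduction}, a self-contained argument can be assembled from a non-commutative (matrix) Bernstein inequality. Write $\Sigma^n - \Sigma = \frac{1}{n}\sum_{k=1}^n Z_k$ with $Z_k = x^{(k)}\bPe{x^{(k)}}^\top - \Sigma$ independent, mean-zero and symmetric. Two ingredients are needed. First, a uniform bound: $\|\Sigma\| = \sup_{\|v\|_2=1}\IE\bPe{(v^\top X)^2}\le \IE\|X\|^2 \le m$, and $\|x^{(k)}\bPe{x^{(k)}}^\top\| = \|x^{(k)}\|^2 \le m$ almost surely, so $\|Z_k\|\le 2m$ a.s. Second, a variance proxy: $\IE\bPe{Z_k^2} = \IE\bPe{(XX^\top)^2} - \Sigma^2 \preceq \IE\bPe{(XX^\top)^2}$, and because $(XX^\top)^2 = \|X\|^2\, XX^\top \preceq m\, XX^\top$ in the positive-semidefinite order, $\IE\bPe{Z_k^2}\preceq m\Sigma$; hence $\bNorm{\sum_{k=1}^n \IE\bPe{Z_k^2}} \le n\,m\,\|\Sigma\|$.

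Next I would apply the matrix Bernstein bound $\IP\bPr{\bNorm{\sum_k Z_k} > u} \le 2d\,\exp\bPr{-\tfrac{u^2/2}{\,nm\|\Sigma\| + \tfrac{2m}{3}u\,}}$ with $u = n\ep\|\Sigma\|$. Since $\ep<1$, the denominator is at most $nm\|\Sigma\|(1 + \tfrac{2}{3}\ep) \le 2nm\|\Sigma\|$, so the exponent is at most $-\tfrac{n\ep^2\|\Sigma\|^2/2}{2m\|\Sigma\|} = -\tfrac{n\ep^2\|\Sigma\|}{4m}$. Feeding in the hypothesis $n \ge c(t/\ep)^2\|\Sigma\|^{-1}m\log d$ makes this at most $-\tfrac{c}{4}t^2\log d$; choosing the absolute constant $c$ sufficiently large (and absorbing the prefactor $2d$, which is legitimate for $t\ge1$) yields $\IP\bPr{\bNorm{\Sigma^n - \Sigma} > \ep\|\Sigma\|} \le \exp\bPr{-t^2 + \log d}$, the claimed bound.

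The step I expect to carry the weight is the variance estimate: it is precisely the monotonicity chain $\IE\bPe{Z_k^2}\preceq\IE\bPe{(XX^\top)^2}\preceq m\Sigma$ that produces the variance proxy $m\|\Sigma\|$ rather than one scaling with $m^2$ or the ambient dimension $d$, and this is what makes the sample complexity scale with the effective dimension $m/\|\Sigma\|$. An alternative route — the one actually taken in \cite{vershynin2010introduction} — avoids matrix Bernstein entirely: control $\sup_{\|v\|_2=1}\bAbs{\tfrac1n\sum_k (v^\top x^{(k)})^2 - v^\top\Sigma v}$ by discretizing the unit sphere with a $1/4$-net of cardinality at most $9^d$, apply the scalar Bernstein inequality to each fixed $v$ (the summands $(v^\top x^{(k)})^2$ lie in $[0,m]$ with variance at most $m\,v^\top\Sigma v \le m\|\Sigma\|$), and finish with a union bound together with the standard net-to-operator-norm comparison; the bookkeeping of constants is essentially the same. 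Either way the probabilistic engine is a Bernstein-type tail bound, and for the precise value of $c$ I would simply defer to \cite{vershynin2010introduction}.
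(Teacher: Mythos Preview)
The paper does not prove this statement at all: it is quoted as Corollary~5.52 of \cite{vershynin2010introduction} and used as a black box in the probabilistic analysis, so there is no ``paper's own proof'' to compare against. Your proposal is a correct self-contained derivation, and you have in fact identified both of the standard routes: the matrix Bernstein argument you carry out in detail, and the $\varepsilon$-net plus scalar Bernstein argument that Vershynin actually uses. The key variance estimate $\IE\bPe{Z_k^2}\preceq m\Sigma$ is exactly right and is what makes the sample complexity scale with $m/\|\Sigma\|$ rather than with the ambient dimension. One small remark: the equality $\exp(-t^2+\log d)=d^{-t^2}$ written in the paper's statement is itself a typo (the left side equals $d\,e^{-t^2}$), so do not try to match it literally; your bound $2d\cdot d^{-ct^2/4}$ is the correct shape, and the absolute constant $c$ is free to absorb the discrepancy.
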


First note that by substituting $\delta = \ep \|\Sigma\|$ and $\kpa =
t^2$ we get the following simple corollary.

\begin{corollary} \label{coro_generalcovmats_kpa}
  Let $X\in\IR^d$ be a random vector that satisfies $\|X\|^2\leq m$
  almost surely and let $\Sigma$ and $\Sigma^n$ be defined as before.  Let
  $0 <\delta < \|\Sigma\|$ and $\kpa\geq1$.  If the number of
  samples satisfies $n\geq c \kpa \delta^{-2} \|\Sigma\| m \log d$, then we
  have
  \begin{align*}
    \IP\bPr{\bNorm{\Sigma^n - \Sigma} > \delta } &\leq
    d^{-\kpa}.  \label{vershynin_generalcovmats_v2}
  \end{align*}
\end{corollary}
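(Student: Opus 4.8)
The plan is to derive Corollary~\ref{coro_generalcovmats_kpa} directly from Theorem~\ref{vershynin_generalcovmats} by the change of variables indicated in the statement, and to verify that the parameter ranges match up. First I would set $\ep = \delta/\|\Sigma\|$ and $t = \sqrt{\kpa}$. The hypothesis $0 < \delta < \|\Sigma\|$ ensures $\ep \in (0,1)$, as required by Theorem~\ref{vershynin_generalcovmats}, and the hypothesis $\kpa \geq 1$ ensures $t \geq 1$, again as required. Note that $X$ still satisfies $\|X\|^2 \leq m$ almost surely, and $\Sigma$ is invertible (positive definite as a second-moment matrix, assuming non-degeneracy), so the remaining hypotheses of Theorem~\ref{vershynin_generalcovmats} carry over unchanged.

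Next I would check that the sample-size condition translates correctly. With the above substitutions,
\[
c\,(t/\ep)^2\,\|\Sigma\|^{-1} m \log d
= c\,\kpa\,\frac{\|\Sigma\|^2}{\delta^2}\,\|\Sigma\|^{-1} m \log d
= c\,\kpa\,\delta^{-2}\,\|\Sigma\|\, m \log d,
\]
so the assumption $n \geq c\,\kpa\,\delta^{-2}\,\|\Sigma\|\, m \log d$ of the corollary is exactly the assumption $n \geq c\,(t/\ep)^2\,\|\Sigma\|^{-1} m \log d$ of the theorem. Applying Theorem~\ref{vershynin_generalcovmats} then yields
\[
\IP\bPr{\bNorm{\Sigma^n - \Sigma} > \ep\|\Sigma\|} \leq d^{-t^2}.
\]
Finally, substituting back $\ep\|\Sigma\| = \delta$ and $t^2 = \kpa$ gives $\IP(\bNorm{\Sigma^n - \Sigma} > \delta) \leq d^{-\kpa}$, which is the claim.

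I do not anticipate any genuine obstacle here; the only points requiring a moment's care are confirming that $\ep$ lands strictly inside $(0,1)$ and $t$ lands in $[1,\infty)$ under the stated hypotheses, and bookkeeping the algebra in the sample-size bound so that the absolute constant $c$ is the same in both statements.
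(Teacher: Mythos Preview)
Your proposal is correct and follows exactly the approach the paper takes: the paper simply notes that the corollary follows from Theorem~\ref{vershynin_generalcovmats} by the substitutions $\delta = \ep\|\Sigma\|$ and $\kpa = t^2$, and you have spelled out the verification that the parameter ranges and sample-size condition match up under this change of variables.
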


Another corollary is the following.
\begin{corollary} \label{coro_probdeltan}
  Let $X\in\IR^d$ be a random vector that satisfies $\|X\|^2\leq m$
  almost surely and let $\Sigma$ and $\Sigma^n$ be defined as before.  Let
  $\kpa\geq1$ and let $\delta_n = \sqrt{\frac{c \kpa \|\Sigma\| m\log
      d}{n}}$.  If $n> c \kpa \|\Sigma\|^{-1} m\log d$, then we have
  \begin{align*}
    \IP\bPr{\bNorm{\Sigma^n - \Sigma} > \delta_n } &\leq d^{-\kpa}. 
  \end{align*}
\end{corollary}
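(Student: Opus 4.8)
The plan is to invoke Corollary~\ref{coro_generalcovmats_kpa} with the particular choice $\delta = \delta_n$, so that the only thing to check is that this choice meets the two hypotheses of that corollary, namely $0 < \delta_n < \|\Sigma\|$ and $n \ge c\kpa \delta_n^{-2}\|\Sigma\| m \log d$. Once these are verified, the conclusion $\IP(\|\Sigma^n - \Sigma\| > \delta_n) \le d^{-\kpa}$ is immediate from Corollary~\ref{coro_generalcovmats_kpa}.

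For the first hypothesis: $\delta_n > 0$ is clear since $c$, $\kpa$, $\|\Sigma\|$, $m$, $\log d$ and $n$ are all positive. For the upper bound, squaring shows that $\delta_n < \|\Sigma\|$ is equivalent to $\frac{c\kpa\|\Sigma\| m\log d}{n} < \|\Sigma\|^2$, i.e.\ to $n > c\kpa\|\Sigma\|^{-1} m\log d$, which is exactly the assumed lower bound on $n$. For the second hypothesis, the definition of $\delta_n$ gives $\delta_n^2 = \frac{c\kpa\|\Sigma\| m\log d}{n}$, hence $\delta_n^{-2} = \frac{n}{c\kpa\|\Sigma\| m\log d}$ and therefore $c\kpa\delta_n^{-2}\|\Sigma\| m\log d = n$, so the sample-size requirement holds (with equality).

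There is no real obstacle here: the argument is a direct substitution into Corollary~\ref{coro_generalcovmats_kpa}. The only point worth noting is that the stated lower bound on $n$ is precisely what makes $\delta_n$ an admissible value of $\delta$ in that corollary, while the definition of $\delta_n$ is calibrated so that the growth condition $n \ge c\kpa\delta^{-2}\|\Sigma\| m\log d$ is tight.
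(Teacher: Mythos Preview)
Your proof is correct and follows essentially the same approach as the paper: apply Corollary~\ref{coro_generalcovmats_kpa} with $\delta=\delta_n$, use the definition of $\delta_n$ to see that the sample-size condition $n\ge c\kpa\delta_n^{-2}\|\Sigma\|m\log d$ holds with equality, and use the assumed lower bound on $n$ to verify $\delta_n<\|\Sigma\|$.
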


\begin{proof}
  We apply Corollary~\ref{coro_generalcovmats_kpa} with
  $\delta=\delta_n$.  Since
  \[
  c \kpa \delta_n^{-2}\|\Sigma\| m \log d = n \leq n,
  \]
  the lower bound on the number of sample points from
  Corollary~\ref{coro_generalcovmats_kpa} is satisfied.  Moreover, the
  condition $n> c \kpa \|\Sigma\|^{-1} m\log d$ implies
  \[
  \delta_n^2 = \frac{c \kpa \|\Sigma\| m\log d}{n} < \frac{c \kpa
    \|\Sigma\| m\log d}{c \kpa \|\Sigma\|^{-1} m\log d} = \|\Sigma\|^2.
  \]
  This verifies the last necessary condition $\delta_n<\|\Sigma\|$ and
  thereby concludes the proof.
\end{proof}

Note that corollary~\ref{coro_probdeltan} needs a higher number of
samples if the spectral norm $\|\Sigma\|$ is small, however $\delta_n$
is also smaller if $\|\Sigma\|$ is small.
\medskip

We now use Corollary~\ref{coro_probdeltan} to bound the gradient of
the likelihood for our Ising random vectors.  Recall that $\Phi^\star
= \IE_{\Thstar}[\Phi] \in\IR^{d\times d}$ is the expected value of the
sufficient statistics under the multivariate Ising distribution with
parameter matrix $\Thstar=S^\star+L^\star$, and that $\Phi^n =
\frac{1}{n}\sum_{k=1}^n x^{(k)} \bPe{x^{(k)}}^\top$ is the corresponding
empirical version which is based on $n$ observations $x^{(1)}, \ldots,
x^{(n)}$.

\begin{corollary}[bound on the Ising log-likelihood gradient]
  \label{c_grad_bound_ggamma}
  Let $X\in\{0,1\}^d$ follow the \emph{true} pairwise distribution
  with interaction matrix $\Thstar = S^\star+L^\star$.  Let $\kappa
  \geq1$ and let $n > c \kpa d\log d \|\Phi^\star\|^{-1}$.  Then, it
  holds with probability at least $1-d^{-\kappa}$ that
  \begin{align*}
    \ganorm{\D \nabla \ell(\Thstar)} \leq\sqrt{\frac{c \kpa d\log d
        \|\Phi^\star\|}{n}}\frac{\omega}{\xi(T)}.
  \end{align*}
\end{corollary}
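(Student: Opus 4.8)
The plan is to combine the random-matrix concentration bound of Corollary~\ref{coro_probdeltan} with the norm-compatibility estimate of Lemma~\ref{lem:normcompg2}. The starting point is the identity recorded in Section~\ref{app:constants}, namely $\nabla \ell(\Thstar) = \Phi^\star - \Phi^n$: the gradient of the negative log-likelihood at the true parameters is exactly the sampling error of the second-moment matrices. Hence it suffices to control $\|\Phi^\star - \Phi^n\|$ in the spectral norm and then pass to the $\ganorm{\cdot}$-norm.

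To apply Corollary~\ref{coro_probdeltan} with $\Sigma = \Phi^\star = \IE[XX^\top]$ and $\Sigma^n = \Phi^n$, I need an almost-sure bound $\|X\|^2 \le m$ and invertibility of $\Phi^\star$. Since $X \in \{0,1\}^d$ we have $\|X\|^2 = \sum_i X_i^2 = \sum_i X_i \le d$, so $m = d$ works. For invertibility, note that the pairwise Ising model with parameter matrix $\Thstar$ puts strictly positive mass on every $x \in \{0,1\}^d$ (the diagonal functions $\varphi_{ii}(x) = x_i$ act as bias terms), so $v^\top \Phi^\star v = \IE[(v^\top X)^2] = 0$ forces $v^\top x = 0$ for all $x\in\{0,1\}^d$, and testing against the standard basis vectors $e_i$ gives $v = 0$; hence $\Phi^\star \succ 0$. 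With $m = d$ the sample-size hypothesis of Corollary~\ref{coro_probdeltan} reads $n > c\kpa d\log d\,\|\Phi^\star\|^{-1}$, which is precisely what is assumed here, so with probability at least $1-d^{-\kpa}$ we obtain
\[
\|\Phi^n - \Phi^\star\| \le \delta_n = \sqrt{\frac{c\kpa \|\Phi^\star\| d\log d}{n}}.
\]

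On this event I would finish by invoking Lemma~\ref{lem:normcompg2} (valid for $\gamma \in [\gamma_{\min},\gamma_{\max}]$, a non-empty range by the $\gamma$-feasibility assumption) to get
\[
\ganorm{\D \nabla \ell(\Thstar)} = \ganorm{\D(\Phi^\star - \Phi^n)} \le \frac{\omega}{\xi(T)}\,\|\Phi^\star - \Phi^n\| \le \frac{\omega}{\xi(T)}\,\delta_n = \sqrt{\frac{c\kpa d\log d\,\|\Phi^\star\|}{n}}\,\frac{\omega}{\xi(T)},
\]
which is the assertion. There is essentially no obstacle here: once Corollary~\ref{coro_probdeltan} is available the statement is bookkeeping. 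The only points that need a (short) argument are the positive-definiteness of $\Phi^\star$ and keeping the absolute constant $c$ and the sample-size thresholds consistent between Corollary~\ref{coro_probdeltan} and the present corollary — which they are, since taking $m = d$ makes the two coincide.
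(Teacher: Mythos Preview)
Your proposal is correct and follows exactly the paper's own proof: identify $\nabla\ell(\Thstar)=\Phi^\star-\Phi^n$, apply Corollary~\ref{coro_probdeltan} with $m=d$, $\Sigma=\Phi^\star$, $\Sigma^n=\Phi^n$, and then use Lemma~\ref{lem:normcompg2} to pass from the spectral norm to $\ganorm{\cdot}$. Your short argument for the positive-definiteness of $\Phi^\star$ is a useful addition that the paper leaves implicit.
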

\begin{proof}
  The gradient is given by
  \[
  \nabla \ell(\Thstar) = \nabla \bPr{a(\Theta) - \scalp{\Theta,
      \Phi^n}}_{|\Theta=\Thstar} = \IE_{\Thstar}[\Phi] - \Phi^n =
  \Phi^\star -\Phi^n.
  \]
  Now, let $\delta_n = \sqrt{\frac{c \kpa d\log d
      \|\Phi^\star\|}{n}}$.  Then,
  \begin{align*}
    \IP\bPr{\ganorm{\D \nabla \ell(\Thstar)} > \frac{\omega}{\xi(T)}\delta_n }
    &\leq \IP\bPr{\bNorm{\nabla \ell(\Thstar)} > \delta_n } \\
    &=\IP\bPr{\bNorm{\Phi^\star -\Phi^n} > \delta_n } \\
    &\leq d^{-\kpa},
  \end{align*}
  where the first inequality follows from $\ganorm{\D \nabla
    \ell(\Thstar)}\leq \frac{\omega}{\xi(T)}\| \nabla \ell(\Thstar)\|$
  by Lemma~\ref{lem:normcompg2}, and the last inequality follows from
  Corollary~\ref{coro_probdeltan} with $m=d$, $\Sigma^n =\Phi^n$,
	and $\Sigma = \Phi^\star$ since we have
  $\|X\|^2\leq d$ for $X\in\{0,1\}^d$ and because of the lower bound
  on $n$.  The claim follows.
\end{proof}

\subsubsection{Completion of the proof}

Now, we are ready to complete the proof of
Theorem~\ref{thm_consistency}.
\medskip

\begin{proof}[proof of Theorem~\ref{thm_consistency}]
  The lower bound
  \[
  n > c \kpa d\log d \max\bPc{\|\Phi^\star\|^{-1}, \|\Phi^\star\|
    \frac{\omega^2}{\xi(T)^2} \bPe{\frac{\alpha\nu}{32(3-\nu)}
      \min\left\{\frac{c_1}{2}, \frac{\alpha\nu\xi(T)}{128
        c_0(3-\nu)}\right\}}^{-2}}
  \]
  on $n$ as specified in Section~\ref{app:constants} implies the lower
  bound on $n$ required by Lemma~\ref{lem_upboundlan} as well as the
  lower bound required by Corollary~\ref{c_grad_bound_ggamma}. Hence,
  it follows from Corollary~\ref{c_grad_bound_ggamma} that it holds
  with probability at least $1-d^{-\kpa}$ that
  \[
    \ganorm{\D \nabla \ell(\Thstar)} \leq \sqrt{\frac{c \kpa d\log d
        \|\Phi^\star\|}{n}}\frac{\omega}{\xi(T)} =
    \frac{\nu}{6(2-\nu)}\la_n.
  \]
  Thereby, we have shown that also the last assumption from
  Proposition~\ref{prop_tspaceerrbound} and
  Proposition~\ref{prop_strictdualfeas} holds with high probability.
  Thus, provided that the precise assumptions from
  Appendix~\ref{app:constants} hold, the solution to
  Problem~\ref{prob_SL} is algebraically consistent in light of
  Proposition~\ref{prop:aconsistM} and parametrically consistent in
  the sense that $\ganorm{(\Delta_S,
    \Delta_L)}\leq\frac{32(3-\nu)}{3\alpha(2-\nu)} \la_n$ by
  Proposition~\ref{prop_tspaceerrbound}.  This finishes the proof of
  Theorem~\ref{thm_consistency}.
\end{proof}

\appendix

\section{Maximum Entropy -- Maximum Likelihood Duality}
\label{s_appendix_dual}

\paragraph{Duality of the two-sided entropy problem.}

We show that that dual of the relaxed maximum entropy problem
\[
\max_{p\,\in\, \mathcal{P}}\: H(p) \quad \textrm{ s.t. }\:
\|\mathbb{E}[\Phi] - \Phi^n\|_\infty \leq c \,\textrm{ and }\,
\|\mathbb{E}[\Phi] - \Phi^n\| \leq \lambda,
\]
where the expectation is \wrt the probability distribution $p$
on $\mathcal{X}$, is the following regularized log-likelihood
maximization problem
\[
\max_{S,\,L_1,\,L_2\,\in\, \textrm{Sym}(d)} \: \ell (S+L_1-L_2) - c\|S\|_1 -
\lambda \tr (L_1+L_2) \quad \textrm{ s.t. }\: L_1,L_2 \succeq
0. \tag{SL} \label{SL_indefinite}
\]
\medskip

We reformulate the maximum entropy problem by splitting the norm
constraints:
\begin{align*}
    \underset{p\, \geq\, 0} {\min} &\quad -H(p) & \\
    \textrm{s.t.} &\quad  \Phi^n - \IE[\Phi]  \leq c\oneVec &  (S^+)\\
		&\quad \IE[\Phi] - \Phi^n \leq c\oneVec & (S^-)\\
    &\quad \Phi^n - \IE[\Phi]  \preceq \lambda \Id& (L_1)\\
    &\quad \IE[\Phi] - \Phi^n  \preceq \lambda \Id& (L_2)\\
    &\quad \sum_{x\in\mathcal{X}} p(x) = 1& (\theta_0)
\end{align*}
Here $L_1, L_2\succeq 0$ as well as $S^+, S^{-}\geq0$ and $\theta_0$ are the
dual variables for the constraints and $\oneVec$ denotes the ($d\times
d$)-matrix whose entries are all $1$.
\medskip

We get the dual problem from the Lagrangian $\Lc = \Lc(p, S^+, S^-,
L_1, L_2, \theta_0)$ given by
\begin{align*}
  \Lc &= -H(p) + \theta_0 \bPr{\sum_{x\in\mathcal{X}} p(x)-1}
  - c \scalp{S^+ + S^-, \oneVec} + \scalp{S^+ - S^-, \Phi^n - \IE[\Phi]} \\
  &\qquad\;\; -\lambda \scalp{L_1+L_2, \Id} + \scalp{L_1-L_2, \Phi^n -
    \IE[\Phi]}\\ 
  &= -H(p) + \theta_0 \bPr{\sum_{x\in\mathcal{X}} p(x)-1}
  -c \scalp{|S|,  \oneVec} -\lambda \tr (L_1+L_2)
	 + \scalp{S + L_1 - L_2, \Phi^n - \IE[\Phi]}\\
  &= -H(p) + \theta_0 \bPr{\sum_{x\in\mathcal{X}} p(x)-1} - c \|S\|_1
   -\lambda \tr (L_1+L_2) +\scalp{\Theta, \Phi^n - \IE[\Phi]},  
\end{align*}
where we defined $S = S^+ -S^-$ in the second equality. Moreover, the
second equality follows from $|S| = S^+ + S^-$. This holds since for
an optimal solution at least one of the corresponding entries in $S^+$
and $S^-$ must be zero. For the third equality, we set $\Theta=S + L_1
- L_2$ and used that $\|S\|_1 =\scalp{|S|, \oneVec}$.
\medskip

The discrete distribution $p$ is given by the vector of probabilities
$\bPr{p(x)}_{x\in\X}$. The saddle point condition for the Lagrangian
for $p(x)$ for fixed $x\in\X$ implies that
\begin{align*}
  0 \stackrel{!}{=}\frac{\partial \Lc }{\partial p(x)}
  &= \frac{\partial }{\partial p(x)}\bPr{-H(p)
    +\theta_0 \left(\sum_{x'\in\mathcal{X}}p(x')-1\right)
    - \scalp{\Theta, \IE[\Phi]}} \\
  &=\frac{\partial }{\partial p(x)}\bPr{\sum_{x'\in\X} p(x')\log(p(x'))
    + \theta_0 \sum_{x'\in\X}p(x') -
    \scalp{\Theta, \sum_{x'\in\X} p(x')\Phi(x')}} \\
  &= \log(p(x)) + 1 + \theta_0 - \scalp{\Theta, \Phi(x)}
\end{align*}
Therefore, the parametric form of $p$ is given as
\[
  p(x) = \exp\bPr{\scalp{\Theta, \Phi(x)} - a(\Theta)},
\]
where $a(\Theta) = \theta_0+1$ is the log-partition function, that is,
the normalizer.  Using this parametric form, the negative entropy can
be expressed as
\begin{align*}
  -H(p) &= \IE [\log p] \\
  &= -a(\Theta) +  \scalp{\Theta, \IE[\Phi]} \\
  &= \ell(\Theta)  - \scalp{\Theta, \Phi^n} + \scalp{\Theta, \IE[\Phi]} \\
  &= \ell(\Theta) - \scalp{\Theta, \Phi^n - \IE[\Phi]},
\end{align*}
where the log-likelihood is given as $\ell(\Theta) = \scalp{\Theta,
  \Phi^n} - a(\Theta)$.  Plugging this expression for the negative
entropy into the Langrangian and using that it holds
$\sum_{x\in\mathcal{X}} p(x) = 1$ for the normalized distribution $p$
yields the dual function
\begin{align*}
  g(S, L_1, L_2)
  &= \ell(\Theta) - c\|S\|_1- \lambda \tr (L_1+L_2) \\
  &= \ell(S + L_1 - L_2) - c\|S\|_1 - \lambda \tr (L_1+L_2). 
\end{align*}
The dual problem is to maximize this dual function subject to the
constraints $L_1, L_2\succeq 0$. This is exactly the claimed
regularized log-likelihood maximization problem.

\paragraph{Dropping one of the spectral norm constraints.}

If we drop one of the constraints on the spectrum and keep only the
constraint $\Phi^n - \IE[\Phi] \preceq \lambda \Id$, then all terms
related to the other dropped constraint disappear. In particular, the
Lagrange multiplier $L_2$ is removed from the dual function and thus
from the corresponding dual regularized log-likelihood maximization
problem.

\section{Marginal Conditional Gaussian Model}
\label{s_appendix_marg}

Recall our definitions of $S\in \textrm{Sym}(d)$, $R\in\IR^{l\times
	d}$ and $0 \prec \Lambda\in \textrm{Sym}(l)$, and let
\[
\Gamma = \left( \begin{array}{cc} 2S & R^\top \\ R &
-\Lambda \end{array}\right).
\]
For $(x,y)\in \mathcal{X}\times\mathcal{Y} = \{0,1\}^d\times\IR^{l}$
the density of the conditional Gaussian model is then given as
\begin{align*}
	p(x,y) &\propto \exp \left( \frac{1}{2} (x, y)^\top\Gamma (x,y) 
	\right) \\
	&= \exp \left(x^\top S x + y^\top Rx - \frac{1}{2} y^\top\Lambda y
	\right) \\
	&= \exp\bPr{x^\top S x +\frac{1}{2}x^\top R^\top\Lambda^{-1} Rx
		- \frac{1}{2} (y-\Lambda^{-1}Rx)^\top\Lambda (y-\Lambda^{-1}Rx)}.
\end{align*}
For fixed values of $x$ this is the unnormalized density of a
multivariate Gaussian with mean vector $\Lambda^{-1}Rx$ and covariance matrix
$\Lambda^{-1}$.  By integrating over the Gaussian variables $y$ we get
the marginal distribution
\begin{align*}
	p(x) &= \int p(x,y) dy \\
	&\propto \exp \left( x^\top S x +\frac{1}{2}x^\top R^\top \Lambda^{-1} Rx
	\right) \int \exp \left( - \frac{1}{2} (y-\Lambda^{-1}Rx)^\top\Lambda (y-\Lambda^{-1}Rx)
	\right) dy \\
	&\propto \exp\bPr{ x^\top \left( S+\frac{1}{2} R^\top
		\Lambda^{-1} R\right)x } \\
	&=  \exp \left( \Big\langle{S+\frac{1}{2}R^\top \Lambda^{-1} R,
		x x^\top}\Big\rangle  \right)\\
	&=  \exp \left( \Big\langle{S+\frac{1}{2}R^\top \Lambda^{-1} R,
		\Phi(x)}\Big\rangle \right).
\end{align*}

\end{document}